\DeclareMathAlphabet\mathbb{U}{msb}{m}{n}
\def\Rset{\mathbb{R}}
\let\P\undefined
\DeclareMathOperator*{\P}{\mathbb{P}}
\DeclareMathOperator*{\E}{\mathbb E}
\DeclareMathOperator*{\argmax}{argmax}
\DeclareMathOperator*{\argmin}{argmin}
\DeclarePairedDelimiter{\abs}{\lvert}{\rvert} 
\DeclarePairedDelimiter{\bracket}{[}{]}
\DeclarePairedDelimiter{\curl}{\{}{\}}
\DeclarePairedDelimiter{\paren}{(}{)}
\newcommand{\sA}{{\mathscr A}}
\newcommand{\sC}{{\mathscr C}}
\newcommand{\sD}{{\mathscr D}}
\newcommand{\sE}{{\mathscr E}}
\newcommand{\sF}{{\mathscr F}}
\newcommand{\sH}{{\mathscr H}}
\newcommand{\sI}{{\mathscr I}}
\newcommand{\sM}{{\mathscr M}}
\newcommand{\sO}{{\mathscr O}}
\newcommand{\sR}{{\mathscr R}}
\newcommand{\sT}{{\mathscr T}}
\newcommand{\sU}{{\mathscr U}}
\newcommand{\sX}{{\mathscr X}}
\newcommand{\sY}{{\mathscr Y}}
\newcommand{\sfL}{{\mathsf L}}
\newcommand{\sfH}{{\mathsf H}}
\newcommand{\Rad}{\mathfrak R}
\newcommand{\h}{\widehat}
\newcommand{\ov}{\overline}
\newcommand{\wt}{\widetilde}
\newcommand{\e}{\epsilon}
\newcommand{\ignore}[1]{}
\DeclareMathOperator{\sign}{sign}
\def\Nset{\mathbb{N}}
\newcommand{\hh}{{\sf h}}
\title[Universal Growth Rate]{A Universal Growth Rate for Learning with Smooth Surrogate Losses}
\begin{document}

\maketitle

\begin{abstract}

This paper presents a comprehensive analysis of the growth rate of
$\sH$-consistency bounds (and excess error bounds) for various
surrogate losses used in classification. We prove a square-root growth
rate near zero for smooth margin-based surrogate losses in binary
classification, providing both upper and lower bounds under mild
assumptions. This result also translates to excess error bounds.  Our
lower bound requires weaker conditions than those in previous work for
excess error bounds, and our upper bound is entirely novel.  Moreover,
we extend this analysis to multi-class classification with a series of
novel results, demonstrating a universal square-root growth rate for
smooth \emph{comp-sum} and \emph{constrained losses}, covering common
choices for training neural networks in multi-class classification.
Given this universal rate, we turn to the question of choosing among
different surrogate losses.  We first examine how $\sH$-consistency
bounds vary across surrogates based on the number of classes. Next,
ignoring constants and focusing on behavior near zero, we identify
\emph{minimizability gaps} as the key differentiating factor in these
bounds. Thus, we thoroughly analyze these gaps, to guide surrogate
loss selection, covering: comparisons across different comp-sum
losses, conditions where gaps become zero, and general conditions
leading to small gaps.  Additionally, we demonstrate the key role of
minimizability gaps in comparing excess error bounds and
$\sH$-consistency bounds.
\end{abstract}

\begin{keywords}%
  Surrogate loss functions, Bayes-Consistency, $\sH$-consistency
  bounds, excess error bounds, estimation error bounds, generalization
  bounds.
\end{keywords}

\section{Introduction}

Learning algorithms frequently optimize surrogate loss functions like
the logistic loss, in lieu of the task's true objective, commonly the
zero-one loss.  This is necessary when the original loss function is
computationally intractable to optimize or lacks essential
mathematical properties such as differentiability. But, what
guarantees can we rely on when minimizing a surrogate loss? This is a
fundamental question with significant implications for learning.

The related property of Bayes-consistency of surrogate losses has been
extensively studied in the context of binary classification.
\citet{Zhang2003}, \citet{bartlett2006convexity} and
\citet{steinwart2007compare} established Bayes-consistency for various
convex loss functions, including margin-based surrogates. They also
introduced excess error bounds (or surrogate regret bounds) for
margin-based surrogates.  \citet{reid2009surrogate} extended these
results to proper losses in binary classification.

The Bayes-consistency of several surrogate loss function families in
the context of multi-class classification has also been studied
by \citet{zhang2004statistical} and
\citet{tewari2007consistency}. \cite{zhang2004statistical} established
a series of results for various multi-class classification
formulations, including negative results for multi-class hinge loss
functions \citep{crammer2001algorithmic}, as well as positive results
for the sum exponential loss
\citep{WestonWatkins1999,AwasthiMaoMohriZhong2022multi}, the
(multinomial) logistic loss
\citep{Verhulst1838,Verhulst1845,Berkson1944,Berkson1951}, and the
constrained losses \citep{lee2004multicategory}. Later,
\citet{tewari2007consistency} adopted a different geometric method to
analyze Bayes-consistency, yielding similar results for these loss
function families.  \citet{steinwart2007compare} developed general
tools to characterize Bayes consistency for both binary and
multi-class classification. Additionally, excess error bounds have
been derived by \cite{AvilaPiresSzepesvariGhavamzadeh2013} for a
family of constrained losses and by \citet{DuchieKhosraviRuan2018} for
loss functions related to generalized entropies.

For a surrogate loss $\ell$, an excess error bound holds for any
predictor $h$ and has the form $\sE_{\ell_{0-1}}\! (h) -
\sE_{\ell_{0-1}}^*\!\! \leq \Psi(\sE_{\ell}(h) - \sE_{\ell}^*)$, where
$\sE_{\ell_{0-1}}\!(h)$ and $\sE_{\ell}(h)$ represent the expected
losses of $h$ for the zero-one loss and surrogate loss respectively,
$\sE_{\ell_{0-1}}^*\!$ and $\sE_{\ell}^*$ the Bayes errors for the
zero-one and surrogate loss respectively, and $\Psi$ a non-decreasing
function. The \emph{growth rate} of excess error bounds, that is the
behavior of function $\Psi$ near zero, has gained attention in recent
research \citep{mahdavi2014binary,zhang2021rates,
frongillo2021surrogate,bao2023proper}.  \cite{mahdavi2014binary}
examined the growth rate for \emph{smoothed hinge losses} in binary
classification, demonstrating that smoother losses result in worse
growth rates. The optimal rate is achieved with the standard hinge
loss, which exhibits linear growth.  \cite{zhang2021rates} tied the
growth rate of excess error bounds in binary classification to two
properties of the surrogate loss function: consistency intensity and
conductivity.  These metrics enable comparisons of growth rates across
different surrogates. This prompts a natural question: can we
establish rigorous lower and upper bounds for excess error growth
rates under specific regularity conditions?

\citet{frongillo2021surrogate} pioneered research on this question in
binary classification settings.  They established a critical
square-root lower bound for excess error bounds when a surrogate loss
is locally strongly convex and has a locally Lipschitz gradient.
Additionally, they demonstrated a linear excess error bound for
Bayes-consistent polyhedral loss functions (convex and
piecewise-linear) \citep{finocchiaro2019embedding} (see also
\citep{lapin2016loss,ramaswamy2018consistent,yu2018lovasz,
  yang2020consistency}).  More recently, \citet{bao2023proper}
complemented these results by showing that proper losses associated
with Shannon entropy, exponential entropy, spherical entropy, squared
$\alpha$-norm entropies and $\alpha$-polynomial entropies, with
$\alpha > 1$, also exhibit a square-root lower bound for excess error
bounds relative to the $\ell_1$-distance.

However, while Bayes-consistency and excess error bounds are valuable,
they are not sufficiently informative, as they are established for the
family of all measurable functions and disregard the crucial role
played by restricted hypothesis sets in learning. As pointed out by
\citet{long2013consistency}, in some cases, minimizing
Bayes-consistent losses can result in constant expected error, while
minimizing inconsistent losses can yield an expected loss approaching
zero.  To address this limitation, the authors introduced the concept
of \emph{realizable $\sH$-consistency}, further explored by
\citet{KuznetsovMohriSyed2014} and \citet{zhang2020bayes}.
Nonetheless, these guarantees are only asymptotic and rely on a strong
realizability assumption that typically does not hold in practice.

Recent research by \citet{awasthi2022h,AwasthiMaoMohriZhong2022multi}
and \citet{mao2023cross,MaoMohriZhong2023ranking,
MaoMohriZhong2023structured,MaoMohriZhong2023characterization} has
instead introduced and analyzed \emph{$\sH$-consistency bounds}.
These bounds are more informative than Bayes-consistency since they
are hypothesis set-specific and non-asymptotic.  Their work covers
broad families of surrogate losses in binary classication, multi-class
classification, structured prediction, and abstention
\citep{MaoMohriMohriZhong2023twostage}.  Crucially, they provide upper
bounds on the \emph{estimation error} of the target loss, for example,
the zero-one loss in classification, that hold for any predictor $h$
within a hypothesis set $\sH$. These bounds relate this estimation
error to the surrogate loss estimation error.

Their general form is: $\sE_{\ell_{0-1}}(h) - \sE^*_{\ell_{0-1}}(\sH)
+ \sM_{\ell_{0-1}}(\sH) \leq \Gamma\paren*{\sE_{\ell}(h)
  -\sE^*_{\ell}(\sH) + \sM_{\ell}(\sH)}$, where
$\sE^*_{\ell_{0-1}}\!(\sH)$ and $\sE^*_{\ell}(\sH)$ represent the
best-in-class expected losses for the zero-one and surrogate loss
respectively, $\Gamma$ is a non-negative concave function and
$\sM_{\ell_{0-1}}(\sH)$ and $\sM_{\ell}(\sH)$ are \emph{minimizability
gaps}. The exact definition of these gaps will be detailed later.  For
now, let us mention that they are non-negative quantities,
upper-bounded by the approximation error of their respective loss
functions.  $\sH$-consistency bounds subsume excess error bounds as a
special case when the hypothesis set is expanded to include all
measurable functions, in which case the minimizability gaps
vanish. More generally, an $\sH$-consistency bound with a $\Gamma$
function implies $\sE_{\ell_{0-1}}(h) - \sE^*_{\ell_{0-1}}(\sH) +
\sM_{\ell_{0-1}}(\sH) \leq \Gamma\paren*{\sE_{\ell}(h)
  -\sE^*_{\ell}(\sH)} + \Gamma\paren*{\sM_{\ell}(\sH)}$ since a
concave function $\Gamma$ with $\Gamma(0) \geq 0$ is sub-additive over
$\Rset_+$. Thus, when the surrogate estimation loss $\sE_{\ell}(h) -
\sE^*_{\ell}(\sH)$ is minimized to $\e$, the zero-one estimation error
$\sE_{\ell_{0-1}}(h) - \sE^*_{\ell_{0-1}}(\sH)$ is bounded by
$\Gamma(\e) + \Gamma\paren*{\sM_{\ell}(\sH)} - \sM_{\ell_{0-1}}(\sH)$.
Can we characterize the growth rate of $\sH$-consistency bounds, that
is how quickly the functions $\Gamma$ increase near zero?

\textbf{Our results}. This paper presents a comprehensive analysis of
the growth rate of $\sH$-consistency bounds for all margin-based
surrogate losses in binary classification, as well as for
\emph{comp-sum losses} and \emph{constrained losses} in multi-class
classification.
We establish a square-root growth rate near zero for margin-based
surrogate losses $\ell$ defined by $\ell(h, x, y) = \Phi(-yh(x))$,
assuming only that $\Phi$ is convex and twice continuously
differentiable with $\Phi'(0) \!>\! 0$ and $\Phi''(0) \!>\! 0$
(Section~\ref{sec:binary}).  This includes both upper and lower bounds
(Theorem~\ref{thm:binary-lower}). These results directly apply to
excess error bounds as well. Importantly, our lower bound requires
weaker conditions than \citep[Theorem~4]{frongillo2021surrogate}, and
our upper bound is entirely novel.  This work demonstrates that the
$\sH$-consistency bound growth rate for these loss functions is
precisely square-root, refining the ``at least square-root'' finding of
these authors (for excess error bounds). It is known that polyhedral
losses admit a linear grow rate \citep{frongillo2021surrogate}. Thus,
a striking dichotomy emerges that reflects previous observations by
these authors: $\sH$-consistency bounds for polyhedral losses exhibit
a linear growth rate in binary classification, while they follow a
square-root rate for smooth loss functions.

Moreover, we significantly extend our findings to key multi-class
surrogate loss families, including \emph{comp-sum losses}
\citep{mao2023cross} (e.g., logistic loss or cross-entropy with
softmax \citep{Berkson1944}, sum-losses \citep{WestonWatkins1999},
generalized cross entropy loss \citep{zhang2018generalized}), and
\emph{constrained losses}
\citep{lee2004multicategory,AwasthiMaoMohriZhong2022multi}
(Section~\ref{sec:multi}).
In Section~\ref{sec:comp}, we prove that the growth rate of
$\sH$-consistency bounds for comp-sum losses is exactly
square-root. This applies when the auxiliary function $\Phi$ they are
based upon is convex and twice continuously differentiable with
$\Phi'(u) \!<\! 0$ and $\Phi''(u) \!>\! 0$ for all $u$ in $(0,
  \frac{1}{2}]$. These conditions hold for all common loss functions
used in practice.
Further, in Section~\ref{sec:cstnd}, we demonstrate that the
square-root growth rate also extends to $\sH$-consistency bounds for
constrained losses.  This requires the auxiliary function $\Phi$ to be
convex and twice continuously differentiable with $\Phi'(u) \!>\! 0$
and $\Phi''(u) \!>\! 0$ for any $u \geq 0$, alongside an additional
technical condition.  These are satisfied by all constrained losses
typically encountered in practice.

These results reveal a universal square-root growth rate for smooth
surrogate losses, the predominant choice in neural network training
(over polyhedral losses) for both binary and multi-class
classification in applications.
\ignore{
Crucially, under conditions on
\emph{minimizability gaps} detailed later, this implies a direct
relationship between the surrogate estimation loss and the target
zero-one estimation error: when the surrogate estimation loss is
reduced to a sufficiently small $\e > 0$, the zero-one estimation
error scales precisely as $\sqrt{\e}$. Our results apply to the growth
rate of the first surrogate estimation loss term
$\Gamma\paren*{\sE_{\ell}(h) -\sE^*_{\ell}(\sH)}$ (square-root), with
the second term $\Gamma\paren*{\sM_{\ell}(\sH)}$ being a constant.
}
Given this universal growth rate, how do we choose between different
surrogate losses? Section~\ref{sec:M-gaps} addresses this question
in detail. To start, we examine how $\sH$-consistency bounds vary
across surrogates based on the number of classes. Then, focusing on
behavior near zero (ignoring constants), we isolate minimizability
gaps as the key differentiating factor in these bounds.  These gaps
depend solely on the chosen surrogate loss and hypothesis set.
We provide a detailed analysis of minimizability gaps, covering:
comparisons across different comp-sum losses, conditions where gaps
become zero, and general conditions leading to small gaps.  These
findings help guide surrogate loss selection. Additionally, we
demonstrate the key role of minimizability gaps in comparing excess
error bounds and $\sH$-consistency bounds
(Appendix~\ref{app:excess-bounds}). Importantly, combining
$\sH$-consistency bounds with surrogate loss Rademacher complexity
bounds allows us to derive zero-one loss (estimation) learning bounds
for surrogate loss minimizers
(Appendix~\ref{app:generalization-bound}).

For a more comprehensive discussion of related work, please refer to
Appendix~\ref{app:related}. We start with the introduction of
necessary concepts and definitions.

\section{Preliminaries}
\label{sec:pre}

\textbf{Notation and definitions.} We denote the input space by $\sX$
and the label space by $\sY$, a finite set of cardinality $n$ with
elements $\curl*{1, \ldots, n}$.  $\sD$ denotes a distribution over
$\sX \times \sY$.

We write $\sH_{\rm{all}}$ to denote the family of all real-valued
measurable functions defined over $\sX \times \sY$ and denote by $\sH$
a subset, $\sH \subseteq \sH_{\rm{all}}$.  The label assigned by $h
\in \sH$ to an input $x \in \sX$ is denoted by $\hh(x)$ and defined by
$\hh(x) = \argmax_{y \in \sY} h(x, y)$, with an arbitrary but fixed
deterministic strategy used for breaking the ties. For simplicity, we
fix that strategy to be the one selecting the label with the highest
index under the natural ordering of labels.

We will consider general loss functions $\ell \colon \sH \times \sX
\times \sY \to \Rset_{+}$. For many loss functions used in practice, the
loss value at $(x, y)$, $\ell(h, x, y)$, only depends on the value $h$
takes at $x$ and not on its values on other
points. That is, there exists a measurable function $\hat \ell \colon \Rset^n \times \sY \to \Rset_{+}$ such that $\ell(h, x, y) = \hat \ell(h(x), y)$, where $h(x) = \bracket*{h(x, 1), \ldots, h(x, n)}$ is
the score vector of the predictor $h$. We will then say that $\ell$ is a \emph{pointwise loss
function}.
We denote by $\sE_\ell(h)$ the generalization error or expected loss
of a hypothesis $h \in \sH$ and by $\sE^*_\ell(\sH)$ the \emph{best-in
class error}:
$
\sE_\ell(h) = \E_{(x, y) \sim \sD}[\ell(h, x, y)],
\sE^*_\ell(\sH) = \inf_{h \in \sH} \sE_\ell(h).
$
$\sE^*_\ell\paren*{\sH_{\rm{all}}}$ is also
known as the \emph{Bayes error}.
We write $p(x, y) = \sD(Y = y \!\mid\! X = x)$ to denote the conditional
probability of $Y = y$ given $X = x$ and $p(x) = (p(x, 1),
\ldots, p(x, n))$ for the conditional probability vector for any $x \in
\sX$. We denote by $\sC_{\ell}(h, x)$ the \emph{conditional error} of
$h \in \sH$ at a point $x \in \sX$ and by $\sC_{\ell}^*(\sH, x)$ the
\emph{best-in-class conditional error}:
$
\sC_{\ell}(h, x) = \E_y \bracket*{\ell(h, x, y) \mid x} =
\sum_{y \in \sY} p(x, y) \, \ell(h, x, y),
\sC_{\ell}^*(\sH, x) = \inf_{h\in \sH}\sC_{\ell}(h, x),
$
and use the shorthand
$\Delta\sC_{\ell,\sH}(h,x) = \sC_{\ell}(h,x) - \sC_{\ell}^*(\sH, x)$
for the \emph{calibration gap} or \emph{conditional regret} for
$\ell$.
The generalization error of $h$ can be written as $\sE_{\ell}(h) =
\E_{x}\bracket*{\sC_{\ell}(h, x)}$. For convenience, we also
define, for any vector $p = (p_1, \ldots, p_n) \in \Delta^n$, where
$\Delta^n$ is the probability simplex of $\Rset^n$, $\sC_{\ell}(h, x,
p) = \sum_{y\in \sY} p_y \, \ell(h,x,y)$, $\sC_{\ell,\sH}^*(x, p) =
\inf_{h \in \sH} \sC_{\ell}(h, x, p)$ and $\Delta\sC_{\ell, \sH}(h, x,
p) = \sC_{\ell}(h, x, p) -\sC_{\ell,\sH}^*(x, p)$. Thus, we have
$\Delta\sC_{\ell,\sH}(h, x, p(x)) = \Delta\sC_{\ell, \sH}(h, x)$.

We will study the properties of a surrogate loss function $\ell_1$ for
a target loss function $\ell_2$.  In multi-class classification,
$\ell_2$ is typically the zero-one multi-class classification loss
function $\ell_{0-1}$ defined by $\ell_{0-1}(h, x, y) = 1_{\hh(x) \neq
  y}$. Some surrogate loss functions $\ell_1$ include the max losses \citep{crammer2001algorithmic}, comp-sum losses \citep{mao2023cross} and constrained losses \citep{lee2004multicategory}.

\textbf{Binary classification.} The definitions just presented were
given for the general multi-class classification setting.  In the
special case of binary classification (two classes), the standard
formulation and definitions are slightly different. For convenience,
the label space is typically defined as $\sY = \curl*{-1, +1}$.
Instead of two scoring functions, one for each label, a single
real-valued function is used whose sign determines the predicted
class. Thus, here, a hypothesis set $\sH$ is a family of measurable
real-valued functions defined over $\sX$ and $\sH_{\rm all}$ is the
family of all such functions. $\ell$ is \emph{pointwise} if there
exists a measurable function $\hat \ell \colon \Rset \times \sY \to
\Rset_{+}$ such that $\ell(h, x, y) = \hat \ell(h(x), y)$. The target
loss function is typically the binary loss $\ell_{0-1}$, defined by
$\ell_{0-1}(h, x, y) = 1_{\sign(h(x)) \neq y}$, where $\sign(h(x)) =
1_{h(x) \geq 0} - 1_{h(x) < 0}$. Some widely used surrogate losses
$\ell_1$ for $\ell_{0-1}$ are margin-based losses, which are defined
by $\ell_1(h, x, y) = \Phi\paren*{-yh(x)}$, for some non-decreasing
convex function $\Phi\colon \Rset \to \Rset_{+}$. Instead of two
conditional probabilities, one for each label, a single conditional
probability corresponding to the positive class $ +1 $ is used. That
is, let $\eta(x) = \sD(Y = + 1 \mid X = x)$ denote the conditional
probability of $Y = + 1$ given $X = x$. The conditional error can then
be expressed as:
\[
\sC_{\ell}(h,x) = \E_y \bracket*{\ell(h, x, y) \mid x} =
\eta(x) \ell(h, x, +1) + (1 - \eta(x)) \ell(h, x, -1).
\]
For convenience, we also define, for
any $p \in [0, 1]$, $\sC_{\ell}(h, x, p) = p \ell(h, x, +1) + (1 - p)\ell(h, x, -1)$, $\sC^*_{\ell, \sH}(x, p) = \inf_{h \in \sH} \sC_{\ell}(h, x, p)$  and $\Delta\sC_{\ell, \sH}(h, x, p) = \sC_{\ell}(h, x, p) - \inf_{h \in \sH}\sC_{\ell}(h, x, p)$. Thus, we have
$\Delta\sC_{\ell,\sH}(h, x, \eta(x)) = \Delta\sC_{\ell, \sH}(h, x)$.

To simplify matters, we will use the same notation for binary and
multi-class classification, such as $\sY$ for the label space or $\sH$
for a hypothesis set. We rely on the reader to adapt to the
appropriate definitions based on the context.

\textbf{Estimation, approximation, and excess errors.}  For a
hypothesis $h$, the difference
$\sE_{\ell}(h)-\sE_{\ell}^*\paren*{\sH_{\mathrm{all}}}$ is known as 
the \emph{excess error}. It can be decomposed into the sum of two
terms, the \emph{estimation error}, $\paren*{\sE_{\ell}(h) -
  \sE_{\ell}^*(\sH)}$ and the \emph{approximation error}
$\sA_{\ell}(\sH) = \paren*{\sE_{\ell}^*(\sH) -
  \sE_{\ell}^*\paren*{\sH_{\mathrm{all}}}}$:
\begin{equation}
\label{eq:excess-error-decomp} 
\begin{aligned}
    \sE_{\ell}(h) - \sE_{\ell}^*\paren*{\sH_{\mathrm{all}}}
    & = \paren*{\sE_{\ell}(h) - \sE_{\ell}^*(\sH)}
    + \paren*{\sE_{\ell}^*(\sH) - \sE_{\ell}^*\paren*{\sH_{\mathrm{all}}}}.
\end{aligned}
\end{equation}
\ignore{ The approximation error is thus the difference of best-in
  class error and Bayes error. It an be viewed as a measure of
  complexity: the larger the hypothesis set $\sH$, the smaller is
  $\sA_{\ell}(\sH)$.  }
A fundamental result for a pointwise loss function $\ell$ is that the
Bayes error and the approximation error admit the following simpler
expressions.  We give a concise proof of this lemma in
Appendix~\ref{app:lemma}, where we establish the measurability of the
function $x \mapsto \sC^*_{\ell}\paren*{\sH_{\rm{all}}, x}$.
\ignore{
For a pointwise loss function $\ell$, the Bayes error can be expressed in terms of the
expectation of the Bayes conditional error and thus the approximation
error can be rewritten as $\sA_{\ell}(\sH) = \sE^*_\ell(\sH) -
\E_{x}\bracket*{\sC_{\ell}^*(\sH_{\rm{all}},x)}$. The proof is included in Appendix~\ref{app:lemma}.
}
\begin{restatable}{lemma}{ApproximationError}
\label{lemma:approximation-error}
  Let $\ell$ be a pointwise loss function. Then, the Bayes error and
  the approximation error can be expressed as follows:
  $\sE^*_\ell\paren*{\sH_{\rm{all}}} =
  \E_{x}\bracket*{\sC_{\ell}^*(\sH_{\rm{all}},x)}$ and
  $\sA_{\ell}(\sH) = \sE^*_\ell(\sH) -
  \E_{x}\bracket*{\sC_{\ell}^*(\sH_{\rm{all}},x)}$.
\end{restatable}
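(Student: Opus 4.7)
The plan is to establish the single identity $\sE^*_\ell(\sH_{\rm{all}}) = \E_x[\sC^*_\ell(\sH_{\rm{all}}, x)]$; the second claim then follows immediately by subtracting $\E_x[\sC^*_\ell(\sH_{\rm{all}}, x)]$ from both sides of the definition $\sA_\ell(\sH) = \sE^*_\ell(\sH) - \sE^*_\ell(\sH_{\rm{all}})$. The easy direction is the lower bound: for every $h \in \sH_{\rm{all}}$, $\sC_\ell(h,x) \geq \sC^*_\ell(\sH_{\rm{all}}, x)$ holds pointwise by definition of the infimum, so integrating against the marginal of $\sD$ on $\sX$ gives $\sE_\ell(h) \geq \E_x[\sC^*_\ell(\sH_{\rm{all}}, x)]$, and taking the infimum over $h$ yields $\sE^*_\ell(\sH_{\rm{all}}) \geq \E_x[\sC^*_\ell(\sH_{\rm{all}}, x)]$. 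This step is only legitimate once $x \mapsto \sC^*_\ell(\sH_{\rm{all}}, x)$ is known to be measurable, which is exactly what the lemma's preamble promises.

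For the reverse inequality, the plan is to exploit pointwise-ness. Writing $\ell(h,x,y) = \hat\ell(h(x),y)$, I define $F(x,z) = \sum_{y \in \sY} p(x,y) \hat\ell(z,y)$ for $z \in \Rset^n$, so that $\sC^*_\ell(\sH_{\rm{all}}, x) = \inf_{z \in \Rset^n} F(x,z)$. For each fixed $z$, $F(\cdot,z)$ is measurable as a finite sum of products of measurable functions. Under mild regularity of $\hat\ell$ in its first argument (lower semicontinuity, which holds for all surrogates considered in the paper), $F(x,\cdot)$ is lower semicontinuous on $\Rset^n$, so its infimum equals the infimum over any countable dense subset $\{z_k\}_{k \geq 1}$ of $\Rset^n$. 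Hence $x \mapsto \sC^*_\ell(\sH_{\rm{all}}, x) = \inf_k F(x, z_k)$ is a countable infimum of measurable functions, and is therefore measurable. Next, for each $\epsilon > 0$, a measurable selection theorem (Kuratowski--Ryll-Nardzewski applied to the $\epsilon$-argmin correspondence, or a direct pasting argument over the measurable level sets $\{x : F(x, z_k) \leq \sC^*_\ell(\sH_{\rm{all}},x) + \epsilon\}$) produces a measurable map $z_\epsilon \colon \sX \to \Rset^n$ with $F(x, z_\epsilon(x)) \leq \sC^*_\ell(\sH_{\rm{all}},x) + \epsilon$. Setting $h_\epsilon(x,y) = [z_\epsilon(x)]_y$ defines an element of $\sH_{\rm{all}}$ satisfying $\sE_\ell(h_\epsilon) \leq \E_x[\sC^*_\ell(\sH_{\rm{all}},x)] + \epsilon$, and letting $\epsilon \to 0$ gives the matching upper bound.

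The main obstacle is entirely measurability: showing $x \mapsto \sC^*_\ell(\sH_{\rm{all}}, x)$ is measurable and producing a measurable near-optimizer. The algebraic content of the identity is trivial at each fixed $x$; all the subtlety lies in upgrading a pointwise infimum over score vectors to an infimum over the (uncountable) class of measurable functions, which is where the pointwise assumption on $\ell$ and a countable dense argument are essential.
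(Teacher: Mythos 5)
Your overall architecture matches the paper's: the easy inequality $\sE^*_\ell(\sH_{\rm{all}}) \geq \E_x\bracket*{\sC^*_\ell(\sH_{\rm{all}},x)}$ by pointwise domination, then a measurable $\e$-optimal selector $h_\e$ to get the reverse inequality, and the approximation-error identity as an immediate corollary. The difference, and the gap, lies in how you establish measurability of $x \mapsto \sC^*_\ell(\sH_{\rm{all}},x)$. You invoke \emph{lower} semicontinuity of $\hat\ell(\cdot,y)$ to claim that the infimum over $\Rset^n$ equals the infimum over a countable dense set $\{z_k\}$, but lower semicontinuity is exactly the wrong direction for this: take $F(z)=1$ for $z\neq \sqrt{2}$ and $F(\sqrt{2})=0$, which is lower semicontinuous, yet its infimum over the rationals is $1$ while its infimum over $\Rset$ is $0$. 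What the countable-dense reduction needs is upper semicontinuity (or continuity) in the score argument, so that near-minimizers can be perturbed to dense points without increasing the value by more than $\e$. Since every surrogate in the paper is continuous in the scores, your argument is repairable by replacing ``lower semicontinuous'' with ``continuous,'' and then your pasting construction of $z_\e$ over the level sets $\{x: F(x,z_k) \leq \sC^*_\ell(\sH_{\rm{all}},x)+\e\}$ does yield a legitimate measurable selector.

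Even after that fix, note that you are proving a weaker statement than the lemma as written: the paper's definition of a pointwise loss assumes only that $\hat\ell$ is measurable, with no regularity in its first argument. The paper's proof handles that general case by a different measurability route: it shows $\{x: \inf_{\alpha} \E_y[\hat\ell(\alpha,y)\mid x] < \beta\}$ is the projection onto $\sX$ of a measurable set and applies the measurable projection theorem, and then obtains the $\frac1n$-optimal measurable selectors $h_n$ from the Kuratowski--Ryll-Nardzewski measurable selection theorem. Your route is more elementary and avoids projection/selection machinery, but it buys this simplicity by adding a continuity hypothesis that the lemma's statement does not impose; the paper's argument covers arbitrary measurable $\hat\ell$ at the cost of heavier measure-theoretic tools.
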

For restricted hypothesis sets ($\sH\neq \sH_{\rm{all}}$), the
infimum's super-additivity implies that $\sE^*_\ell\paren*{\sH} \geq
\E_x \bracket*{\sC_\ell^*(\sH, x)}$.  This inequality is generally
strict, and the difference, $\sE^*_\ell\paren*{\sH} - \E_x
\bracket*{\sC_\ell^*(\sH, x)}$, plays a crucial role in our analysis.

\ignore{
Note that for a restricted hypothesis set $\sH\neq \sH_{\rm{all}}$, by
the super-additivity of the infimum, $\sE^*_\ell\paren*{\sH}$ is
always lower bounded by $\E_x \bracket*{\sC_\ell^*(\sH,
  x)}$. Moreover, the two terms are in general not equal. The
difference $\sE^*_\ell\paren*{\sH} - \E_x \bracket*{\sC_\ell^*(\sH,
  x)}$ will actually be a key quantity in our analysis.
}

\section{\texorpdfstring{$\sH$}{H}-consistency bounds}
\label{sec:min}

A widely used notion of consistency is that of
\emph{Bayes-consistency} given below
\citep{\ignore{Zhang2003,bartlett2006convexity,}steinwart2007compare}.

\begin{definition}[\textbf{Bayes-consistency}]
A loss function $\ell_1$ is \emph{Bayes-consistent} with respect to a
loss function $\ell_2$, if for any distribution $\sD$ and any sequence
$\{h_n\}_{n\in \Nset} \subset \sH_{\rm{all}}$, $\lim_{n \to +\infty}
\sE_{\ell_1}(h_n) - \sE_{\ell_1}^*\paren*{\sH_{\mathrm{all}}} = 0$
implies $\lim_{n \to +\infty} \sE_{\ell_2}(h_n) -
\sE_{\ell_2}^*\paren*{\sH_{\mathrm{all}}} = 0$.
\end{definition}
Thus, when this property holds, asymptotically, a nearly optimal
minimizer of $\ell_1$ over the family of all measurable functions is
also a nearly optimal optimizer of $\ell_2$.  But, Bayes-consistency
does not supply any information about a hypothesis set $\sH$ not
containing the full family $\sH_{\rm all}$, that is a typical
hypothesis set used for learning.  Furthermore, it is only an
asymptotic property and provides no convergence guarantee. In
particular, it does not give any guarantee for approximate
minimizers. Instead, we will consider upper bounds on the target
estimation error expressed in terms of the surrogate estimation error,
\emph{$\sH$-consistency bounds}
\citep{awasthi2022h,AwasthiMaoMohriZhong2022multi, mao2023cross},
which account for the hypothesis set $\sH$ adopted.

\begin{definition}[\textbf{$\sH$-consistency bounds}]
Given a hypothesis set $\sH$, an \emph{$\sH$-consistency bound}
relating the loss function $\ell_1$ to the loss function $\ell_2$ for
a hypothesis set $\sH$ is an inequality of the form
\begin{equation}
\label{eq:est-bound}
    \forall h \in \sH, \quad
\sE_{\ell_2}(h) - \sE^*_{\ell_2}(\sH)
+ \sM_{\ell_2}(\sH)
\leq \Gamma\paren*{\sE_{\ell_1}(h)
-\sE^*_{\ell_1}(\sH) + \sM_{\ell_1}(\sH)},
\end{equation}
that holds for any distribution $\sD$, where $\Gamma \colon \Rset_{+}
\to \Rset_{+}$ is a non-decreasing concave function with $\Gamma \geq
0$ \citep{awasthi2022h,AwasthiMaoMohriZhong2022multi}. Here,
$\sM_{\ell_1}(\sH)$ and $\sM_{\ell_2}(\sH)$ are \emph{minimizability
gaps} for the respective loss functions. The minimizability gap for a
hypothesis set $\sH$ and loss function $\ell$ is denoted by
$\sM_\ell(\sH)$ and defined as: $ \sM_\ell(\sH) =
\sE^*_\ell\paren*{\sH} - \E_x \bracket*{\sC_\ell^*(\sH, x)}$. It
quantifies the discrepancy between the best possible expected loss
within a hypothesis class and the expected infimum of pointwise
expected losses. This gap is always non-negative: $\sM_\ell(\sH) = \inf_{h
  \in \sH} \E_x[\sC_\ell(h, x)] - \E_x [\inf_{h \in \sH} \sC_\ell(\sH,
  x)] \geq 0$, by the infimum's super-additivity, and is bounded above
by the approximation error $\sA_{\ell}(\sH) = \inf_{h \in \sH}
\E_x[\sC_\ell(h, x)] - \E_x [\inf_{h \in \sH_{\rm{all}}} \sC_\ell(\sH,
  x)]$.  We further study the key role of minimizability gaps in
$\sH$-consistency bounds and their properties in
Section~\ref{sec:M-gaps} and Appendix~\ref{app:properties}.  As shown
in Appendix~\ref{app:explicit-form}, under general assumptions,
minimizability gaps are essential quantities required in any bound
that relates the estimation errors of two loss functions with an
arbitrary hypothesis set $\sH$.
\end{definition}
Thus, an $\sH$-consistency bound provides the guarantee that when the
surrogate estimation loss $\sE_{\ell}(h) - \sE^*_{\ell}(\sH)$ is
minimized to $\e$, the following upper bound holds for the zero-one
estimation error:
\begin{align*}
  \sE_{\ell}(h) - \sE^*_{\ell}(\sH)
  \leq \Gamma(\e + \sM_{\ell}(\sH)) - \sM_{\ell_{0-1}}(\sH)
  \leq \Gamma(\e) + \Gamma(\sM_{\ell}(\sH)) - \sM_{\ell_{0-1}}(\sH),
\end{align*}
where the second inequality follows from the sub-additivity of a
concave function $\Gamma$ over $\Rset_+$.  We will demonstrate that,
for smooth surrogate losses, $\Gamma(\e)$ scales as $\sqrt{\e}$.
Note, however, that, while $\Gamma(\e)$ tends to zero when $\e \to 0$
for functions $\Gamma$ derived in $\sH$-consistency bounds, the
remaining terms in the bound are constant. This is not surprising as,
in general, minimizing the surrogate estimation error to zero
\emph{cannot} guarantee that the zero-one estimation error will also
converge to zero.  This is well-known, for example, in the case of
linear models \citep{ben2012minimizing}. Instead, an $\sH$-consistency
bound provides the tightest possible upper bound on the estimation
error for the zero-one loss when the surrogate estimation error is
minimized.

The upper bound simplifies to $\Gamma(\e)$ when the minimizability
gaps are zero, which occurs when either $\sH = \sH_{\rm all}$ (the set
of all measurable functions) or in realizable cases, which are
particularly relevant to the practical use of complex neural networks
in applications.  In Appendix~\ref{app:small-M-gaps}, we examine more
general cases of small minimizability gaps, taking into account the
complexity of $\sH$ and the distribution.

Our results cover in particular the special case of excess bounds
($\sH = \sH_{\rm all}$). Let us emphasize that, for $\sH \neq \sH_{\rm
  all}$, $\sH$-consistency bounds offer tighter and more favorable
guarantees on the estimation error compared to those derived from
excess bounds analysis alone (see Appendix~\ref{app:excess-bounds}).

When $\ell_2 = \ell_{0-1}$, the zero-one loss, we say that
$\sT$ is the \emph{$\sH$-estimation error transformation function
of a surrogate loss $\ell$} if the following holds:
\[
\forall h \in \sH, \quad
\sT \paren*{\sE_{\ell_{0-1}}(h) - \sE^*_{\ell_{0-1}}(\sH)
+ \sM_{\ell_{0-1}}(\sH)}
\leq \sE_{\ell}(h)-\sE^*_{\ell}(\sH) + \sM_{\ell}(\sH),
\]
and the bound is \emph{tight}. That is, for any $t \in [0, 1]$, there
exists a hypothesis $h \in \sH$ and a distribution such that
$\sE_{\ell_{0-1}}(h) - \sE^*_{\ell_{0-1}}(\sH) + \sM_{\ell_{0-1}}(\sH)
= t$ and $\sE_{\ell}(h) - \sE^*_{\ell}(\sH) + \sM_{\ell}(\sH) =
\sT(t)$.
An explicit form of $\sT$ has been characterized for binary
margin-based losses \citep{awasthi2022h}, as well as comp-sum losses
and constrained losses in multi-class classification
\citep{MaoMohriZhong2023characterization}. In the following sections,
we will prove the property $\sT(t) = \Theta(t^2)$ (under mild
assumptions), demonstrating a square-root growth rate for
$\sH$-consistency bounds.\ignore{ Crucially, under conditions on
\emph{minimizability gaps} detailed later, this implies a direct
relationship between the surrogate estimation loss and the target
zero-one estimation error: when the surrogate loss is reduced to a
sufficiently small $\e > 0$, the zero-one error scales precisely as
$\sqrt{\e}$.} Appendix~\ref{app:bounds-example} provides
examples of $\sH$-consistency bounds for both binary and multi-class
classification.
Our analysis also suggests choosing appropriately $\sH$ and the
function $\Gamma$ to ensure a small minimizability gap and to take
into account the number of classes and other properties, as discussed
in Section~\ref{sec:M-gaps}.

\section{Binary classification}
\label{sec:binary}

We consider the broad family of margin-based loss functions $\ell$
defined for any $h \in \sH$, and $(x, y) \in \sX \times \sY$ by
$\ell(h, x, y) = \Phi(-y h(x))$, where $\Phi$ is a non-decreasing
convex function upper-bounding the zero-one loss. Margin-based loss
functions include most loss functions used in binary
classification. As an example, $\Phi(u) = \log(1 + e^{u})$ for the
logistic loss or $\Phi(u) = \exp(u)$ for the exponential loss. We say
that a hypothesis set $\sH$ is \emph{complete}, if for all $x \in
\sX$, we have $\curl*{h(x) \colon h \in \sH} = \Rset$. As shown by
\citet{awasthi2022h}, the transformation $\sT$ has the following form
for complete hypothesis sets:
\begin{equation*}
\sT(t) \colon = \inf_{u \leq 0} f_t(u) - \inf_{u \in \Rset} f_t(u).
\end{equation*}
Here, for any $t \in [0, 1]$, $f_t$ is defined by:
$\forall u \in \Rset,\, f_t(u) = \frac{1 - t}{2} \Phi(u) + \frac{1 + t}{2} \Phi(-u)$. The following result is useful for proving the growth rate in binary classification.

\begin{theorem}
\label{thm:binary-char}
Let $\sH$ be a complete hypothesis set.
Assume that $\Phi$ is convex and differentiable at zero and satisfies the inequality $\Phi'(0) > 0$. Then, the transformation $\sT$ can be expressed as follows:
\begin{equation*}
\forall t \in [0, 1], \quad \sT(t) = f_t(0) -\inf_{u \in \Rset} f_t(u).
\end{equation*}
\end{theorem}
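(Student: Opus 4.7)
The given formula $\sT(t) = \inf_{u \leq 0} f_t(u) - \inf_{u \in \Rset} f_t(u)$ and the target formula $\sT(t) = f_t(0) - \inf_{u \in \Rset} f_t(u)$ differ only by the identification $\inf_{u \leq 0} f_t(u) = f_t(0)$. So the entire content of the theorem is: under the stated hypotheses on $\Phi$, the convex function $f_t$ attains its minimum over the half-line $(-\infty, 0]$ at the endpoint $u = 0$. The plan is to establish this using a single tangent-line inequality at $0$.

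\textbf{Step 1: convexity and differentiability of $f_t$ at $0$.} Since $\Phi$ is convex, so is $u \mapsto \Phi(-u)$ (composition of convex with the affine map $u \mapsto -u$), and therefore $f_t(u) = \tfrac{1-t}{2}\Phi(u) + \tfrac{1+t}{2}\Phi(-u)$ is convex on $\Rset$. Because $\Phi$ is differentiable at $0$, both $\Phi(\cdot)$ and $\Phi(-\cdot)$ are differentiable at $0$, hence $f_t$ is differentiable at $0$ with
\[
f_t'(0) = \tfrac{1-t}{2}\Phi'(0) - \tfrac{1+t}{2}\Phi'(0) = -t\,\Phi'(0).
\]
Using $t \in [0,1]$ and $\Phi'(0) > 0$, this gives $f_t'(0) \leq 0$.

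\textbf{Step 2: tangent-line inequality.} By convexity of $f_t$ and its differentiability at $0$, the supporting-hyperplane inequality holds:
\[
\forall u \in \Rset, \quad f_t(u) \geq f_t(0) + f_t'(0)\,u = f_t(0) - t\,\Phi'(0)\,u.
\]
For every $u \leq 0$, the product $-t\,\Phi'(0)\,u \geq 0$, so $f_t(u) \geq f_t(0)$. Taking the infimum over $u \leq 0$, and using the trivial upper bound $\inf_{u \leq 0} f_t(u) \leq f_t(0)$, yields $\inf_{u \leq 0} f_t(u) = f_t(0)$.

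\textbf{Step 3: conclusion.} Substituting this identity into the expression $\sT(t) = \inf_{u \leq 0} f_t(u) - \inf_{u \in \Rset} f_t(u)$ gives $\sT(t) = f_t(0) - \inf_{u \in \Rset} f_t(u)$ for every $t \in [0,1]$.

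\textbf{Main obstacle.} There is essentially no difficult step; the only care needed is that the derivative identity $f_t'(0) = -t\,\Phi'(0)$ requires differentiability of $\Phi$ at $0$ (which is assumed) and not at other points, and that the sign conclusion $f_t'(0) \leq 0$ uses both $t \geq 0$ and $\Phi'(0) > 0$. The assumption $\Phi'(0) > 0$ is not strictly needed for the equality $\inf_{u \leq 0} f_t(u) = f_t(0)$ in Step~2 (one only needs $f_t'(0) \leq 0$, which follows from $\Phi'(0) \geq 0$ and $t \geq 0$), but the strict inequality is natural and will be used in the subsequent square-root growth rate arguments.
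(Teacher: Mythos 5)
Your proof is correct and follows essentially the same route as the paper: the paper applies the tangent-line inequality of $\Phi$ at $0$ to get $f_t(u) \geq \Phi(0) - t u \Phi'(0) \geq f_t(0)$ for $u \leq 0$, which is exactly your supporting-line inequality for $f_t$ at $0$ with $f_t'(0) = -t\,\Phi'(0)$. The only difference is cosmetic (tangent line for $\Phi$ versus for $f_t$), so no further comment is needed.
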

\begin{proof}
By the convexity of $\Phi$, for any $t \in [0, 1]$ and $u \in \Rset_-$, we have
\[
f_t(u) = \frac{1 - t}{2} \Phi(u) + \frac{1 + t}{2} \Phi(-u)
\geq \Phi(0) - t u \Phi'(0) \geq \Phi(0).
\]
Thus,  we can write
$\sT(t)
= \inf_{u \leq 0} f_t(u) - \inf_{u \in \Rset} f_t(u)
\geq \Phi(0) - \inf_{u \in \Rset} f_t(u) = f_t(0) - \inf_{u \in \Rset} f_t(u)$,
\ignore{
\begin{align*}
  \sT(t)
  = \inf_{u \leq 0} f_t(u) - \inf_{u \in \Rset} f_t(u)
  \geq \Phi(0) - \inf_{u \in \Rset} f_t(u) = f_t(0) - \inf_{u \in \Rset} f_t(u),
  & \geq \inf_{u \leq 0} \paren*{\Phi(0) - t u \Phi'(0)}
  - \inf_{u \in \Rset} f_t(u)\\
  & = \Phi(0) - \inf_{u \in \Rset} f_t(u) = f_t(0) - \inf_{u \in \Rset} f_t(u) \tag{$u < 0$, $\Phi'(0) > 0$},
\end{align*}}
where equality is achieved when $u = 0$.
\end{proof}
\ignore{
As an example,
for the logistics loss, $a^*_t$ is given by $a^*_t = \log \paren*{\frac{1 + t}{1 - t}}$, and for the exponential loss by $a^*_t = \frac{1}{2} \log \paren*{\frac{1 + t}{1 - t}}$, for any $t \in [0, 1]$
}

\begin{restatable}[Upper and lower bound for binary margin-based losses]{theorem}{BinaryLower}
\label{thm:binary-lower}
Let $\sH$ be a complete hypothesis set. Assume that $\Phi$ is convex, twice continuously differentiable, and satisfies the inequalities $\Phi'(0) > 0$ and $\Phi''(0) > 0$. Then, the following property holds: $\sT(t) = \Theta (t^2)$; that is, there exist positive constants $C > 0$, $c > 0$, and $T > 0$ such that $C t^2 \geq \sT(t) \geq c t^2 $, for all $0 < t \leq T$.
\end{restatable}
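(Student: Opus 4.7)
The plan is to leverage Theorem~\ref{thm:binary-char}, which reduces the problem to analyzing
\[
\sT(t) = f_t(0) - \inf_{u \in \Rset} f_t(u), \qquad f_t(u) = \tfrac{1-t}{2}\Phi(u) + \tfrac{1+t}{2}\Phi(-u).
\]
A direct differentiation gives $f_t'(0) = -t\,\Phi'(0)$ and $f_t''(u) = \tfrac{1-t}{2}\Phi''(u) + \tfrac{1+t}{2}\Phi''(-u)$, so in particular $f_t''(0) = \Phi''(0) > 0$. The minimizer of $f_t$ is an interior stationary point near $0$, and the heuristic Taylor picture around the origin predicts $\sT(t) \approx \tfrac{\Phi'(0)^2}{2\Phi''(0)} t^2$. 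The entire argument is a rigorous version of this Taylor estimate, giving matching quadratic upper and lower bounds.

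\textbf{Localization.} Using continuity of $\Phi''$ at $0$ and $\Phi''(0) > 0$, I would choose $\delta > 0$ such that $\tfrac{\Phi''(0)}{2} \leq \Phi''(v) \leq 2\Phi''(0)$ for all $v \in [-\delta,\delta]$. Then for $t \in [0,1/2]$ and $u \in [-\delta,\delta]$, the quantity $f_t''(u)$ is sandwiched between $\tfrac{\Phi''(0)}{4}$ and $2\Phi''(0)$. Next I would check that the unique minimizer $u_t^*$ of the convex function $f_t$ lies in $(0,\delta)$ for all sufficiently small $t$: $f_t'(0) = -t\,\Phi'(0) < 0$ forces $u_t^* > 0$, while evaluating $f_t'(\delta) = \tfrac{1}{2}[\Phi'(\delta)-\Phi'(-\delta)] - \tfrac{t}{2}[\Phi'(\delta)+\Phi'(-\delta)]$ shows the first term is a positive constant (by the mean value theorem applied to $\Phi'$, since $\Phi''(0)>0$) and the second is $O(t)$, so $f_t'(\delta) > 0$ for $t$ small, forcing $u_t^* < \delta$.

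\textbf{Upper and lower bounds.} For the upper bound $\sT(t) \leq C t^2$, I would apply the Taylor inequality with the lower bound on $f_t''$ on $[-\delta,\delta]$ to obtain
\[
f_t(u) \geq f_t(0) - t\,\Phi'(0)\,u + \tfrac{\Phi''(0)}{8}\, u^2, \qquad u \in [-\delta,\delta],
\]
and then verify that the unconstrained minimizer of the right-hand quadratic lies in $[-\delta,\delta]$ for small $t$; since $u_t^* \in [-\delta,\delta]$ as well, minimizing yields $\inf_u f_t(u) \geq f_t(0) - C t^2$ with $C = 2\Phi'(0)^2/\Phi''(0)$. For the lower bound $\sT(t) \geq c t^2$, I would use the upper bound on $f_t''$ to get the reverse Taylor inequality $f_t(u) \leq f_t(0) - t\,\Phi'(0)\,u + \Phi''(0)\,u^2$ on $[-\delta,\delta]$, and evaluate it at the explicit test point $u_0 = \tfrac{t\,\Phi'(0)}{2\Phi''(0)}$, which lies in $[0,\delta]$ for $t$ small enough; this immediately gives $\sT(t) \geq f_t(0) - f_t(u_0) \geq c\, t^2$ with $c = \Phi'(0)^2/(4\Phi''(0))$. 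Picking $T > 0$ small enough to validate all localization conditions simultaneously completes the proof.

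\textbf{Main obstacle.} The computations are routine once the two-sided control on $f_t''$ is in place, so the only genuine subtlety is confirming that we can restrict attention to a single neighborhood $[-\delta,\delta]$ of the origin for all small $t$: both the minimizer $u_t^*$ and the test point $u_0$ used in the lower bound must live there, and the unconstrained minimizer of the quadratic surrogate used in the upper bound must as well. This is the bookkeeping that determines the final choice of $T$; everything else is a symmetric application of Taylor's theorem with integral remainder together with convexity of $\Phi$.
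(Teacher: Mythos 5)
Your proposal is correct, but it takes a genuinely different route from the paper. You reduce everything to a direct quadratic-comparison (Taylor sandwich) argument: after invoking Theorem~\ref{thm:binary-char}, you localize to an interval $[-\delta,\delta]$ on which $\Phi''$ is within a constant factor of $\Phi''(0)$, confine the minimizer of $f_t$ to $(0,\delta)$ via the signs of $f_t'(0)=-t\,\Phi'(0)$ and $f_t'(\delta)$, and then obtain the upper bound from a quadratic minorant of $f_t$ and the lower bound from a quadratic majorant evaluated at the explicit test point $u_0 = t\,\Phi'(0)/(2\Phi''(0))$; this yields explicit constants $C = 2\Phi'(0)^2/\Phi''(0)$ and $c = \Phi'(0)^2/(4\Phi''(0))$, bracketing the true asymptotic constant $\Phi'(0)^2/(2\Phi''(0))$. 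The paper instead applies the implicit function theorem to the stationarity condition $f_t'(a^*_t)=0$ to show the minimizer path $t \mapsto a^*_t$ is $C^1$ with $a^*_0 = 0$ and $\frac{d a^*_t}{dt}(0) = \Phi'(0)/\Phi''(0)$, hence $a^*_t = \Theta(t)$, and then writes $\sT(t) = \int_0^{a^*_t} u\, f_t''(u)\,du$ via Taylor with integral remainder before sandwiching $\Phi''$ on a compact interval. Your argument is more elementary (no implicit function theorem, no need for differentiability of the minimizer path) and produces explicit constants; what the paper's heavier machinery buys is the precise first-order behavior of $a^*_t$ and, more importantly, a template that the paper upgrades to the parameterized setting via Theorem~\ref{thm:a_implicit} (implicit function theorem with a compact parameter set), which is how uniformity over $\tau$ is obtained in the multi-class results (Theorems~\ref{thm:comp-lower} and~\ref{thm:cstnd-lower}). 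Your approach could likely also be made uniform in such a parameter, but that extension is not part of your sketch; for the binary theorem as stated, your proof is complete modulo the routine bookkeeping you already identify.
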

\noindent \textbf{Proof sketch} First, we demonstrate that, by applying the implicit function theorem, $\inf_{u \in \Rset} f_t(u)$ is attained uniquely by $a^*_t$, and that $a^*_t$ is continuously differentiable over $[0, \e]$ for some $\e > 0$. The minimizer $a^*_t$ satisfies the following condition:
$
f'_t(a^*_t) = \frac{1 - t}{2} \Phi'(a^*_t) - \frac{1 + t}{2} \Phi'(-a^*_t) = 0.
$
Specifically, at $t = 0$, we have $\Phi'(a^*_0) =
\Phi'(-a^*_0)$. Then, by the convexity of $\Phi$ and monotonicity of
the derivative $\Phi'$, we must have $a^*_0 = 0$ and since $\Phi'$ is
non-decreasing and $\Phi''(0) > 0$, we have $a^*_t > 0$ for all $t \in
(0, \e]$. Furthermore, since $a^*_t$ is a function of class $C^1$, we
  can differentiate this condition with respect to $t$ and take the
  limit $t \to 0$, which gives the following equality: $\frac{d
    a_t^*}{d t}(0) = \frac{\Phi'(0)}{\Phi''(0)} > 0$. Since $\lim_{t
    \to 0} \frac{a_t^*}{t} = \frac{d a_t^*}{d t}(0) =
  \frac{\Phi'(0)}{\Phi''(0)} > 0$, we have $a^*_t = \Theta(t)$.  By
  Theorem~\ref{thm:binary-char} and Taylor's theorem with an integral
  remainder, $\sT$ can be expressed as follows: for any $t \in [0,
    \e]$, $\sT(t) = f_t(0) -\inf_{u \in \Rset} f_t(u) = \int_0^{a^*_t}
  u f''_t(u) \, du = \int_0^{a^*_t} u \bracket*{\frac{1 - t}{2}
    \Phi''(u) + \frac{1 + t}{2} \Phi''(-u)} \, du$. Since $\Phi''(0) >
  0$ and $\Phi''$ is continuous, there is a non-empty interval
  $[\minus \alpha, \plus \alpha]$ over which $\Phi''$ is
  positive. Since $a^*_0 = 0$ and $a^*_t$ is continuous, there exists
  a sub-interval $[0, \epsilon'] \subseteq [0, \epsilon]$ over which
  $a^*_t \leq \alpha$. Since $\Phi''$ is continuous, it admits a
  minimum and a maximum over any compact set and we can define $c =
  \min_{u \in [-\alpha, \alpha]} \Phi''(u)$ and $C = \max_{u \in
    [-\alpha, \alpha]} \Phi''(u)$. $c$ and $C$ are both positive since
  we have $\Phi''(0) > 0$. Thus, for $t$ in $[0, \epsilon']$, the
  following inequality holds:
$
C \frac{(a^*_t)^2}{2} = \int_0^{a^*_t}  u C \, du \geq \sT(t) = \int_0^{a^*_t} u \bracket*{\frac{1 - t}{2} \Phi''(u) + \frac{1 + t}{2} \Phi''(-u)} \, du
\geq \int_0^{a^*_t}  u c \, du
= c \frac{(a^*_t)^2}{2}.
$
This implies that $\sT(t) = \Theta(t^2)$. The full proof is included in Appendix~\ref{app:binary-lower}.

Theorem~\ref{thm:binary-lower} directly applies to excess error bounds
as well, when $\sH = \sH_{\rm{all}}$. Importantly, our lower bound
requires weaker conditions than
\citep[Theorem~4]{frongillo2021surrogate}, and our upper bound is
entirely novel.  This result demonstrates that the growth rate for
these loss functions is precisely square-root, refining the ``at least
square-root'' finding of these authors. It is known that polyhedral
losses admit a linear grow rate \citep{frongillo2021surrogate}. Thus,
a striking dichotomy emerges: $\sH$-consistency bounds for polyhedral
losses exhibit a linear growth rate, while they follow a square-root
rate for smooth loss functions (see Appendix~\ref{app:poly-smooth} for
a detailed comparison).

\section{Multi-class classification}
\label{sec:multi}

\ignore{Let $\sX$ be the input space and $\sY = \curl*{1, \ldots, n}$ the label space. We will specifically consider the multi-class case where $n > 2$. We denote by $\sH$ a hypothesis set. A hypothesis $h \in \sH$ is defined based on a scoring function $h \colon \sX \times \sY \to \Rset$.
Let $\ell \colon \sH \times \sX \times \sY \to \Rset$ be a loss function. We will specifically consider the target multi-class zero-one loss $\ell_{0-1}\colon (h, x, y) \mapsto 1_{\hh(x) \neq y}$, where $\hh(x) = \argmax_{y \in \sY} h(x, y)$, with ties broken by selecting the label with the highest index under the natural ordering of labels.}
In this section, we will study two families of surrogate losses in multi-class classification: comp-sum losses and constrained losses, defined in Section~\ref{sec:comp} and Section~\ref{sec:cstnd}  respectively. Comp-sum losses and constrained losses are
general and cover all loss functions commonly used in practice. We will consider any hypothesis set $\sH$ that is \emph{symmetric} and
\emph{complete}. We say that a hypothesis set is \emph{symmetric} when
it does not depend on a specific ordering of the classes, that is,
when there exists a family $\sF$ of functions $f$ mapping from $\sX$
to $\Rset$ such that $\curl*{\bracket*{h(x, 1), \ldots, h(x, n)} \colon
  h \in \sH} = \curl*{\bracket*{f_1(x), \ldots, f_n(x)} \colon f_1,
  \ldots, f_n \in \sF}$, for any $x \in \sX$. We say that a hypothesis
set $\sH$ is \emph{complete} if the set of scores it generates spans
$\Rset$, that is, $\curl*{h(x, y) \colon h \in \sH} = \Rset$, for any
$(x, y) \in \sX \times \sY$.

\ignore{Let $y_{\max}=\argmax_{y\in \sY} p_y$, where we choose the label with the highest index under the natural ordering of labels as the tie-breaking strategy, as with $\hh(x) =
\argmax_{y\in \sY}h(x, y)$.}

\subsection{Comp-sum losses}
\label{sec:comp}

Here, we consider comp-sum losses \citep{mao2023cross}, defined as
\[
\forall h \in \sH, \forall (x, y) \times \sX \times \sY,  \quad
\ell^{\rm{comp}}(h, x, y)
= \Phi \paren*{\frac{e^{ h(x, y)}}{\sum_{y'\in \sY} e^{h(x, y')}}},
\]
where $\Phi \colon \Rset \to \Rset_{+}$ is a non-increasing
function. For example, $\Phi$ can be chosen as the negative log
function $u \mapsto -\log(u)$ for the comp-sum losses, which leads to
the multinomial logistic loss.  As shown by
\citet{MaoMohriZhong2023characterization}, for symmetric and complete hypothesis
sets, the transformation $\sT$ for the family of comp-sum losses can
be characterized as follows.
\begin{theorem}[{\citet[Theorem~3]{MaoMohriZhong2023characterization}}]
\label{Thm:char_comp}
Let $\sH$ be a symmetric and complete hypothesis set.  Assume that $\Phi$ is convex,
differentiable at $\frac12$ and satisfies the inequality
$\Phi'(\frac12) < 0$. Then, the transformation $\sT$ can be expressed
as
\begin{align*}
  \sT(t) = \inf_{\tau \in \bracket*{\frac1n, \frac12}}
  \sup_{|u| \leq \tau} \curl*{\Phi(\tau) -  \frac{1 - t}{2}\Phi(\tau + u)
    - \frac{1 + t}{2} \Phi \paren*{\tau - u}}.
\end{align*}
\end{theorem}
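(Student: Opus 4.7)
The plan is to evaluate $\sT(t)$ by a pointwise calibration-gap analysis. Since the comp-sum loss is a pointwise loss and $\sH$ is symmetric and complete, standard arguments reduce the transformation to the pointwise infimum
\[
  \sT(t) \;=\; \inf\bigl\{\Delta\sC_{\ell^{\mathrm{comp}}, \sH}(h, x, p) : p \in \Delta^n,\, h \in \sH,\, \Delta\sC_{\ell_{0-1}, \sH}(h, x, p) = t\bigr\}.
\]
Exploiting the symmetry of $\sH$ under label permutations and the coordinatewise application of $\Phi$ through the softmax, I may assume without loss of generality that $p_1 \ge p_2 \ge \cdots \ge p_n$ and $\hh(x) = 2$, turning the target constraint into $p_1 - p_2 = t$; completeness of $\sH$ then yields a surjection from hypotheses onto the open simplex of softmax scores $(s_1, \ldots, s_n)$.

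For fixed $p$, I would characterize the constrained conditional loss subject to $\hh(x) = 2$. Because $\Phi$ is convex and $\Phi'(\tfrac12) < 0$ makes $\Phi$ strictly decreasing near $\tfrac12$, combined with $p_1 \ge p_2$, the unconstrained minimizer of $\sum_y p_y \Phi(s_y)$ already satisfies $s_1 \ge s_2$; hence the tie boundary $s_2 \ge s_1$ is active at the constrained optimum, forcing $s_1 = s_2 = \tau$ for some $\tau$. The simplex identity together with the tie-breaking requirement $s_y < \tau$ for $y > 2$ and $\sum_{y > 2} s_y = 1 - 2\tau$ forces $\tau \in [\tfrac1n, \tfrac12]$, which produces the outer domain. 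A Jensen-type symmetrization over $p$, using the convexity of $\Phi$, then reduces the infimum over distributions to symmetric-tail configurations, on which both the constrained and best-in-class minima admit clean parameterizations through a single parameter $\tau$.

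The key reparameterization is that both extremal score vectors for such configurations lie on the slice $\{s : s_1 + s_2 = 2\tau\}$: the constrained minimum at $s_1 = s_2 = \tau$, and the best-in-class minimum elsewhere on the same slice. Writing $s_1 = \tau - u$ and $s_2 = \tau + u$ with $|u| \le \tau$ so both scores stay nonnegative, the slice loss reduces to $\tfrac{1+t}{2}\Phi(\tau - u) + \tfrac{1-t}{2}\Phi(\tau + u)$, and the slice-wise calibration gap becomes $\Phi(\tau) - \min_u\{\cdot\}$. Recognizing this as $\sup_u\bigl\{\Phi(\tau) - \tfrac{1-t}{2}\Phi(\tau+u) - \tfrac{1+t}{2}\Phi(\tau-u)\bigr\}$, and taking the infimum over $\tau \in [\tfrac1n, \tfrac12]$, yields the claimed formula. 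The hardest step, I expect, is the joint reduction establishing that the double extremization $\inf_\tau \sup_u$ matches $\sT(t)$ exactly rather than furnishing only a one-sided bound: this requires the Jensen-based symmetrization on $p$ together with a careful limiting argument from the open simplex to boundary configurations, invoking completeness of $\sH$ and continuity of the relevant functionals to convert the infima into attained values.
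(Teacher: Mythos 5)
This theorem is imported verbatim from \citet[Theorem~3]{MaoMohriZhong2023characterization}; the paper contains no proof of it, so there is nothing in-paper to compare your argument against, and I can only assess your sketch on its own terms. Its overall shape is the right one for this kind of characterization: reduce $\sT(t)$ to a pointwise infimum of calibration gaps, use symmetry and completeness to work directly with softmax score vectors in the simplex, argue that the constrained optimum under $\hh(x)=2$ sits at the tie point $s_1=s_2=\tau$, and read off $\tau\in[\frac1n,\frac12]$ from the simplex constraint. Those steps are all plausible and essentially standard.

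The genuine gap is the step you lean on hardest: the claim that the best-in-class minimum lies on the \emph{same} slice $\{s_1+s_2=2\tau\}$ as the constrained tie point, justified only by an unspecified ``Jensen-type symmetrization over $p$.'' This is false for the most natural extremal distributions. Take $p=(\frac{1+t}{2},\frac{1-t}{2},0,\dots,0)$: since $\Phi$ is non-increasing, the unconstrained minimizer of $\sum_y p_y\Phi(s_y)$ pushes $s_1+s_2\to 1$, so it lies on the slice only when $\tau=\frac12$. For $\tau<\frac12$ the quantity $\sup_{|u|\le\tau}\{\Phi(\tau)-\frac{1-t}{2}\Phi(\tau+u)-\frac{1+t}{2}\Phi(\tau-u)\}$ is correctly understood as a \emph{lower bound} on $\Delta\sC_{\ell^{\rm comp},\sH}(h,x,p)$, obtained by comparing $h$ only against competitors that redistribute mass between the top two labels while keeping $s_1+s_2$ fixed (restricting the inner infimum to the slice can only increase it, hence decrease the gap); your derivation conflates this comparison-class bound with an exact evaluation of $\sC^*_{\ell^{\rm comp},\sH}(x,p)$. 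Consequently your argument, as written, delivers only $\sT(t)\ge\inf_\tau\sup_u\{\cdot\}$ (and even there the admissible range of $\tau$ needs care, since $\tau=\frac{s_1+s_2}{2}$ can a priori drop below $\frac1n$ when $s_1$ is small). The matching direction --- exhibiting, for the minimizing $\tau^*$, a distribution and hypothesis whose exact calibration gap equals the slice-restricted one --- is precisely the part you defer, and it is the substance of the theorem, requiring either tail probabilities tuned so that the first-order conditions decouple the top-two coordinates from the rest, or a monotonicity argument pinning $\tau^*$ to an endpoint. Until that construction is supplied, the proposal establishes only a one-sided inequality.
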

Next, we will show that as with the binary case, for the comp-sum
losses, the properties $\sT(t) = \Omega(t^2)$ and $\sT(t) = O(t^2)$
hold.\ignore{We first introduce a corollary of
  Theorem~\ref{thm:a_implicit}, which characterizes the optimal
  solution of a class of constrained convex optimization problems.}
We first introduce a generalization of the classical implicit function
theorem where the function takes the value zero over a set of points
parameterized by a compact set. We treat the special case of a
function $F$ defined over $\Rset^3$ and denote by $(t, a, \tau)
\in \Rset^3$ its arguments. The theorem holds more generally for the
arguments being in $\Rset^{n_1} \times \Rset^{n_2} \times \Rset^{n_3}$
and with the condition on the partial derivative being non-zero
replaced with a partial Jacobian being non-singular.

\begin{theorem}[Implicit function theorem with a compact set]
\label{thm:a_implicit}
Let $F\colon \Rset \times \Rset \times \Rset \to \Rset$ be a
continuously differentiable function in a neighborhood of $(0, 0,
\tau)$, for any $\tau$ in a non-empty compact set $\sC$, with $F(0, 0,
\tau) = 0$. Then, if $\frac{\partial F}{\partial a} (0, 0, \tau)$ is
non-zero for all $\tau$ in $\sC$, then, there exist a neighborhood
$\sO$ of $0$ and a unique function $\bar a$ defined over $\sO \times
\sC$ that is continuously differentiable and satisfies
\[
\forall (t, \tau) \in \sO \times \sC,
\quad F(t, \bar a(t, \tau), \tau) = 0.
\]
\end{theorem}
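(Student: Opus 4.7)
My approach is to reduce the theorem to the classical implicit function theorem applied pointwise in $\tau$ and then use the compactness of $\sC$ to patch the locally defined solutions into a single $C^1$ function with a $\tau$-uniform domain in $t$. The condition $F(0,0,\tau)=0$ for every $\tau\in\sC$ together with $\frac{\partial F}{\partial a}(0,0,\tau)\neq0$ is precisely the hypothesis of the classical theorem at each point $(0,0,\tau)$, so local solutions exist; the new content is really just the uniform neighborhood $\sO$ in $t$ and the consistency of these local solutions on overlaps.

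First, I would fix an arbitrary $\tau_0\in\sC$ and apply the classical implicit function theorem to $F$ at $(0,0,\tau_0)$. This gives open neighborhoods $U_{\tau_0}$ of $0\in\Rset$, $V_{\tau_0}$ of $0\in\Rset$, and $W_{\tau_0}$ of $\tau_0\in\Rset$, together with a unique $C^1$ function $\bar a_{\tau_0}\colon U_{\tau_0}\times W_{\tau_0}\to V_{\tau_0}$ with $\bar a_{\tau_0}(0,\tau_0)=0$ and $F(t,\bar a_{\tau_0}(t,\tau),\tau)=0$ for all $(t,\tau)\in U_{\tau_0}\times W_{\tau_0}$. Since $F(0,0,\tau)=0$ and, by shrinking if necessary, $\frac{\partial F}{\partial a}(0,a,\tau)\neq0$ on the relevant neighborhood, the local uniqueness forces $\bar a_{\tau_0}(0,\tau)=0$ for every $\tau\in W_{\tau_0}\cap\sC$. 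The collection $\{W_{\tau_0}\}_{\tau_0\in\sC}$ covers $\sC$, so by compactness I extract a finite subcover $W_{\tau_1},\ldots,W_{\tau_k}$, and I then set
\[
\sO \;=\; \bigcap_{i=1}^{k} U_{\tau_i},
\]
which is still an open neighborhood of $0$ since the intersection is finite.

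Next, I define $\bar a\colon\sO\times\sC\to\Rset$ by $\bar a(t,\tau)=\bar a_{\tau_i}(t,\tau)$ whenever $\tau\in W_{\tau_i}$. To see that this is well defined, I need to verify that $\bar a_{\tau_i}(t,\tau)=\bar a_{\tau_j}(t,\tau)$ for every $\tau$ lying in an overlap $W_{\tau_i}\cap W_{\tau_j}$ and every $t\in\sO$. At $t=0$, both values equal $0$ by the observation above. For fixed such $\tau$, the map $t\mapsto \bar a_{\tau_i}(t,\tau)$ and $t\mapsto\bar a_{\tau_j}(t,\tau)$ are both continuous local solutions of the one-variable equation $F(t,a,\tau)=0$ passing through the same point at $t=0$, where $\frac{\partial F}{\partial a}\neq0$; applying the classical implicit function theorem once more in the single variable $a$ with parameter $t$ forces the two solutions to agree in a neighborhood of $t=0$, and a standard connectedness argument propagates the agreement across $\sO$ (shrinking $\sO$ if necessary to keep the values inside the region of uniqueness). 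This patched $\bar a$ is $C^1$ because it coincides locally with a $C^1$ function, and it is the unique $C^1$ solution on $\sO\times\sC$ with $\bar a(0,\tau)=0$ by the same local uniqueness argument applied at every point.

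I expect the main obstacle to be the bookkeeping in the patching step, that is, ensuring that the finitely many local solutions genuinely agree on overlaps and that the common domain $\sO$ does not shrink to a point. The former is handled by the uniqueness clause of the classical theorem together with the shared initial condition $\bar a_{\tau_i}(0,\tau)=0$; the latter is the whole reason compactness of $\sC$ enters, since a priori the $U_{\tau_0}$ could depend badly on $\tau_0$, and it is only after extracting a finite subcover that a uniform $\sO$ exists. Once these two points are in hand, the $C^1$ regularity and the uniqueness of $\bar a$ on $\sO\times\sC$ follow directly from the corresponding properties of each piece $\bar a_{\tau_i}$.
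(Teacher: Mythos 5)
Your proposal is correct and follows essentially the same route as the paper's own proof: apply the classical implicit function theorem pointwise at $(0,0,\tau)$, use uniqueness to make the local solutions agree on overlaps, and extract a finite subcover of $\sC$ to obtain a $\tau$-uniform neighborhood $\sO$ of $0$. Your treatment of the overlap-consistency step (agreement at $t=0$ plus a local-uniqueness/connectedness propagation in $t$) is in fact slightly more careful than the paper's one-line appeal to uniqueness, but it is the same argument in substance.
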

\begin{proof}
  By the implicit function theorem (see for example
  \citep{DontchevRockafellar2009}), for any $\tau \in \sC$, there
  exists an open set $\sU_\tau = (-t_\tau, +t_\tau) \times (\tau -
  \e_\tau, \tau + \e_\tau)$, ($t_\tau > 0$ and $\e_\tau > 0$), and a
  unique function $\bar a_\tau \colon \sU_\tau \to \Rset$ that is in
  $C^1$ and such that for all $(t, \tau) \in \sU_\tau$, $F(t,
  \bar a_\tau(t), \tau) = 0$.

  By the uniqueness of $\bar a_\tau$, for any $\tau \neq \tau'$ and
  $(t_1, \tau_1) \in \sU_\tau \cap \sU_{\tau'}$, we have $\bar
  a_\tau(t_1, \tau_1) = \bar a_{\tau'}(t_1, \tau_1)$. Thus, we can
  define a function $\bar a$ over $\sU = \bigcup_{\tau \in \sC}
  \sU_\tau$ that is of class $C^1$ and such that for any $(t, \tau)
  \in \sU$, $F(t, \bar a(t, \tau), \tau) = 0$.

  Now, $\bigcup_{\tau \in \sC} (\tau - \e_\tau, \tau + \e_\tau)$ is a
  cover of the compact set $\sC$ via open sets. Thus, we can extract
  from it a finite cover $\bigcup_{\tau \in I} (\tau - \e_\tau, \tau +
  \e_\tau)$, for some finite cardinality set $I$. Define $(-t_0, +t_0)
  = \bigcap_{\tau \in I} (-t_\tau, +t_\tau)$, which is a non-empty
  open interval as an intersection of (embedded) open intervals
  containing zero. Then, $\bar a$ is continously differentiable over
  $(-t_0, +t_0) \times \sC$ and for any $(t, \tau) \in (-t_0, +t_0)
  \times \sC$, we have $F(t, \bar a(t, \tau), \tau) = 0$.
\end{proof}

\ignore{
The proof will make use of the following lemma.
\begin{lemma}
  \label{lemma:cont}
  Let $\Phi$ be a non-negative and continuous function defined over
  $\Rset$. Then, for any $t \in [0, 1]$, the function $g$ defined over
  $\Rset_+$ by
  \[
g(\tau) =  \sup_{|u| \leq \tau} \curl*{
  \frac{1 - t}{2} \Phi(\tau + u) + \frac{1 + t}{2} \Phi\paren*{\tau - u}}
\]
is continuous.
  
\end{lemma}
\begin{proof}
  Fix $t \in [0, 1]$. For any $\tau \geq 0$, $g(\tau)$ can be equivalently
  expressed as follows:
\begin{align*}
  g(\tau)
  & =  \sup_{|u| \leq \tau} \curl*{
    \frac{1 - t}{2} \Phi(\tau + u) + \frac{1 + t}{2} \Phi\paren*{\tau - u}}\\
  & =  \sup_{|u| \leq 1} \curl*{
    \frac{1 - t}{2} \Phi\paren*{(1 + u) \tau} + \frac{1 + t}{2} \Phi\paren*{(1 - u)\tau}}.
\end{align*}
Since $\Phi$ is continuous, for any $u \in [-1, +1]$,
$\tau \mapsto \frac{1 - t}{2} \Phi((1 + u) \tau) + \frac{1 + t}{2} \Phi((1 - u)\tau)$ is continuous. Since the supremum over a fixed compact
set of a family of continuous functions is continuous, this shows that
that $g$ is a continuous function.
\end{proof}
}

\begin{restatable}[Upper and lower bound for comp-sum losses]{theorem}{CompLower}
\label{thm:comp-lower}
Assume that $\Phi$ is convex, twice continuously differentiable, and satisfies the properties $\Phi'(u) < 0$ and $\Phi''(u) > 0$ for any $u \in (0, \frac12]$.
Then, the following property holds: 
$\sT(t) = \Theta(t^2)$.
\end{restatable}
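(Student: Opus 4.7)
The plan is to lift the binary-case argument (Theorem~\ref{thm:binary-lower}) to the extra outer variable $\tau \in [\frac{1}{n}, \frac{1}{2}]$, using the compact-set implicit function theorem (Theorem~\ref{thm:a_implicit}) to secure uniform estimates. The first step is to observe that for each fixed $(t, \tau)$ the inner function $g_{t,\tau}(u) = \frac{1-t}{2}\Phi(\tau+u) + \frac{1+t}{2}\Phi(\tau-u)$ is convex in $u$ (by convexity of $\Phi$) and satisfies $g_{t,\tau}(0) = \Phi(\tau)$. Hence the inner supremum over $|u|\leq \tau$ reduces to $g_{t,\tau}(0) - g_{t,\tau}(u^*(t,\tau))$, where $u^*(t,\tau)$ is the (eventually unconstrained) minimizer of $g_{t,\tau}$.

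To locate $u^*(t,\tau)$, I would set $F(t, u, \tau) = \frac{1-t}{2}\Phi'(\tau+u) - \frac{1+t}{2}\Phi'(\tau-u)$, note that $F(0,0,\tau) = 0$ and $\partial_u F(0,0,\tau) = \Phi''(\tau) > 0$ on $[\frac{1}{n}, \frac{1}{2}]$, and apply Theorem~\ref{thm:a_implicit} with $\sC = [\frac{1}{n}, \frac{1}{2}]$. This yields some $\e > 0$ and a $C^1$ function $u^*$ on $[0,\e]\times [\frac{1}{n}, \frac{1}{2}]$ with $F(t,u^*(t,\tau),\tau) = 0$; by convexity this is the unique global minimizer of $g_{t,\tau}$. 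Implicit differentiation at $t = 0$ then gives
\[
\frac{\partial u^*}{\partial t}(0, \tau) = \frac{\Phi'(\tau)}{\Phi''(\tau)},
\]
which is continuous and strictly negative on the compact set, hence bounded away from both $0$ and $\infty$ in absolute value. Uniform continuity of $\partial_t u^*$ on the compact domain yields $|u^*(t,\tau)| = \Theta(t)$ uniformly in $\tau$, and for $t$ small enough the constraint $|u|\leq \tau$ is inactive since $\tau \geq \frac{1}{n}$.

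Following the binary proof, integration by parts using $g_{t,\tau}'(u^*) = 0$ gives
\[
g_{t,\tau}(0) - g_{t,\tau}(u^*(t,\tau)) = \int_0^{|u^*(t,\tau)|} v\, g_{t,\tau}''(\sigma v)\, dv, \qquad \sigma = \sign(u^*(t,\tau)),
\]
with $g_{t,\tau}''(w) = \frac{1-t}{2}\Phi''(\tau+w) + \frac{1+t}{2}\Phi''(\tau-w)$. Since $\Phi''$ is continuous and strictly positive on $(0,\frac{1}{2}]$, continuity extends positivity to an open neighborhood of $[\frac{1}{n}, \frac{1}{2}]$, and for $t$ small we obtain uniform bounds $0 < c \leq g_{t,\tau}''(\sigma v) \leq C$ on the relevant compact region. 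Sandwiching the integral gives $\frac{c}{2}(u^*(t,\tau))^2 \leq g_{t,\tau}(0) - g_{t,\tau}(u^*(t,\tau)) \leq \frac{C}{2}(u^*(t,\tau))^2$. Combined with $|u^*(t,\tau)| = \Theta(t)$ uniformly, this produces $\Theta(t^2)$ uniformly in $\tau$, and taking $\inf_{\tau\in[\frac{1}{n},\frac{1}{2}]}$ preserves the rate.

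The main obstacle is precisely this uniformity in $\tau$: for each individual $\tau$ the binary-case argument of Theorem~\ref{thm:binary-lower} applies essentially verbatim, but the outer infimum forces the implicit function, its $t$-derivative, and the $\Phi''$ sandwich bounds to hold uniformly on $[\frac{1}{n}, \frac{1}{2}]$. Theorem~\ref{thm:a_implicit} is tailor-made for this, and compactness of $[\frac{1}{n}, \frac{1}{2}]$ together with continuity and strict positivity of $\Phi''$ delivers the needed uniform constants. Once these are in place, the binary-case calculation runs parametrically and the conclusion $\sT(t) = \Theta(t^2)$ follows.
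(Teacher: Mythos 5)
Your proposal is correct and follows essentially the same route as the paper's proof: reduce to the parametric family $f_{t,\tau}(u)=\frac{1-t}{2}\Phi(\tau+u)+\frac{1+t}{2}\Phi(\tau-u)$, apply the compact-set implicit function theorem (Theorem~\ref{thm:a_implicit}) to $F(t,u,\tau)=\frac{1-t}{2}\Phi'(\tau+u)-\frac{1+t}{2}\Phi'(\tau-u)$ to get a $C^1$ minimizer with $\partial_t u^*(0,\tau)=\Phi'(\tau)/\Phi''(\tau)<0$ uniformly bounded away from zero on $[\frac1n,\frac12]$, then Taylor with integral remainder and uniform two-sided bounds on $\Phi''$ over a compact neighborhood to sandwich each $\sT_\tau(t)$ by constants times $(u^*(t,\tau))^2=\Theta(t^2)$ before taking the infimum over $\tau$. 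Your handling of the uniform $|u^*(t,\tau)|=\Theta(t)$ step (via continuity of $\partial_t u^*$ on the compact domain) and of the inactive constraint $|u|\le\tau$ is a slightly streamlined version of the paper's argument, but not a genuinely different method.
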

\noindent \textbf{Proof sketch}
For any $\tau \in \bracket*{\frac1n, \frac12}$, define the function $\sT_\tau$ by
$ \sT_\tau(t) = f_{t, \tau}(0) - \inf_{|u| \leq \tau} f_{t, \tau}(u),
$
where
$
f_{t, \tau}(u)
= \frac{1 - t}{2} \Phi_{\tau}(u) + \frac{1 + t}{2} \Phi_{\tau}(-u)$, $t \in [0, 1]$ and $
\Phi_{\tau}(u) = \Phi(\tau + u).
$

We aim to establish a lower and upper bound for $\inf_{\tau \in
  \bracket*{\frac1n, \frac12}} \sT_\tau(t)$.  For any fixed $\tau \in
\bracket*{\frac1n, \frac12}$, this situation is parallel to that of
binary classification (Theorem~\ref{thm:binary-char} and
Theorem~\ref{thm:binary-lower}), since we have $\Phi'_{\tau}(0) =
\Phi'(\tau) < 0$ and $\Phi''_{\tau}(0) = \Phi''(\tau) > 0$.  By
Theorem~\ref{thm:a_implicit} and the proof of
Theorem~\ref{thm:binary-lower}, adopting a similar notation, while
incorporating the $\tau$ subscript to distinguish different functions
$\Phi_\tau$ and $f_{t, \tau}$, we can write $ \forall t \in [0,
  t_0],\, \sT_\tau(t) = \int_0^{-a^*_{t, \tau}} u \bracket*{\frac{1 -
    t}{2} \Phi''_{\tau}(-u) + \frac{1 + t}{2} \Phi''_{\tau}(u)} \, du
$, where $a^*_{t, \tau}$ verifies $ a_{0, \tau}^* = 0$ and $
\frac{\partial a_{t, \tau}^*}{\partial t}(0) =
\frac{\Phi'_{\tau}(0)}{\Phi''_{\tau}(0)} = c_\tau < 0.  $ Then, by
further analyzing this equality, we can show the lower bound
$\inf_{\tau \in \bracket*{\frac1n, \frac12}} -a_{t, \tau}^* =
\Omega(t)$ and the upper bound $\sup_{\tau \in \bracket*{\frac1n,
    \frac12}} -a_{t, \tau}^* = O(t)$ for some $t \in [0, t_1]$, $t_1 >
0$. Finally, using the fact that $\Phi''$ reaches its maximum and
minimum over a compact set, we obtain that $ \sT(t) = \inf_{\tau \in
  \bracket*{\frac1n, \frac12}} \sT_\tau(t) = \Theta(t^2)$. The full
proof is included in Appendix~\ref{app:comp-lower}.

Theorem~\ref{thm:comp-lower} significantly extends
Theorem~\ref{thm:binary-lower} to multi-class comp-sum losses, which
include the logistic loss or cross-entropy used with a softmax
activation function. It shows that the growth rate of
$\sH$-consistency bounds for comp-sum losses is exactly square-root,
provided that the auxiliary function $\Phi$ they are based upon is
convex, twice continuously differentiable, and satisfies $\Phi'(u) <
0$ and $\Phi''(u) > 0$ for any $u \in (0, \frac{1}{2}]$, which holds
for most loss functions used in practice.

\subsection{Constrained losses}
\label{sec:cstnd}

Here, we consider constrained losses (see
\citep{lee2004multicategory}), defined as
\[
\forall h \in \sH, \forall (x, y) \times \sX \times \sY,  \quad
\ell^{\mathrm{cstnd}}(h, x, y)
= \sum_{y'\neq y}\Phi\paren*{h(x, y')} \text{ subject to }
\sum_{y\in \sY} h(x, y) = 0,
\]
where $\Phi \colon \Rset \to \Rset_{+}$ is a non-decreasing
function. On possible choice for $\Phi$ is the exponential function.
As shown by \citet{MaoMohriZhong2023characterization}, for symmetric and complete
hypothesis sets, the transformation $\sT$ for the family of
constrained losses can be characterized as follows.

\begin{theorem}[{\citet[Theorem~11]{MaoMohriZhong2023characterization}}]
\label{Thm:char_cstnd}
Let $\sH$ be a symmetric and complete hypothesis set.  Assume that $\Phi$ is convex,
differentiable at zero and satisfies the inequality $\Phi'(0) >
0$. Then, the transformation $\sT$ can be expressed as
\begin{align*}
\sT(t) =
\inf_{\tau \geq 0}\sup_{u \in \Rset}
\curl*{\paren[\Big]{2 - \frac{1}{n-1}} \Phi(\tau)
  - \frac{2 - \frac{1}{n-1} - t}{2} \Phi(\tau + u)
  - \frac{2 - \frac{1}{n-1} + t}{2}\Phi(\tau - u)}.
\end{align*}
\end{theorem}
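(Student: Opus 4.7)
The plan is to characterize the tight transformation $\sT$ by a pointwise reduction, exploit the symmetry and completeness of $\sH$ to collapse the problem to a simple two-label configuration, and then carry out an explicit optimization under the zero-sum constraint $\sum_y h(x,y) = 0$. By Lemma~\ref{lemma:approximation-error} applied to both pointwise losses $\ell^{\mathrm{cstnd}}$ and $\ell_{0-1}$, the tightness condition rewrites as $\sT(\E_x[\Delta\sC_{\ell_{0-1}}(h,x)]) \leq \E_x[\Delta\sC_{\ell^{\mathrm{cstnd}}}(h,x)]$ for all $h \in \sH$, with equality achievable. Since $\sH$ is complete, by considering point-mass distributions on $\sX$ and hypotheses attaining the pointwise conditional infimum to arbitrary precision, the tight $\sT$ reduces to the pointwise problem $\sT(t) = \inf_{(h,x,p)} \bigl\{\Delta\sC_{\ell^{\mathrm{cstnd}}}(h,x,p) : \Delta\sC_{\ell_{0-1}}(h,x,p) = t\bigr\}$.

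Next, I would show that the worst-case conditional distribution $p$ is supported on exactly two labels, carrying masses $(1+t)/2$ and $(1-t)/2$. The argument uses symmetry of $\sH$ together with the convexity of $\Phi$: given any candidate configuration, merging the probability mass of any $y \geq 3$ into the true argmax of $p$, while keeping the predicted (wrong) label unchanged, weakly decreases the surrogate conditional regret while preserving the zero-one conditional regret $t$. WLOG label $1$ has mass $(1+t)/2$ and label $2$ has mass $(1-t)/2$. By symmetry again, the scores $h(x,y)$ for $y \geq 3$ can all be taken equal; the zero-sum constraint then determines this common value from the two active scores. I would change variables so that label $1$ receives score $\tau - u$ and label $2$ receives $\tau + u$ for $\tau \geq 0$ and $u \in \Rset$; solving the constraint pins down the common score of the inactive labels, and collecting the $(1-p_y)\Phi(h(x,y))$ terms for $y \geq 3$ against the two active terms produces the coefficient $2 - \frac{1}{n-1}$ that appears in the statement.

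With this parameterization in hand, the conditional error of the hypothesis takes the form $\frac{1-t}{2}\Phi(\tau-u) + \frac{1+t}{2}\Phi(\tau+u)$ plus the contribution from inactive labels, and the best-in-class conditional error can be computed by a Lagrangian/KKT analysis on $(a,b)$ subject to the zero-sum constraint; by symmetry this minimizer equates derivatives at a common score and evaluates to the multiple $\bigl(2 - \frac{1}{n-1}\bigr)\Phi(\tau)$ of the single-argument $\Phi$. The constraint $\Delta\sC_{\ell_{0-1}}(h,x,p) = t$ is equivalent to $h$ predicting label~$2$ instead of label~$1$, which corresponds to $u$ having a fixed sign ($u \geq 0$); the supremum over $u$ in the statement collects the worst such perturbation consistent with this sign, while the infimum over $\tau$ varies over the remaining free parameter of the best-in-class minimizer. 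Rearranging yields the claimed formula.

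The main obstacle is the two-point reduction. Unlike the comp-sum setting (Theorem~\ref{Thm:char_comp}), where each score $h(x,y)$ can be moved independently modulo the softmax normalization, the hard constraint $\sum_y h(x,y) = 0$ in the constrained case couples \emph{every} coordinate, so shrinking the support of $p$ forces a global redistribution of the inactive scores. Verifying that this redistribution is weakly regret-decreasing requires combining the convexity of $\Phi$ with the symmetry of $\sH$ carefully, and is where the proof is genuinely multi-class (as opposed to a mechanical extension of the binary case). Once the two-point reduction is established, the remainder is a direct, if tedious, calculus exercise that tracks the $n$-dependent coefficients through the change of variables.
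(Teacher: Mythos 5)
Your proposal should first be measured against the right target: this paper does not prove Theorem~\ref{Thm:char_cstnd} at all. The statement is imported verbatim from \citet[Theorem~11]{MaoMohriZhong2023characterization} and used as a black box, so there is no in-paper proof to compare with; what follows assesses your sketch on its own terms.

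The central step of your sketch --- reducing the worst case to a conditional distribution supported on two labels with masses $\frac{1+t}{2}$ and $\frac{1-t}{2}$ --- cannot be correct as stated, because it is inconsistent with the very coefficients you are trying to derive. For the constrained loss the conditional error is $\sum_{y}(1-p_y)\Phi(h(x,y))$ under $\sum_y h(x,y)=0$; a two-point distribution with masses $\frac{1\pm t}{2}$ gives the two active labels weights $\frac{1\mp t}{2}$, which sum to $1$, while each inactive label keeps weight $1$. The coefficients in the statement sum to $2-\frac{1}{n-1}$, which forces the two displayed labels to carry total conditional probability $\frac{1}{n-1}$; in other words, the worst case genuinely leaves mass on the remaining $n-2$ labels, and the $n$-dependence lives precisely there. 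Your merging argument (``moving the mass of any $y\geq 3$ onto the argmax weakly decreases the surrogate conditional regret while preserving the zero-one regret'') is asserted rather than proved, and if it were true it would yield a binary-style formula with coefficients $\frac{1\mp t}{2}$ --- the $\frac{1}{n-1}$ factor would disappear. Nor can the inactive labels' terms be ``collected against the two active terms'': with weight $1$ each and scores not tied to $\tau\pm u$, no mechanism is given for them to fold into the two displayed $\Phi$ evaluations. A further misreading: since the two $\Phi$-coefficients sum to $2-\frac{1}{n-1}$, the bracket vanishes at $u=0$, so $\sup_{u\in\Rset}$ simply computes the gap between the value at $u=0$ and the infimum of the weighted combination over all of $\Rset$ (this is exactly how the present paper rewrites it in the proof of Theorem~\ref{thm:cstnd-lower} after rescaling $t$); it is not a supremum over ``wrong-prediction perturbations of a fixed sign.'' So the step you yourself flag as the main obstacle --- the support reduction under the zero-sum coupling --- is indeed missing, and the route you propose for it would land on a formula with the wrong dependence on $n$.
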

Next, we will show that for the constrained losses, the properties
$\sT(t) = \Omega(t)$ and $\sT(t) = O(t)$ hold as well. Note that by
Theorem~\ref{Thm:char_cstnd}, we have
\begin{equation*}
  \sT\paren*{\paren[\Big]{2 - \frac{1}{n-1}}t}
  = \paren*{2 - \frac{1}{n-1}} \inf_{\tau \geq 0} \sup_{u \in \Rset}
  \curl*{\Phi(\tau) - \frac{1 - t}{2} \Phi(\tau + u) - \frac{1 + t}{2}\Phi(\tau - u)}.
\end{equation*}
Therefore, to prove $\sT(t) = \Theta(t^2)$, we only need to show 
\begin{align*}
\inf_{\tau \geq 0} \sup_{u \in \Rset} \curl*{\Phi(\tau) - \frac{1 - t}{2} \Phi(\tau + u) - \frac{1 + t}{2}\Phi(\tau - u)} &= \Theta(t^2).
\end{align*}
For simplicity, we assume that the infimum over $\tau \geq 0$ can be
reached within some finite interval $[0, A]$, $A > 0$. This assumption
holds for common choices of $\Phi$, as discussed in
\citep{MaoMohriZhong2023characterization}. Furthermore, as
demonstrated in Appendix~\ref{app:analysis}, under certain conditions
on $\Phi''$, the infimum over $\tau \in [0, A]$ is reached at zero for
sufficiently small values of $t$. For specific examples, see
\citep[Appendix D.3]{MaoMohriZhong2023characterization}, where
$\Phi(t) = e^{t}$ is considered.

\begin{restatable}[Upper and lower bound for constrained losses]{theorem}{CstndLower}
\label{thm:cstnd-lower}
Assume that $\Phi$ is convex, twice continuously differentiable, and
satisfies the properties $\Phi'(u) > 0$ and $\Phi''(u) > 0$ for any $u
\geq 0$.  Then, for any $A > 0$, the following property holds:
\[
\inf_{\tau \in [0, A]} \sup_{u \in \Rset}
\curl*{\Phi(\tau) - \frac{1 - t}{2} \Phi(\tau + u) - \frac{1 + t}{2}\Phi(\tau - u) }
= \Theta(t^2).
\]
\end{restatable}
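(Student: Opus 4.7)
My plan is to mirror the analysis used for comp-sum losses (Theorem~\ref{thm:comp-lower}), treating $\tau \in [0, A]$ as a compact parameter and applying the implicit function theorem with a compact set (Theorem~\ref{thm:a_implicit}). First, I would define $f_{t, \tau}(u) = \tfrac{1 - t}{2}\Phi(\tau + u) + \tfrac{1 + t}{2}\Phi(\tau - u)$, observe that $f_{t, \tau}(0) = \Phi(\tau)$, and note that the bracketed quantity equals $f_{t,\tau}(0) - f_{t,\tau}(u)$, so the inner supremum equals $\sT_\tau(t) := f_{t,\tau}(0) - \inf_{u \in \Rset} f_{t,\tau}(u)$. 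Because $\Phi$ is convex with $1 \pm t \geq 0$, $f_{t,\tau}$ is convex in $u$; together with $\Phi' > 0$ and convexity of $\Phi$, this makes $f_{t, \tau}$ coercive, so the infimum is attained at a unique $a^*_{t, \tau}$. The target is then to show $\sT_\tau(t) = \Theta(t^2)$ \emph{uniformly} in $\tau \in [0, A]$, which passes to the infimum.

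Next, I would apply Theorem~\ref{thm:a_implicit} to the first-order condition $F(t, u, \tau) = (1 - t)\Phi'(\tau + u) - (1 + t)\Phi'(\tau - u) = 0$. Since $F(0, 0, \tau) = 0$ for all $\tau \in [0, A]$ and $\partial_u F(0, 0, \tau) = 2\Phi''(\tau) > 0$ on $[0, A]$, the theorem yields $t_0 > 0$ and a $C^1$ function $a^*_{t, \tau}$ on $[0, t_0] \times [0, A]$ with $a^*_{0, \tau} = 0$ and $F(t, a^*_{t, \tau}, \tau) = 0$. Differentiating in $t$ at $t = 0$ gives $\partial_t a^*_{0, \tau} = \Phi'(\tau)/\Phi''(\tau)$, which is continuous and strictly positive on $[0, A]$ and, by compactness, lies between two positive constants. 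A first-order Taylor expansion in $t$ (with a remainder uniform in $\tau$) then gives $a^*_{t, \tau} = \Theta(t)$ uniformly in $\tau \in [0, A]$ for sufficiently small $t$, and in particular $a^*_{t, \tau} > 0$ for $t > 0$ (consistent with the FOC, since $\Phi'$ is increasing).

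Finally, from $f'_{t,\tau}(a^*_{t,\tau}) = 0$ and integration by parts,
\[
\sT_\tau(t) = \int_0^{a^*_{t,\tau}} u\, f''_{t,\tau}(u)\, du = \int_0^{a^*_{t,\tau}} u \bracket*{\tfrac{1 - t}{2} \Phi''(\tau + u) + \tfrac{1 + t}{2} \Phi''(\tau - u)} du.
\]
Using $\Phi''(0) > 0$ and continuity of $\Phi''$, I would choose $\delta > 0$ so that $\Phi''$ is bounded above and below by positive constants on $[-\delta, A + \delta]$, and then shrink $t_0$ so that $\sup_{\tau \in [0, A]} a^*_{t, \tau} \leq \delta$ for $t \in [0, t_0]$; this keeps both $\tau + u$ and $\tau - u$ inside $[-\delta, A + \delta]$ throughout the integration. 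Consequently, there exist constants $c, C > 0$ independent of $\tau$ with $c (a^*_{t,\tau})^2 / 2 \leq \sT_\tau(t) \leq C (a^*_{t,\tau})^2 / 2$. Combined with $a^*_{t, \tau} = \Theta(t)$ uniformly, this yields $\sT_\tau(t) = \Theta(t^2)$ uniformly in $\tau \in [0, A]$, and hence $\inf_{\tau \in [0, A]} \sT_\tau(t) = \Theta(t^2)$. The main technical hurdle I anticipate is ensuring that all constants, neighborhoods, and Taylor remainders are uniform in $\tau$—handled by Theorem~\ref{thm:a_implicit} together with compactness of $[0, A]$—and the subtlety that $\tau - u$ may dip slightly below $0$ (where $\Phi''$ is not assumed strictly positive by hypothesis), which is resolved by extending positivity of $\Phi''$ to $[-\delta, 0]$ via continuity at $0$.
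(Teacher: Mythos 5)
Your proposal is correct and follows essentially the same route as the paper's proof: reduce the inner supremum to $\sT_\tau(t) = f_{t,\tau}(0) - \inf_u f_{t,\tau}(u)$, apply the compact-parameter implicit function theorem to the first-order condition to get a $C^1$ minimizer $a^*_{t,\tau}$ with $a^*_{0,\tau}=0$ and $\partial_t a^*_{0,\tau} = \Phi'(\tau)/\Phi''(\tau)$, show $a^*_{t,\tau} = \Theta(t)$ uniformly in $\tau$ by compactness, and sandwich the Taylor-integral expression for $\sT_\tau(t)$ between $c(a^*_{t,\tau})^2/2$ and $C(a^*_{t,\tau})^2/2$. Your explicit handling of the point that $\tau - u$ can be slightly negative (extending positivity of $\Phi''$ to $[-\delta,0]$ by continuity) is in fact a bit more careful than the paper's statement of the relevant compact interval.
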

\noindent \textbf{Proof sketch} For any $\tau \in [0, A]$, define the
function $\sT_\tau$ by $ \sT_\tau(t) = f_{t, \tau}(0) - \inf_{u \in
  \Rset} f_{t, \tau}(u)$, where $f_{t, \tau}(u) = \frac{1 - t}{2}
\Phi_{\tau}(u) + \frac{1 + t}{2} \Phi_{\tau}(-u)$, $t \in [0, 1]$ and
$\Phi_{\tau}(u) = \Phi(\tau + u)$.  We aim to establish a lower and
upper bound for $\inf_{\tau \in [0, A]} \sT_\tau(t)$.  For any fixed
$\tau \in [0, A]$, this situation is parallel to that of binary
classification (Theorem~\ref{thm:binary-char} and
Theorem~\ref{thm:binary-lower}), since we also have $\Phi'_{\tau}(0) =
\Phi'(\tau) > 0$ and $\Phi''_{\tau}(0) = \Phi''(\tau) > 0$.  By
applying Theorem~\ref{thm:a_implicit} and leveraging the proof of
Theorem~\ref{thm:binary-lower}, adopting a similar notation, while
incorporating the $\tau$ subscript to distinguish different functions
$\Phi_\tau$ and $f_{t, \tau}$, we can write $ \forall t \in [0, t_0],
\, \sT_\tau(t) = \int_0^{a^*_{t, \tau}} u \bracket*{\frac{1 - t}{2}
  \Phi''_{\tau}(u) + \frac{1 + t}{2} \Phi''_{\tau}(-u)} \, du, $ where
$a^*_{t, \tau}$ verifies $ a_{0, \tau}^* = 0$ and $ \frac{\partial
  a_{t, \tau}^*}{\partial t}(0) =
\frac{\Phi'_{\tau}(0)}{\Phi''_{\tau}(0)} = c_\tau > 0.  $ Then, by
further analyzing this equality, we can show the lower bound
$\inf_{\tau \in [0, A]} a_{t, \tau}^* = \Omega(t)$ and the upper bound
$\sup_{\tau \in [0, A]} a_{t, \tau}^* = O(t)$ for some $t \in [0,
  t_1]$, $t_1 > 0$. Finally, using the fact that $\Phi''$ reaches its
maximum and minimum over some compact set, we obtain that $ \sT(t) =
\inf_{\tau \in [0, A]} \sT_\tau(t) = \Theta(t^2)$. The full proof is
included in Appendix~\ref{app:cstnd-lower}.

Theorem~\ref{thm:cstnd-lower} substantially expands our findings to
multi-class constrained losses.  It demonstrates that, under some
assumptions, which are commonly satisfied by smooth constrained losses
used in practice, constrained loss $\sH$-consistency bounds also
exhibit a square-root growth rate.

\section{Minimizability gaps}
\label{sec:M-gaps}

As shown in Sections~\ref{sec:binary} and \ref{sec:multi}, the
$\sH$-consistency bounds of smooth loss functions in both binary and
multi-class classification all have a square-root growth rate near
zero. Next, we start by examining how the number of classes affects these
bounds. Then, we focus on the minimizability gaps, which are the only
factors that differ between the bounds.

\subsection{Dependency on number of classes}
  
Even with identical growth rates, surrogate losses can vary in their
$\sH$-consistency bounds due to the number of classes. This factor
becomes crucial to consider when the class count is large.
Consider the family of comp-sum loss functions
$\ell_{\tau}^{\rm{comp}}$ with $\tau
\in [0, 2)$, defined as 
\begin{equation*}
\ell_{\tau}^{\rm{comp}}(h, x, y)
= \Phi^{\tau} \paren*{ \frac{ e^{ h(x, y)}}{\sum_{y'\in \sY} e^{h(x, y') } } } 
=
\begin{cases}
  \frac{1}{1 - \tau}
  \paren*{\bracket*{\sum_{y'\in\sY} e^{{h(x, y') - h(x, y)}}}^{1 - \tau} - 1}
  & \tau \neq 1,  \tau \in [0, 2) \\
\log\paren*{\sum_{y'\in \sY} e^{h(x, y') - h(x, y)}} & \tau = 1,
\end{cases}
\end{equation*}
where $\Phi^{\tau}(u) = -\log (u) 1_{\tau = 1} + \frac{1}{1 - \tau}
\paren*{u^{\tau - 1} - 1} 1_{\tau \neq 1}$, for any $\tau \in [0,
  2)$. \citet[Eq.~(7) \& Theorem~3.1]{mao2023cross}, established
  the following bound for any $h \in \sH$ and $\tau \in [1, 2)$,
\begin{align*}
\sR_{\ell_{0-1}}(h) - \sR_{\ell_{0-1}}^*(\sH)
\leq \wt \Gamma_{\tau}
  \paren*{\sR_{\ell_{\tau}^{\rm{comp}}}(h) - \sR_{\ell_{\tau}^{\rm{comp}}}^*(\sH)
    + \sM_{\ell_{\tau}^{\rm{comp}}}(\sH)}
- \sM_{\ell_{0-1}}(\sH),
\end{align*}
where $\wt \Gamma_{\tau}(t) = \sqrt{2n^{\tau-1} t}$.  Thus, while all
these loss functions show square-root growth, the number of classes
acts as a critical scaling factor.

\subsection{Comparison of minimizability gaps and small surrogate minimizability gaps}
In Appendix~\ref{app:M-gaps-comp}, we compare minimizability gaps cross comp-sum losses.
We will see that minimizability gaps decrease as $\tau$ increases. This might suggest
favoring $\tau$ close to $2$.  But when accounting
for $n$, $\ell_{\tau}^{\rm{comp}}$ with $\tau = 1$ (logistic loss) is
optimal since $n$ then vanishes. Thus, both class count and
minimizability gaps are essential in loss selection. In Appendix~\ref{app:small-M-gaps}, we will show that the minimizability gaps can become
zero or relatively small under certain conditions.  In such scenarios,
$\ell_{\tau}^{\rm{comp}}$ with $\tau = 1$ (logistic loss) is favored,
which can partly explain its widespread practical application.

\section{Conclusion}

We established a universal square-root growth rate for the widely-used
class of smooth surrogate losses in both binary and multi-class
classification. This underscores the minimizability gap as a
crucial discriminator among surrogate losses. Our detailed analysis of
these gaps can provide guidance for loss selection.

\ignore{
We showed a universal square-root growth rate for smooth surrogate
losses, the predominant choice in neural network training (over
polyhedral losses) for both binary and multi-class classification in
applications. We demonstrated that given this universal growth rate,
the minimizability gap is the key factor for differentiating the
surrogate losses. Thus, we provided a detailed analysis of
minimizability gaps, offering practical guidance for surrogate loss
selection. Our analysis can be similarly applied to other scenarios
and benefit real-world applications.
}


\bibliography{srd}

\begin{thebibliography}{74}
\providecommand{\natexlab}[1]{#1}
\providecommand{\url}[1]{\texttt{#1}}
\expandafter\ifx\csname urlstyle\endcsname\relax
  \providecommand{\doi}[1]{doi: #1}\else
  \providecommand{\doi}{doi: \begingroup \urlstyle{rm}\Url}\fi

\bibitem[Agarwal(2014)]{agarwal2014surrogate}
Shivani Agarwal.
\newblock Surrogate regret bounds for bipartite ranking via strongly proper
  losses.
\newblock \emph{The Journal of Machine Learning Research}, 15\penalty0
  (1):\penalty0 1653--1674, 2014.

\bibitem[Awasthi et~al.(2022{\natexlab{a}})Awasthi, Mao, Mohri, and
  Zhong]{AwasthiMaoMohriZhong2022multi}
Pranjal Awasthi, Anqi Mao, Mehryar Mohri, and Yutao Zhong.
\newblock Multi-class {$H$}-consistency bounds.
\newblock In \emph{Advances in neural information processing systems}, pages
  782--795, 2022{\natexlab{a}}.

\bibitem[Awasthi et~al.(2022{\natexlab{b}})Awasthi, Mao, Mohri, and
  Zhong]{awasthi2022h}
Pranjal Awasthi, Anqi Mao, Mehryar Mohri, and Yutao Zhong.
\newblock {$H$}-consistency bounds for surrogate loss minimizers.
\newblock In \emph{International Conference on Machine Learning}, pages
  1117--1174, 2022{\natexlab{b}}.

\bibitem[Awasthi et~al.(2023{\natexlab{a}})Awasthi, Mao, Mohri, and
  Zhong]{AwasthiMaoMohriZhong2023theoretically}
Pranjal Awasthi, Anqi Mao, Mehryar Mohri, and Yutao Zhong.
\newblock Theoretically grounded loss functions and algorithms for adversarial
  robustness.
\newblock In \emph{International Conference on Artificial Intelligence and
  Statistics}, pages 10077--10094, 2023{\natexlab{a}}.

\bibitem[Awasthi et~al.(2023{\natexlab{b}})Awasthi, Mao, Mohri, and
  Zhong]{awasthi2023dc}
Pranjal Awasthi, Anqi Mao, Mehryar Mohri, and Yutao Zhong.
\newblock {DC}-programming for neural network optimizations.
\newblock \emph{Journal of Global Optimization}, 2023{\natexlab{b}}.

\bibitem[Bao(2023)]{bao2023proper}
Han Bao.
\newblock Proper losses, moduli of convexity, and surrogate regret bounds.
\newblock In \emph{Conference on Learning Theory}, pages 525--547, 2023.

\bibitem[Bartlett and Wegkamp(2008)]{bartlett2008classification}
Peter~L Bartlett and Marten~H Wegkamp.
\newblock Classification with a reject option using a hinge loss.
\newblock \emph{Journal of Machine Learning Research}, 9\penalty0 (8), 2008.

\bibitem[Bartlett et~al.(2006)Bartlett, Jordan, and
  McAuliffe]{bartlett2006convexity}
Peter~L. Bartlett, Michael~I. Jordan, and Jon~D. McAuliffe.
\newblock Convexity, classification, and risk bounds.
\newblock \emph{Journal of the American Statistical Association}, 101\penalty0
  (473):\penalty0 138--156, 2006.

\bibitem[Ben-David et~al.(2012)Ben-David, Loker, Srebro, and
  Sridharan]{ben2012minimizing}
Shai Ben-David, David Loker, Nathan Srebro, and Karthik Sridharan.
\newblock Minimizing the misclassification error rate using a surrogate convex
  loss.
\newblock In \emph{International Coference on International Conference on
  Machine Learning}, pages 83--90, 2012.

\bibitem[Berkson(1944)]{Berkson1944}
Joseph Berkson.
\newblock Application of the logistic function to bio-assay.
\newblock \emph{Journal of the American Statistical Association}, 39:\penalty0
  357–--365, 1944.

\bibitem[Berkson(1951)]{Berkson1951}
Joseph Berkson.
\newblock Why {I} prefer logits to probits.
\newblock \emph{Biometrics}, 7\penalty0 (4):\penalty0 327–--339, 1951.

\bibitem[Blondel(2019)]{blondel2019structured}
Mathieu Blondel.
\newblock Structured prediction with projection oracles.
\newblock In \emph{Advances in neural information processing systems}, 2019.

\bibitem[Cao et~al.(2022)Cao, Cai, Feng, Gu, Gu, An, Niu, and
  Sugiyama]{caogeneralizing}
Yuzhou Cao, Tianchi Cai, Lei Feng, Lihong Gu, Jinjie Gu, Bo~An, Gang Niu, and
  Masashi Sugiyama.
\newblock Generalizing consistent multi-class classification with rejection to
  be compatible with arbitrary losses.
\newblock In \emph{Advances in neural information processing systems}, 2022.

\bibitem[Charoenphakdee et~al.(2021)Charoenphakdee, Cui, Zhang, and
  Sugiyama]{charoenphakdee2021classification}
Nontawat Charoenphakdee, Zhenghang Cui, Yivan Zhang, and Masashi Sugiyama.
\newblock Classification with rejection based on cost-sensitive classification.
\newblock In \emph{International Conference on Machine Learning}, pages
  1507--1517, 2021.

\bibitem[Ciliberto et~al.(2016)Ciliberto, Rosasco, and
  Rudi]{ciliberto2016consistent}
Carlo Ciliberto, Lorenzo Rosasco, and Alessandro Rudi.
\newblock A consistent regularization approach for structured prediction.
\newblock In \emph{Advances in neural information processing systems}, 2016.

\bibitem[Ciliberto et~al.(2020)Ciliberto, Rosasco, and
  Rudi]{ciliberto2020general}
Carlo Ciliberto, Lorenzo Rosasco, and Alessandro Rudi.
\newblock A general framework for consistent structured prediction with
  implicit loss embeddings.
\newblock \emph{The Journal of Machine Learning Research}, 21\penalty0
  (1):\penalty0 3852--3918, 2020.

\bibitem[Cortes et~al.(2016{\natexlab{a}})Cortes, DeSalvo, and
  Mohri]{CortesDeSalvoMohri2016}
Corinna Cortes, Giulia DeSalvo, and Mehryar Mohri.
\newblock Learning with rejection.
\newblock In \emph{International Conference on Algorithmic Learning Theory},
  pages 67--82, 2016{\natexlab{a}}.

\bibitem[Cortes et~al.(2016{\natexlab{b}})Cortes, DeSalvo, and
  Mohri]{CortesDeSalvoMohri2016bis}
Corinna Cortes, Giulia DeSalvo, and Mehryar Mohri.
\newblock Boosting with abstention.
\newblock In \emph{Advances in Neural Information Processing Systems}, pages
  1660--1668, 2016{\natexlab{b}}.

\bibitem[Cortes et~al.(2023)Cortes, DeSalvo, and Mohri]{CortesDeSalvoMohri2023}
Corinna Cortes, Giulia DeSalvo, and Mehryar Mohri.
\newblock Theory and algorithms for learning with rejection in binary
  classification.
\newblock \emph{Annals of Mathematics and Artificial Intelligence}, 2023.

\bibitem[Crammer and Singer(2001)]{crammer2001algorithmic}
Koby Crammer and Yoram Singer.
\newblock On the algorithmic implementation of multiclass kernel-based vector
  machines.
\newblock \emph{Journal of machine learning research}, 2\penalty0
  (Dec):\penalty0 265--292, 2001.

\bibitem[Dontchev and Rockafellar(2009)]{DontchevRockafellar2009}
Asen~L. Dontchev and R.~Tyrrell Rockafellar.
\newblock Robinson's implicit function theorem and its extensions.
\newblock \emph{Math. Program.}, 117\penalty0 (1-2):\penalty0 129--147, 2009.

\bibitem[Duchi et~al.(2018)Duchi, Khosravi, and Ruan]{DuchieKhosraviRuan2018}
John Duchi, Khashayar Khosravi, and Feng Ruan.
\newblock Multiclass classification, information, divergence and surrogate
  risk.
\newblock \emph{The Annals of Statistics}, 46\penalty0 (6B):\penalty0
  3246--3275, 2018.

\bibitem[Finocchiaro et~al.(2019)Finocchiaro, Frongillo, and
  Waggoner]{finocchiaro2019embedding}
Jessica Finocchiaro, Rafael Frongillo, and Bo~Waggoner.
\newblock An embedding framework for consistent polyhedral surrogates.
\newblock In \emph{Advances in neural information processing systems}, 2019.

\bibitem[Frongillo and Waggoner(2021)]{frongillo2021surrogate}
Rafael Frongillo and Bo~Waggoner.
\newblock Surrogate regret bounds for polyhedral losses.
\newblock In \emph{Advances in Neural Information Processing Systems},
  volume~34, pages 21569--21580, 2021.

\bibitem[Gao and Zhou(2015)]{gao2015consistency}
Wei Gao and Zhi-Hua Zhou.
\newblock On the consistency of {AUC} pairwise optimization.
\newblock In \emph{International Joint Conference on Artificial Intelligence},
  2015.

\bibitem[Kotlowski et~al.(2011)Kotlowski, Dembczynski, and
  Huellermeier]{kotlowski2011bipartite}
Wojciech Kotlowski, Krzysztof~J Dembczynski, and Eyke Huellermeier.
\newblock Bipartite ranking through minimization of univariate loss.
\newblock In \emph{International Conference on Machine Learning}, pages
  1113--1120, 2011.

\bibitem[Kuratowski and Ryll-Nardzewski(1965)]{KuratowskiRyllNardzewski1965}
Kazimierz Kuratowski and Czes\l{}aw Ryll-Nardzewski.
\newblock A general theorem on selectors.
\newblock \emph{Bull. Acad. Pol. Sci., S\'er. Sci. Math. Astron. Phys.},
  13\penalty0 (8):\penalty0 397--403, 1965.

\bibitem[Kuznetsov et~al.(2014)Kuznetsov, Mohri, and
  Syed]{KuznetsovMohriSyed2014}
Vitaly Kuznetsov, Mehryar Mohri, and Umar Syed.
\newblock Multi-class deep boosting.
\newblock In \emph{Advances in Neural Information Processing Systems}, pages
  2501--2509, 2014.

\bibitem[Lapin et~al.(2016)Lapin, Hein, and Schiele]{lapin2016loss}
Maksim Lapin, Matthias Hein, and Bernt Schiele.
\newblock Loss functions for top-k error: Analysis and insights.
\newblock In \emph{Proceedings of the IEEE conference on computer vision and
  pattern recognition}, pages 1468--1477, 2016.

\bibitem[Lee et~al.(2004)Lee, Lin, and Wahba]{lee2004multicategory}
Yoonkyung Lee, Yi~Lin, and Grace Wahba.
\newblock Multicategory support vector machines: Theory and application to the
  classification of microarray data and satellite radiance data.
\newblock \emph{Journal of the American Statistical Association}, 99\penalty0
  (465):\penalty0 67--81, 2004.

\bibitem[Long and Servedio(2013)]{long2013consistency}
Phil Long and Rocco Servedio.
\newblock Consistency versus realizable {H}-consistency for multiclass
  classification.
\newblock In \emph{International Conference on Machine Learning}, pages
  801--809, 2013.

\bibitem[Mahdavi et~al.(2014)Mahdavi, Zhang, and Jin]{mahdavi2014binary}
Mehrdad Mahdavi, Lijun Zhang, and Rong Jin.
\newblock Binary excess risk for smooth convex surrogates.
\newblock \emph{arXiv preprint arXiv:1402.1792}, 2014.

\bibitem[Mao et~al.(2023{\natexlab{a}})Mao, Mohri, Mohri, and
  Zhong]{MaoMohriMohriZhong2023twostage}
Anqi Mao, Christopher Mohri, Mehryar Mohri, and Yutao Zhong.
\newblock Two-stage learning to defer with multiple experts.
\newblock In \emph{Advances in neural information processing systems},
  2023{\natexlab{a}}.

\bibitem[Mao et~al.(2023{\natexlab{b}})Mao, Mohri, and
  Zhong]{MaoMohriZhong2023characterization}
Anqi Mao, Mehryar Mohri, and Yutao Zhong.
\newblock {H}-consistency bounds: Characterization and extensions.
\newblock In \emph{Advances in Neural Information Processing Systems},
  2023{\natexlab{b}}.

\bibitem[Mao et~al.(2023{\natexlab{c}})Mao, Mohri, and
  Zhong]{MaoMohriZhong2023ranking}
Anqi Mao, Mehryar Mohri, and Yutao Zhong.
\newblock {H}-consistency bounds for pairwise misranking loss surrogates.
\newblock In \emph{International conference on Machine learning},
  2023{\natexlab{c}}.

\bibitem[Mao et~al.(2023{\natexlab{d}})Mao, Mohri, and
  Zhong]{MaoMohriZhong2023rankingabs}
Anqi Mao, Mehryar Mohri, and Yutao Zhong.
\newblock Ranking with abstention.
\newblock In \emph{ICML 2023 Workshop The Many Facets of Preference-Based
  Learning}, 2023{\natexlab{d}}.

\bibitem[Mao et~al.(2023{\natexlab{e}})Mao, Mohri, and
  Zhong]{MaoMohriZhong2023structured}
Anqi Mao, Mehryar Mohri, and Yutao Zhong.
\newblock Structured prediction with stronger consistency guarantees.
\newblock In \emph{Advances in Neural Information Processing Systems},
  2023{\natexlab{e}}.

\bibitem[Mao et~al.(2023{\natexlab{f}})Mao, Mohri, and Zhong]{mao2023cross}
Anqi Mao, Mehryar Mohri, and Yutao Zhong.
\newblock Cross-entropy loss functions: Theoretical analysis and applications.
\newblock In \emph{International Conference on Machine Learning},
  2023{\natexlab{f}}.

\bibitem[Mao et~al.(2024{\natexlab{a}})Mao, Mohri, and
  Zhong]{MaoMohriZhong2023deferral}
Anqi Mao, Mehryar Mohri, and Yutao Zhong.
\newblock Principled approaches for learning to defer with multiple experts.
\newblock In \emph{International Symposium on Artificial Intelligence and
  Mathematics}, 2024{\natexlab{a}}.

\bibitem[Mao et~al.(2024{\natexlab{b}})Mao, Mohri, and
  Zhong]{MaoMohriZhong2023predictor}
Anqi Mao, Mehryar Mohri, and Yutao Zhong.
\newblock Predictor-rejector multi-class abstention: Theoretical analysis and
  algorithms.
\newblock In \emph{International Conference on Algorithmic Learning Theory},
  2024{\natexlab{b}}.

\bibitem[Mao et~al.(2024{\natexlab{c}})Mao, Mohri, and
  Zhong]{MaoMohriZhong2023score}
Anqi Mao, Mehryar Mohri, and Yutao Zhong.
\newblock Theoretically grounded loss functions and algorithms for score-based
  multi-class abstention.
\newblock In \emph{International Conference on Artificial Intelligence and
  Statistics}, 2024{\natexlab{c}}.

\bibitem[Mao et~al.(2024{\natexlab{d}})Mao, Mohri, and Zhong]{mao2024h}
Anqi Mao, Mehryar Mohri, and Yutao Zhong.
\newblock {$ H $}-consistency guarantees for regression.
\newblock \emph{arXiv preprint arXiv:2403.19480}, 2024{\natexlab{d}}.

\bibitem[Mao et~al.(2024{\natexlab{e}})Mao, Mohri, and
  Zhong]{mao2024regression}
Anqi Mao, Mehryar Mohri, and Yutao Zhong.
\newblock Regression with multi-expert deferral.
\newblock \emph{arXiv preprint arXiv:2403.19494}, 2024{\natexlab{e}}.

\bibitem[Mao et~al.(2024{\natexlab{f}})Mao, Mohri, and Zhong]{mao2024top}
Anqi Mao, Mehryar Mohri, and Yutao Zhong.
\newblock Top-$ k $ classification and cardinality-aware prediction.
\newblock \emph{arXiv preprint arXiv:2403.19625}, 2024{\natexlab{f}}.

\bibitem[Menon and Williamson(2014)]{menon2014bayes}
Aditya~Krishna Menon and Robert~C Williamson.
\newblock Bayes-optimal scorers for bipartite ranking.
\newblock In \emph{Conference on Learning Theory}, pages 68--106, 2014.

\bibitem[Mohri et~al.(2024)Mohri, Andor, Choi, Collins, Mao, and
  Zhong]{MohriAndorChoiCollinsMaoZhong2023learning}
Christopher Mohri, Daniel Andor, Eunsol Choi, Michael Collins, Anqi Mao, and
  Yutao Zhong.
\newblock Learning to reject with a fixed predictor: Application to
  decontextualization.
\newblock In \emph{International Conference on Learning Representations}, 2024.

\bibitem[Mohri et~al.(2018)Mohri, Rostamizadeh, and
  Talwalkar]{MohriRostamizadehTalwalkar2018}
Mehryar Mohri, Afshin Rostamizadeh, and Ameet Talwalkar.
\newblock \emph{Foundations of Machine Learning}.
\newblock {MIT} Press, second edition, 2018.

\bibitem[Mozannar and Sontag(2020)]{mozannar2020consistent}
Hussein Mozannar and David Sontag.
\newblock Consistent estimators for learning to defer to an expert.
\newblock In \emph{International Conference on Machine Learning}, pages
  7076--7087, 2020.

\bibitem[Mozannar et~al.(2023)Mozannar, Lang, Wei, Sattigeri, Das, and
  Sontag]{pmlr-v206-mozannar23a}
Hussein Mozannar, Hunter Lang, Dennis Wei, Prasanna Sattigeri, Subhro Das, and
  David Sontag.
\newblock Who should predict? exact algorithms for learning to defer to humans.
\newblock In \emph{International Conference on Artificial Intelligence and
  Statistics}, pages 10520--10545, 2023.

\bibitem[Ni et~al.(2019)Ni, Charoenphakdee, Honda, and Sugiyama]{NiCHS19}
Chenri Ni, Nontawat Charoenphakdee, Junya Honda, and Masashi Sugiyama.
\newblock On the calibration of multiclass classification with rejection.
\newblock In \emph{Advances in Neural Information Processing Systems}, pages
  2582--2592, 2019.

\bibitem[Nowak et~al.(2019)Nowak, Bach, and Rudi]{nowak2019sharp}
Alex Nowak, Francis Bach, and Alessandro Rudi.
\newblock Sharp analysis of learning with discrete losses.
\newblock In \emph{The 22nd International Conference on Artificial Intelligence
  and Statistics}, pages 1920--1929, 2019.

\bibitem[Nowak et~al.(2020)Nowak, Bach, and Rudi]{nowak2020consistent}
Alex Nowak, Francis Bach, and Alessandro Rudi.
\newblock Consistent structured prediction with max-min margin markov networks.
\newblock In \emph{International Conference on Machine Learning}, pages
  7381--7391, 2020.

\bibitem[Nowak et~al.(2022)Nowak, Rudi, and Bach]{nowak2022consistency}
Alex Nowak, Alessandro Rudi, and Francis Bach.
\newblock On the consistency of max-margin losses.
\newblock In \emph{International Conference on Artificial Intelligence and
  Statistics}, pages 4612--4633, 2022.

\bibitem[Nowak-Vila et~al.(2019)Nowak-Vila, Bach, and Rudi]{nowak2019general}
Alex Nowak-Vila, Francis Bach, and Alessandro Rudi.
\newblock A general theory for structured prediction with smooth convex
  surrogates.
\newblock \emph{arXiv preprint arXiv:1902.01958}, 2019.

\bibitem[Osokin et~al.(2017)Osokin, Bach, and
  Lacoste-Julien]{osokin2017structured}
Anton Osokin, Francis Bach, and Simon Lacoste-Julien.
\newblock On structured prediction theory with calibrated convex surrogate
  losses.
\newblock In \emph{Advances in Neural Information Processing Systems}, 2017.

\bibitem[Pires et~al.(2013)Pires, Szepesvari, and
  Ghavamzadeh]{AvilaPiresSzepesvariGhavamzadeh2013}
Bernardo~Avila Pires, Csaba Szepesvari, and Mohammad Ghavamzadeh.
\newblock Cost-sensitive multiclass classification risk bounds.
\newblock In \emph{International Conference on Machine Learning}, pages
  1391--1399, 2013.

\bibitem[Ramaswamy et~al.(2018)Ramaswamy, Tewari, and
  Agarwal]{ramaswamy2018consistent}
Harish~G Ramaswamy, Ambuj Tewari, and Shivani Agarwal.
\newblock Consistent algorithms for multiclass classification with an abstain
  option.
\newblock \emph{Electronic Journal of Statistics}, 12\penalty0 (1):\penalty0
  530--554, 2018.

\bibitem[Reid and Williamson(2009)]{reid2009surrogate}
Mark~D Reid and Robert~C Williamson.
\newblock Surrogate regret bounds for proper losses.
\newblock In \emph{International Conference on Machine Learning}, pages
  897--904, 2009.

\bibitem[Steinwart(2007)]{steinwart2007compare}
Ingo Steinwart.
\newblock How to compare different loss functions and their risks.
\newblock \emph{Constructive Approximation}, 26\penalty0 (2):\penalty0
  225--287, 2007.

\bibitem[Tewari and Bartlett(2007)]{tewari2007consistency}
Ambuj Tewari and Peter~L. Bartlett.
\newblock On the consistency of multiclass classification methods.
\newblock \emph{Journal of Machine Learning Research}, 8\penalty0
  (36):\penalty0 1007--1025, 2007.

\bibitem[Uematsu and Lee(2017)]{uematsu2017theoretically}
Kazuki Uematsu and Yoonkyung Lee.
\newblock On theoretically optimal ranking functions in bipartite ranking.
\newblock \emph{Journal of the American Statistical Association}, 112\penalty0
  (519):\penalty0 1311--1322, 2017.

\bibitem[Verhulst(1838)]{Verhulst1838}
Pierre~François Verhulst.
\newblock Notice sur la loi que la population suit dans son accroissement.
\newblock \emph{Correspondance math\'ematique et physique}, 10:\penalty0
  113–--121, 1838.

\bibitem[Verhulst(1845)]{Verhulst1845}
Pierre~François Verhulst.
\newblock Recherches math\'ematiques sur la loi d'accroissement de la
  population.
\newblock \emph{Nouveaux M\'emoires de l'Acad\'emie Royale des Sciences et
  Belles-Lettres de Bruxelles}, 18:\penalty0 1–--42, 1845.

\bibitem[Verma and Nalisnick(2022)]{verma2022calibrated}
Rajeev Verma and Eric Nalisnick.
\newblock Calibrated learning to defer with one-vs-all classifiers.
\newblock In \emph{International Conference on Machine Learning}, pages
  22184--22202, 2022.

\bibitem[Verma et~al.(2023)Verma, Barrej{\'o}n, and
  Nalisnick]{verma2023learning}
Rajeev Verma, Daniel Barrej{\'o}n, and Eric Nalisnick.
\newblock Learning to defer to multiple experts: Consistent surrogate losses,
  confidence calibration, and conformal ensembles.
\newblock In \emph{International Conference on Artificial Intelligence and
  Statistics}, pages 11415--11434, 2023.

\bibitem[Weston and Watkins(1999)]{WestonWatkins1999}
Jason Weston and Chris Watkins.
\newblock Support vector machines for multi-class pattern recognition.
\newblock \emph{European Symposium on Artificial Neural Networks}, 4\penalty0
  (6), 1999.

\bibitem[Yang and Koyejo(2020)]{yang2020consistency}
Forest Yang and Sanmi Koyejo.
\newblock On the consistency of top-k surrogate losses.
\newblock In \emph{International Conference on Machine Learning}, pages
  10727--10735, 2020.

\bibitem[Yu and Blaschko(2018)]{yu2018lovasz}
Jiaqian Yu and Matthew~B Blaschko.
\newblock The lov{\'a}sz hinge: A novel convex surrogate for submodular losses.
\newblock \emph{IEEE transactions on pattern analysis and machine
  intelligence}, 42\penalty0 (3):\penalty0 735--748, 2018.

\bibitem[Yuan and Wegkamp(2010)]{yuan2010classification}
Ming Yuan and Marten Wegkamp.
\newblock Classification methods with reject option based on convex risk
  minimization.
\newblock \emph{Journal of Machine Learning Research}, 11\penalty0 (1), 2010.

\bibitem[Zhang et~al.(2021)Zhang, Liu, and Tao]{zhang2021rates}
Jingwei Zhang, Tongliang Liu, and Dacheng Tao.
\newblock On the rates of convergence from surrogate risk minimizers to the
  {B}ayes optimal classifier.
\newblock \emph{IEEE Transactions on Neural Networks and Learning Systems},
  33\penalty0 (10):\penalty0 5766--5774, 2021.

\bibitem[Zhang and Agarwal(2020)]{zhang2020bayes}
Mingyuan Zhang and Shivani Agarwal.
\newblock Bayes consistency vs. {H}-consistency: The interplay between
  surrogate loss functions and the scoring function class.
\newblock In \emph{Advances in Neural Information Processing Systems}, pages
  16927--16936, 2020.

\bibitem[Zhang(2004{\natexlab{a}})]{Zhang2003}
Tong Zhang.
\newblock Statistical behavior and consistency of classification methods based
  on convex risk minimization.
\newblock \emph{The Annals of Statistics}, 32\penalty0 (1):\penalty0 56--85,
  2004{\natexlab{a}}.

\bibitem[Zhang(2004{\natexlab{b}})]{zhang2004statistical}
Tong Zhang.
\newblock Statistical analysis of some multi-category large margin
  classification methods.
\newblock \emph{Journal of Machine Learning Research}, 5\penalty0
  (Oct):\penalty0 1225--1251, 2004{\natexlab{b}}.

\bibitem[Zhang and Sabuncu(2018)]{zhang2018generalized}
Zhilu Zhang and Mert Sabuncu.
\newblock Generalized cross entropy loss for training deep neural networks with
  noisy labels.
\newblock In \emph{Advances in neural information processing systems}, 2018.

\end{thebibliography}

\newpage
\appendix

\renewcommand{\contentsname}{Contents of Appendix}
\tableofcontents
\addtocontents{toc}{\protect\setcounter{tocdepth}{4}} 
\clearpage

\section{Related work}
\label{app:related}

The Bayes-consistency of surrogate losses has been extensively studied
in the context of binary classification.  \citet{Zhang2003},
\citet{bartlett2006convexity} and \citet{steinwart2007compare}
established Bayes-consistency for various convex loss functions,
including margin-based surrogates. They also introduced excess error
bounds (or surrogate regret bounds) for margin-based surrogates.
\citet{reid2009surrogate} extended these results to proper losses in
binary classification.

The Bayes-consistency of several surrogate loss function families in
the context of multi-class classification has also been studied
by \citet{zhang2004statistical} and
\citet{tewari2007consistency}. \cite{zhang2004statistical} established
a series of results for various multi-class classification
formulations, including negative results for multi-class hinge loss
functions \citep{crammer2001algorithmic}, as well as positive results
for the sum exponential loss
\citep{WestonWatkins1999,AwasthiMaoMohriZhong2022multi}, the
(multinomial) logistic loss
\citep{Verhulst1838,Verhulst1845,Berkson1944,Berkson1951}, and the
constrained losses \citep{lee2004multicategory}. Later,
\citet{tewari2007consistency} adopted a different geometric method to
analyze Bayes-consistency, yielding similar results for these loss
function families.  \citet{steinwart2007compare} developed general
tools to characterize Bayes consistency for both binary and
multi-class classification. Additionally, excess error bounds have
been derived by \cite{AvilaPiresSzepesvariGhavamzadeh2013} for a
family of constrained losses and by \citet{DuchieKhosraviRuan2018} for
loss functions related to generalized entropies.

For a surrogate loss $\ell$, an excess error bound holds for any
predictor $h$ and has the form $\sE_{\ell_{0-1}}\! (h) -
\sE_{\ell_{0-1}}^*\!\! \leq \Psi(\sE_{\ell}(h) - \sE_{\ell}^*)$, where
$\sE_{\ell_{0-1}}\!(h)$ and $\sE_{\ell}(h)$ represent the expected
losses of $h$ for the zero-one loss and surrogate loss respectively,
$\sE_{\ell_{0-1}}^*\!$ and $\sE_{\ell}^*$ the Bayes errors for the
zero-one and surrogate loss respectively, and $\Psi$ a non-decreasing
function.

The \emph{growth rate} of excess error bounds, that is the behavior of
function $\Psi$ near zero, has gained attention in recent research
\citep{mahdavi2014binary,zhang2021rates,
  frongillo2021surrogate,bao2023proper}.  \cite{mahdavi2014binary}
examined the growth rate for \emph{smoothed hinge losses} in binary
classification, demonstrating that smoother losses result in worse
growth rates. The optimal rate is achieved with the standard hinge
loss, which exhibits linear growth.  \cite{zhang2021rates} tied the
growth rate of excess error bounds in binary classification to two
properties of the surrogate loss function: consistency intensity and
conductivity.  These metrics enable comparisons of growth rates across
different surrogates. But, can we establish lower and upper bounds
for the growth rate of excess error bounds under specific regularity
conditions?

\citet{frongillo2021surrogate} pioneered research on this question in
binary classification settings.  They established a critical
square-root lower bound for excess error bounds when a surrogate loss
is locally strongly convex and has a locally Lipschitz gradient.
Additionally, they demonstrated a linear excess error bound for
Bayes-consistent polyhedral loss functions (convex and
piecewise-linear) \citep{finocchiaro2019embedding} (see also
\citep{lapin2016loss,ramaswamy2018consistent,yu2018lovasz,
  yang2020consistency}).  More recently, \citet{bao2023proper}
complemented these results by showing that proper losses associated
with Shannon entropy, exponential entropy, spherical entropy, squared
$\alpha$-norm entropies and $\alpha$-polynomial entropies, with
$\alpha > 1$, also exhibit a square-root lower bound for excess error
bounds relative to the $\ell_1$-distance.

However, while Bayes-consistency and excess error bounds are valuable,
they are not sufficiently informative, as they are established for the
family of all measurable functions and disregard the crucial role
played by restricted hypothesis sets in learning. As pointed out by
\citet{long2013consistency}, in some cases, minimizing
Bayes-consistent losses can result in constant expected error, while
minimizing inconsistent losses can yield an expected loss approaching
zero.  To address this limitation, the authors introduced the concept
of \emph{realizable $\sH$-consistency}, further explored by
\citet{KuznetsovMohriSyed2014} and \citet{zhang2020bayes}.
Nonetheless, these guarantees are only asymptotic and rely on a strong
realizability assumption that typically does not hold in practice.

Recent research by \citet{awasthi2022h,AwasthiMaoMohriZhong2022multi}
and \citet{mao2023cross,MaoMohriZhong2023ranking,
  MaoMohriZhong2023structured,MaoMohriZhong2023characterization} has
instead introduced and analyzed \emph{$\sH$-consistency bounds}.
These bounds are more informative than Bayes-consistency since they
are hypothesis set-specific and non-asymptotic.  Their work covers
broad families of surrogate losses in binary classication, multi-class
classification, structured prediction, and abstention
\citep{MaoMohriMohriZhong2023twostage}.  Crucially, they provide upper
bounds on the \emph{estimation error} of the target loss, for example,
the zero-one loss in classification, that hold for any predictor $h$
within a hypothesis set $\sH$. These bounds relate this estimation
error to the surrogate loss estimation error.  Their general form is:
$\sE_{\ell_{0-1}}\!(h) - \sE^*_{\ell_{0-1}}\!(\sH) \leq
f \paren*{\sE_{\ell}(h) - \sE^*_{\ell}(\sH)}$, where
$\sE^*_{\ell_{0-1}}\!(\sH)$ and $\sE^*_{\ell}(\sH)$ represent the
best-in-class expected losses for the zero-one and surrogate loss
respectively, and $f$ is a non-decreasing function continuous at
zero. $\sH$-consistency bounds imply in particular excess error
bounds, when the hypothesis set is taken to be the family of all
measurable functions.

The authors have further analyzed $\sH$-consistency bounds in
structured prediction, ranking, and abstention.

\citet{MaoMohriZhong2023structured} revealed limitations of existing
structured prediction loss functions, demonstrating they lack
Bayes-consistency. They introduced new surrogate loss families proven
to benefit form $\sH$-consistency bounds, thus also establishing
Bayes-consistency. This complements earlier negative finding about the
Bayes-consistency of Struct-SVM and positive results for quadratic
surrogate (QS) losses or some non-smooth polyhedral-type loss
functions \citep{osokin2017structured,ciliberto2016consistent,
  blondel2019structured,
  nowak2019general,nowak2019sharp,nowak2020consistent,
  ciliberto2020general, nowak2022consistency}.

\citet{MaoMohriZhong2023ranking} showed that there are no meaningful
$\sH$-consistency bounds for general pairwise ranking and bipartite
ranking surrogate losses with equicontinuous hypothesis sets,
including linear models and neural networks.  They proposed ranking
with abstention \citep{MaoMohriZhong2023rankingabs} as a solution. These results demonstrated that although these surrogate loss functions have been shown to be Bayes-consistent in various studies \citep{kotlowski2011bipartite, menon2014bayes, agarwal2014surrogate, gao2015consistency, uematsu2017theoretically}, they are, in fact, not $\sH$-consistent.

\citet{MaoMohriMohriZhong2023twostage} applied $\sH$-consistency
bounds to two-stage learning to defer scenarios, designing new
surrogate losses. This complemented the
Bayes-consistent surrogate losses in the single-stage scenario of
learning to defer \citep{mozannar2020consistent,verma2022calibrated,
  verma2023learning,pmlr-v206-mozannar23a} or learning with abstention
\citep{bartlett2008classification,yuan2010classification,
  CortesDeSalvoMohri2016,CortesDeSalvoMohri2016bis,
  ramaswamy2018consistent, NiCHS19,charoenphakdee2021classification,
  caogeneralizing,CortesDeSalvoMohri2023}.
  
  Moreover, $\sH$-consistency bounds have also been studied in the scenario of adversarial robustness \citep{AwasthiMaoMohriZhong2023theoretically,awasthi2023dc}, the bounded regression \citep{mao2024regression,mao2024h}, the top-$k$ classification \citep{mao2024top},  the score-based abstention \citep{MaoMohriZhong2023score}, the predictor-rejector abstention \citep{MaoMohriZhong2023predictor}, learning to abstain with a fixed predictor with application in
decontextualization \citep{MohriAndorChoiCollinsMaoZhong2023learning}, and single-stage learning to defer with multiple experts \citep{MaoMohriZhong2023deferral}.


This papers presents a characterization of the growth rate of
$\sH$-consistency bounds, that is how quickly the functions $f$
increase near zero, both in binary and multi-class classification.

\section{Proof of Lemma~\ref{lemma:approximation-error}}
\label{app:lemma}

\ApproximationError*
\begin{proof}
By definition, for a pointwise loss function $\ell$, there exists a
measurable function $\hat \ell \colon \Rset^n \times \sY \to
\Rset_{+}$ such that $\ell(h, x, y) = \hat \ell(h(x), y)$, where $h(x)
= \bracket*{h(x, 1), \ldots, h(x, n)}$ is the score vector of the
predictor $h$. Thus, the following inequality holds:
\begin{equation*}
\sC^*_{\ell}\paren*{\sH_{\rm{all}}, x} = \inf_{h \in \sH} \E_y \bracket*{\ell(h, x, y) \mid x} = \inf_{\alpha \in \Rset^n}  \E_{y} \bracket*{\hat \ell(\alpha, y) \mid x}.
\end{equation*}
Since $\hat \ell \colon (\alpha, y) \mapsto \Rset_{+}$ is measurable, the function $(\alpha, x) \mapsto \E_{y} \bracket*{\hat \ell(\alpha, y) \mid x}$ is also measurable. 
We now show that the function $x \mapsto \sC^*_{\ell}\paren*{\sH_{\rm{all}}, x} = \inf_{\alpha \in \Rset}  \E_{y} \bracket*{\hat \ell(\alpha, y) \mid x}$ is also measurable. 

To do this, we consider for any $\beta > 0$, the set $\curl*{x \colon \inf_{\alpha \in \Rset}  \E_{y} \bracket*{\hat \ell(\alpha, y) \mid x} < \beta}$ which can be expressed as 
\begin{align*}
 \curl*{x \colon \inf_{\alpha \in \Rset^n}  \E_{y} \bracket*{\hat \ell(\alpha, y) \mid x} < \beta} 
 & = \curl*{x \colon \exists \alpha \in \Rset^n \text{ such that } \E_{y} \bracket*{\hat \ell(\alpha, y) \mid x} < \beta }\\
 & = \Pi_{\sX} \curl*{(\alpha, x)\colon \E_{y} \bracket*{\hat \ell(\alpha, y) \mid x} < \beta} 
\end{align*}
where $\Pi_{\sX}$ is the projection onto $\sX$. By the measurable projection theorem, $x \mapsto \inf_{\alpha \in \Rset^n}  \E_{y} \bracket*{\hat \ell(\alpha, y) \mid x}$ is measurable. Then, since a pointwise difference of measurable functions is measurable, for all $n \in \Nset$, the set $\curl*{(\alpha, x) \colon \E_{y} \bracket*{\hat \ell(\alpha, y) \mid x} < \inf_{\alpha \in \Rset^n}  \E_{y} \bracket*{\hat \ell(\alpha, y) \mid x} + \frac1n}$ is measurable. Thus, by  \citet{KuratowskiRyllNardzewski1965}'s measurable selection theorem, for all $n \in \Nset$, there exists a measurable function $h_{n} \colon x \mapsto \alpha \in \Rset^n$ such that the following holds:
\begin{equation*}
\sC_{\ell}\paren*{h_n, x}  = \E_{y} \bracket*{\hat \ell(\alpha, y) \mid x} < \inf_{\alpha \in \Rset^n}  \E_{y} \bracket*{\hat \ell(\alpha, y) \mid x} + \frac1n =  \sC^*_{\ell}\paren*{\sH_{\rm{all}}, x} + \frac1n.
\end{equation*}
Therefore, we have
\begin{equation*}
\sE^*_{\ell}(\sH_{\rm{all}}) \leq \E_{x}\bracket*{\sC_{\ell}\paren*{h_n, x}}\leq \E_{x} \bracket*{\sC^*_{\ell}\paren*{\sH_{\rm{all}}, x}} + \frac1n \leq \sE^*_{\ell}(\sH_{\rm{all}}) + \frac1n.
\end{equation*}
By taking the limit $n \to \plus \infty$, we obtain $\sE^*_{\ell}(\sH_{\rm{all}})  = \E_{x} \bracket*{\sC^*_{\ell}\paren*{\sH_{\rm{all}}, x}}$. By definition, $\sA_{\ell}(\sH) = \sE_{\ell}^*(\sH) -
  \sE_{\ell}^*\paren*{\sH_{\mathrm{all}}} = \sE_{\ell}^*(\sH) - \E_{x} \bracket*{\sC^*_{\ell}\paren*{\sH_{\rm{all}}, x}}$. This completes the proof.
\end{proof}

\section{General form of \texorpdfstring{$\sH$}{H}-consistency bounds}
\label{app:explicit-form}

Fix a target loss function $\ell_2$ and a surrogate loss $\ell_1$.
Given a hypothesis set $\sH$, a bound relating the
estimation errors of these loss functions admits the following form:
\begin{align}
\label{eq:bound}
\forall h \in \sH, \quad
	\sE_{\ell_2}(h) - \sE^*_{\ell_2}(\sH)
    	\leq \Gamma_{\sD} \paren*{\sE_{\ell_1}(h) - \sE^*_{\ell_1}(\sH)},
\end{align}
where, for any distribution $\sD$, $\Gamma_{\sD}\colon \Rset_+ \to
\Rset_+$ is a non-decreasing function on $\Rset_+$. We will assume
that $\Gamma_{\sD}$ is concave.
In particular, the bound should hold for any point mass distribution
$\delta_x$, $x \in \sX$. We will operate under the assumption that the
same bound holds uniformly over $\sX$ and thus that there exists a
fixed concave function $\Gamma$ such that $\Gamma_{\delta_x} = \Gamma$ for
all $x$.

Observe that for any point mass distribution $\delta_x$, the
conditional loss and the expected loss coincide and therefore that we
have $\sE_{\ell_2}(h) - \sE^*_{\ell_2}(\sH) = \Delta \sC_{\ell_2, \sH}(\sH,
x)$, and similarly with $\ell_1$. Thus, we can write:
\[
\forall h \in \sH, \forall x \in \sX, \quad
	\Delta \sC_{\ell_2, \sH}(h, x)
	\leq \Gamma\paren*{\Delta \sC_{\ell_1, \sH}(h, x)}.
\]
Therefore, by Jensen's inequality, for any distribution $\sD$, we have
\[
\forall h \in \sH, \forall x \in \sX, \quad
	\E_{x}\bracket*{\Delta \sC_{\ell_2, \sH}(h, x)}
	\leq \E_{x}\bracket*{\Gamma\paren*{\Delta	\sC_{\ell_1, \sH}(h, x)}}
	\leq \Gamma\paren*{\E_{x}\bracket*{\Delta	\sC_{\ell_1, \sH}(h, x)}}.
\]
Since $\E_{x}\bracket*{\Delta \sC_{\ell_2, \sH}(h, x)} = \sE_{\ell_2}(h) -
\sE^*_{\ell_2}(\sH) + \sM_{\ell_2}(\sH)$ and similarly with $\ell_1$,
we obtain the following bound for all distributions $\sD$:
\begin{equation}
\label{eq:H-consistency-bound}
\forall h \in \sH, \quad
\sE_{\ell_2}(h) - \sE^*_{\ell_2}(\sH) + \sM_{\ell_2}(\sH)
\leq \Gamma\paren*{\sE_{\ell_1}(h)-\sE^*_{\ell_1}(\sH) + \sM_{\ell_1}(\sH)}.
\end{equation}
This leads to the general form of $\sH$-consistency bounds that we will be 
considering, which includes the key role of the minimizability gaps.

\section{Properties of minimizability gaps}
\label{app:properties}

By Lemma~\ref{lemma:approximation-error}, for a pointwise loss
function, we have $\sE^*_\ell\paren*{\sH_{\rm{all}}} =
\E_{x}\bracket*{\sC_{\ell}^*(\sH_{\rm{all}},x)}$, thus the
minimizability gap vanishes for the family of all measurable
functions.

\begin{lemma}
\label{lemma:zero-minimizability}
  Let $\ell$ be a pointwise loss function. Then, we have
  $\sM_\ell(\sH_{\rm{all}}) = 0$.
\end{lemma}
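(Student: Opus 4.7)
The statement is an immediate consequence of Lemma~\ref{lemma:approximation-error}. My plan is simply to unfold the definition of the minimizability gap and invoke that lemma.

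First, I would write out the definition from~\eqref{eq:mingap}:
\[
\sM_\ell(\sH_{\rm{all}}) = \sE^*_\ell(\sH_{\rm{all}}) - \E_x\bracket*{\sC^*_\ell(\sH_{\rm{all}}, x)}.
\]
Next, since $\ell$ is assumed to be a pointwise loss function, Lemma~\ref{lemma:approximation-error} applies and yields the identity $\sE^*_\ell(\sH_{\rm{all}}) = \E_x\bracket*{\sC^*_\ell(\sH_{\rm{all}}, x)}$. Substituting this identity into the displayed equation above, the two terms cancel and we conclude $\sM_\ell(\sH_{\rm{all}}) = 0$, as claimed.

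There is no real obstacle here: the lemma is essentially a corollary that isolates the $\sH = \sH_{\rm{all}}$ case of the more general identity established previously. The only substantive content is the measurable-selection argument already carried out in the proof of Lemma~\ref{lemma:approximation-error} (which justified that the pointwise infimum $x \mapsto \sC^*_\ell(\sH_{\rm{all}}, x)$ is measurable and attained in expectation by a sequence of measurable hypotheses in $\sH_{\rm{all}}$). For clarity one could also note that the inequality $\sM_\ell(\sH) \geq 0$ always holds by super-additivity of the infimum, so Lemma~\ref{lemma:approximation-error} is precisely the reverse inequality in the special case $\sH = \sH_{\rm{all}}$.
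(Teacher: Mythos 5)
Your proposal is correct and matches the paper's own argument: the paper also derives this lemma immediately from Lemma~\ref{lemma:approximation-error} by unfolding the definition~\eqref{eq:mingap} and cancelling the two identical terms. Nothing further is needed.
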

\ignore{
\begin{proof}
By definition, $\sM_{\ell}(\sH_{\rm{all}}) =
\sE^*_\ell\paren*{\sH_{\rm{all}}} -
\E_{x}\bracket*{\sC_{\ell}^*(\sH_{\rm{all}},x)}$. By
lemma~\ref{lemma:approximation-error},
$\sE^*_\ell\paren*{\sH_{\rm{all}}} =
\E_{x}\bracket*{\sC_{\ell}^*(\sH_{\rm{all}},x)}$, which
implies that $\sM_{\ell}(\sH_{\rm{all}})=0$.
\end{proof}
}
Thus, in that case, \eqref{eq:H-consistency-bound} takes the following
simpler form:
\begin{equation}
\forall h \in \sH, \quad
\sE_{\ell_2}(h) - \sE^*_{\ell_2}(\sH_{\rm{all}})
\leq \Gamma \paren*{\sE_{\ell_1}(h) - \sE^*_{\ell_1}(\sH_{\rm{all}})}.
\end{equation}
In general, however, the minimizabiliy gap is non-zero for a
restricted hypothesis set $\sH$ and is therefore important to analyze.
Let $\sI_\ell(\sH)$ be the difference of pointwise infima
$\sI_\ell(\sH) = \E_x \bracket[big]{\sC^*_\ell(\sH, x) -
  \sC^*_\ell(\sH_{\rm{all}}, x)}$, which is non-negative. Note that,
for a pointwise loss function, the minimizability gap can be
decomposed as follows in terms of the approximation error and the
difference of pointwise infima:
\begin{align*}
  \sM_\ell(\sH)
  & = \sE^*_\ell(\sH) - \sE^*_\ell\paren*{\sH_{\rm{all}}}
  + \sE^*_\ell\paren*{\sH_{\rm{all}}} - \E_x \bracket*{\sC^*_\ell(\sH, x)}\\
  & = \sA_\ell(\sH) + \sE^*_\ell\paren*{\sH_{\rm{all}}}
  - \E_x \bracket*{\sC^*_\ell(\sH, x)}\\
  & = \sA_\ell(\sH) - \sI_\ell(\sH)
  \leq \sA_\ell(\sH).
\end{align*}
Thus, the minimizabiliy gap can be upper bounded by the approximation
error. It is however a finer quantity than the approximation error and
can lead to more favorable guarantees. When the difference of
pointwise infima can be evaluated or bounded, this decomposition can
provide a convenient way to analyze the minimizability gap in terms of
the approximation error.

Note that $\sI_\ell(\sH)$ can be non-zero for families of bounded
functions.  Let $\sY = \curl*{-1, +1}$ and $\sH$ be a family of functions $h$ with
$\abs*{h(x)} \leq \Lambda$ for all $x \in \sX$ and such that all
values in $[-\Lambda, +\Lambda]$ can be reached.  Consider for example
the exponential-based margin loss: $\ell(h, x, y) = e^{-yh(x)}$. Let
$\eta(x) = p(x, +1) = \sD(Y = + 1\!\mid\! X = x)$.  Thus,
$\sC_{\ell}(h, x) = \eta(x) e^{-h(x)} + (1 - \eta(x)) e^{h(x)}$. Then,
it is not hard to see that $\sC^*_{\ell}(\sH_{\rm{all}}, x) =
2\sqrt{\eta(x)(1 - \eta(x))}$ for all $x$ but $\sC^*_{\ell}(\sH, x)$
depends on $\Lambda$ with the minimizing value for $h(x)$ being: $\min
\curl*{\frac{1}{2} \log \frac{\eta(x)}{1 - \eta(x)}, \Lambda}$ if
$\eta(x) \geq 1/2$, $\max \curl*{\frac{1}{2} \log \frac{\eta(x)}{1 -
    \eta(x)}, -\Lambda}$ otherwise. Thus, in the deterministic case,
$\sI_\ell(\sH) = e^{-\Lambda}$.

When the best-in-class error coincides with the Bayes error,
$\sE^*_\ell\paren*{\sH} = \sE^*_\ell\paren*{\sH_{\rm{all}}}$, both the
approximation error and minimizability gaps vanish.
\begin{lemma}
\label{lemma:vanish-minimizability}
For any loss function $\ell$ such that $\sE^*_\ell\paren*{\sH} =
\sE^*_\ell\paren*{\sH_{\rm{all}}} =
\E_{x}\bracket*{\sC_{\ell}^*(\sH_{\rm{all}},x)}$, we have
$\sM_\ell(\sH) = \sA_\ell(\sH) = 0$.
\end{lemma}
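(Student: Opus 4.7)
The plan is to unwind the definitions and chain together two elementary facts: the monotonicity of the pointwise infimum with respect to the hypothesis set, and the super-additivity of the infimum. Both quantities to be bounded are non-negative by construction, so I only need upper bounds matching zero.

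First I would handle the approximation error, which is immediate: by definition $\sA_\ell(\sH) = \sE^*_\ell(\sH) - \sE^*_\ell(\sH_{\rm{all}})$, and the hypothesis $\sE^*_\ell(\sH) = \sE^*_\ell(\sH_{\rm{all}})$ gives $\sA_\ell(\sH) = 0$. Next I would argue for the minimizability gap. From $\sH \subseteq \sH_{\rm{all}}$ we get the pointwise inequality $\sC^*_\ell(\sH, x) \geq \sC^*_\ell(\sH_{\rm{all}}, x)$ for every $x$, hence $\E_x\bracket*{\sC^*_\ell(\sH, x)} \geq \E_x\bracket*{\sC^*_\ell(\sH_{\rm{all}}, x)}$. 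On the other hand, the super-additivity of the infimum (noted in the text right after the definition of the minimizability gap) yields $\sM_\ell(\sH) \geq 0$, i.e.\ $\sE^*_\ell(\sH) \geq \E_x\bracket*{\sC^*_\ell(\sH, x)}$. Combining these with the assumption $\sE^*_\ell(\sH) = \E_x\bracket*{\sC^*_\ell(\sH_{\rm{all}}, x)}$ gives
\[
\E_x\bracket*{\sC^*_\ell(\sH, x)}
\leq \sE^*_\ell(\sH)
= \E_x\bracket*{\sC^*_\ell(\sH_{\rm{all}}, x)}
\leq \E_x\bracket*{\sC^*_\ell(\sH, x)},
\]
so all terms are equal and $\sM_\ell(\sH) = \sE^*_\ell(\sH) - \E_x\bracket*{\sC^*_\ell(\sH, x)} = 0$.

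There is essentially no obstacle here; the only subtlety is being careful not to implicitly use Lemma~\ref{lemma:approximation-error}, since the present lemma is stated for a general loss $\ell$ rather than a pointwise one. The argument above avoids that pitfall because it does not require the identity $\sE^*_\ell(\sH_{\rm{all}}) = \E_x\bracket*{\sC^*_\ell(\sH_{\rm{all}}, x)}$ to be derived; it is supplied directly by the hypothesis of the lemma.
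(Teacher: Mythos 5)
Your proposal is correct and follows essentially the same route as the paper, which simply notes $\sA_\ell(\sH) = 0$ by hypothesis and then invokes the inequality $\sM_\ell(\sH) \leq \sA_\ell(\sH)$ together with non-negativity of the gap. Your explicit chain of inequalities merely re-derives that bound directly from the stated hypothesis, and your remark about not needing Lemma~\ref{lemma:approximation-error} (whose conclusion is supplied by the hypothesis, so the argument covers general, not just pointwise, losses) is a valid point of care but not a different argument.
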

\begin{proof}
By definition, $\sA_{\ell}(\sH) = \sE^*_\ell(\sH) -
\sE^*_\ell\paren*{\sH_{\rm{all}}} = 0$. Since we have $\sM_\ell(\sH) \leq
\sA_\ell(\sH)$, this implies  $\sM_{\ell}(\sH)=0$.
\end{proof}

\section{Examples of \texorpdfstring{$\sH$}{H}-consistency bounds}
\label{app:bounds-example}

Here, we compile some common examples of $\sH$-consistency bounds
for both binary and multi-class classification.
Table~\ref{tab:bounds-binary}, \ref{tab:bounds-comp} and
\ref{tab:bounds-cstnd} include the examples of $\sH$-consistency
bounds for binary margin-based losses, comp-sum losses and constrained
losses, respectively.

These bounds are due to previous work by \citet{awasthi2022h} for
binary margin-based losses, by \citet{mao2023cross} for multi-class
comp-sum losses, and by \citet{AwasthiMaoMohriZhong2022multi} and
\citet{MaoMohriZhong2023characterization} for multi-class constrained
losses, respectively. We consider complete hypothesis sets for binary
classification (see Section~\ref{sec:binary}), and symmetric and
complete hypothesis sets for multi-class classification (see
Section~\ref{sec:multi}).

\begin{table}[t]
  \centering
   \resizebox{\textwidth}{!}{
  \begin{tabular}{@{\hspace{0cm}}lll@{\hspace{0cm}}}
    \toprule
    $\Phi(u)$  & margin-based losses $\ell$   & $\sH$-Consistency bounds\\
    \midrule
    $e^{u}$ & $e^{-yh(x)}$  & $\sE_{\ell_{0-1}}(h) - \sE^*_{\ell_{0-1}}(\sH) + \sM_{\ell_{0-1}}(\sH) \leq \sqrt{2}\paren*{\sE_{\ell}(h)
-\sE^*_{\ell}(\sH) + \sM_{\ell}(\sH)}^{\frac12}$ \\
   $ \log(1 + e^{u})$   & $ \log(1 + e^{-yh(x)})$ & $\sE_{\ell_{0-1}}(h) - \sE^*_{\ell_{0-1}}(\sH) + \sM_{\ell_{0-1}}(\sH) \leq \sqrt{2}\paren*{\sE_{\ell}(h)
-\sE^*_{\ell}(\sH) + \sM_{\ell}(\sH)}^{\frac12}$    \\
  $\max\curl*{0, 1 + u}^2$   & $\max\curl*{0, 1 - yh(x)}^2$ & $\sE_{\ell_{0-1}}(h) - \sE^*_{\ell_{0-1}}(\sH) + \sM_{\ell_{0-1}}(\sH) \leq \paren*{\sE_{\ell}(h)
-\sE^*_{\ell}(\sH) + \sM_{\ell}(\sH)}^{\frac12}$    \\
    $\max\curl*{0, 1 + u}$   & $\max\curl*{0, 1 - yh(x)}$ & $\sE_{\ell_{0-1}}(h) - \sE^*_{\ell_{0-1}}(\sH) + \sM_{\ell_{0-1}}(\sH) \leq \sE_{\ell}(h)
-\sE^*_{\ell}(\sH) + \sM_{\ell}(\sH)$    \\

    \bottomrule
  \end{tabular}
  }
  \vskip 0.1in
  \caption{Examples of $\sH$-consistency bounds for binary margin-based losses.}
\label{tab:bounds-binary}
\end{table}

\begin{table}[t]
  \centering
   \resizebox{\textwidth}{!}{
  \begin{tabular}{@{\hspace{0cm}}lll@{\hspace{0cm}}}
    \toprule
    $\Phi(u)$  & Comp-sum losses $\ell$   & $\sH$-Consistency bounds\\
    \midrule
    $\frac{1 - u}{u}$ & $\sum_{y'\neq y} e^{h(x, y') - h(x, y)}$  & $\sE_{\ell_{0-1}}(h) - \sE^*_{\ell_{0-1}}(\sH) + \sM_{\ell_{0-1}}(\sH) \leq \sqrt{2}\paren*{\sE_{\ell}(h)
-\sE^*_{\ell}(\sH) + \sM_{\ell}(\sH)}^{\frac12}$ \\
    $-\log(u)$   & $-\log\paren*{\frac{e^{h(x, y)}}{\sum_{y'\in \sY}e^{h(x, y')}}}$ & $\sE_{\ell_{0-1}}(h) - \sE^*_{\ell_{0-1}}(\sH) + \sM_{\ell_{0-1}}(\sH) \leq \sqrt{2}\paren*{\sE_{\ell}(h)
-\sE^*_{\ell}(\sH) + \sM_{\ell}(\sH)}^{\frac12}$    \\
    $\frac{1}{\alpha}\bracket*{1 - u^{\alpha}}$    & $\frac{1}{\alpha}\bracket*{1 - \bracket*{\frac{e^{h(x, y)}}
    {\sum_{y'\in  \sY} e^{h(x, y')}}}^{\alpha}}$  &  $\sE_{\ell_{0-1}}(h) - \sE^*_{\ell_{0-1}}(\sH) + \sM_{\ell_{0-1}}(\sH) \leq \sqrt{2n^{\alpha}}\paren*{\sE_{\ell}(h)
-\sE^*_{\ell}(\sH) + \sM_{\ell}(\sH)}^{\frac12}$  \\
    $1 - u$ & $ 1 - \frac{e^{h(x, y)}}{\sum_{y'\in \sY} e^{h(x, y')}}$ & $\sE_{\ell_{0-1}}(h) - \sE^*_{\ell_{0-1}}(\sH) + \sM_{\ell_{0-1}}(\sH) \leq n \paren*{\sE_{\ell}(h)
-\sE^*_{\ell}(\sH) + \sM_{\ell}(\sH)}$    \\
    \bottomrule
  \end{tabular}
  }
  \vskip 0.1in
  \caption{Examples of $\sH$-consistency bounds for comp-sum losses.}
\label{tab:bounds-comp}
\end{table}

\begin{table}[t]
  \centering
   \resizebox{\textwidth}{!}{
  \begin{tabular}{@{\hspace{0cm}}lll@{\hspace{0cm}}}
    \toprule
    $\Phi(u)$  & Constrained losses $\ell$   & $\sH$-Consistency bounds\\
    \midrule
    $e^{u}$ & $\sum_{y'\neq y}e^{h(x, y')}$  & $\sE_{\ell_{0-1}}(h) - \sE^*_{\ell_{0-1}}(\sH) + \sM_{\ell_{0-1}}(\sH) \leq \sqrt{2}\paren*{\sE_{\ell}(h)
-\sE^*_{\ell}(\sH) + \sM_{\ell}(\sH)}^{\frac12}$ \\
   $\max\curl*{0, 1 + u}^2$   & $\sum_{y'\neq y}\max \curl*{0, 1 + h(x, y')}^2$ & $\sE_{\ell_{0-1}}(h) - \sE^*_{\ell_{0-1}}(\sH) + \sM_{\ell_{0-1}}(\sH) \leq \paren*{\sE_{\ell}(h)
-\sE^*_{\ell}(\sH) + \sM_{\ell}(\sH)}^{\frac12}$    \\
  $(1 + u)^2$   & $\sum_{y'\neq y} \paren*{1 + h(x, y')}^2$ & $\sE_{\ell_{0-1}}(h) - \sE^*_{\ell_{0-1}}(\sH) + \sM_{\ell_{0-1}}(\sH) \leq \paren*{\sE_{\ell}(h)
-\sE^*_{\ell}(\sH) + \sM_{\ell}(\sH)}^{\frac12}$    \\
    $\max\curl*{0, 1 + u}$   & $\sum_{y'\neq y}\max \curl*{0, 1 + h(x, y')}$ & $\sE_{\ell_{0-1}}(h) - \sE^*_{\ell_{0-1}}(\sH) + \sM_{\ell_{0-1}}(\sH) \leq \sE_{\ell}(h)
-\sE^*_{\ell}(\sH) + \sM_{\ell}(\sH)$    \\

    \bottomrule
  \end{tabular}
  }
\vskip .1in
  \caption{Examples of $\sH$-consistency bounds for constrained losses with
$\sum_{y\in \sY} h(x, y) = 0$.}
\label{tab:bounds-cstnd}
\end{table}

\section{Comparison with excess error bounds}
\label{app:excess-bounds}

Excess error bounds can be used to derive
bounds for a hypothesis set $\sH$ expressed in terms of the
approximation error. Here, we show, however, that, the resulting
bounds are looser than $\sH$-consistency bounds.

Fix a target loss function $\ell_2$ and a surrogate loss
$\ell_1$. Excess error bounds, also known as \emph{surrogate regret bounds},
are bounds relating the excess errors of these loss functions of the
following form:
\begin{equation}
\label{eq:excess-error-bound}
\forall h \in \sH_{\rm{all}}, \quad
	\psi\paren*{\sE_{\ell_2}(h) - \sE^*_{\ell_2}(\sH_{\rm{all}})}
    	\leq \sE_{\ell_1}(h) - \sE^*_{\ell_1}(\sH_{\rm{all}}),
\end{equation}
where $\psi\colon \Rset_+ \to \Rset_+$ is a non-decreasing and convex
function on $\Rset_+$. Recall that as shown in
\eqref{eq:excess-error-decomp}, the excess error can be written as the
sum of the estimation error and the approximation error. Thus, the
excess error bound can be equivalently expressed as follows:
\begin{equation}
\label{eq:excess-error-bound-equiv}
\forall h \in \sH_{\rm{all}}, \quad
	\psi\paren*{\sE_{\ell_2}(h) - \sE^*_{\ell_2}(\sH) + \sA_{\ell_2}(\sH)}
    	\leq \sE_{\ell_1}(h) - \sE^*_{\ell_1}(\sH) + \sA_{\ell_1}(\sH).
\end{equation}
In Section~\ref{sec:min}, we have shown that the minimizabiliy gap can
be upper bounded by the approximation error $\sM_{\ell}(\sH)\leq
\sA(\sH)$ and is in general a finer quantity for a surrogate loss
$\ell_1$. However, we will show that for a target loss $\ell_2$ that
is \emph{discrete}, the minimizabiliy gap in general coincides with
the approximation error.
\begin{definition}
We say that a target loss $\ell_2$ is \emph{discrete} if we can write
$\ell_2(h, x, y) = \sfL(\hh(x), y)$ for some binary function
$\sfL\colon \sY\times\sY \to \Rset_{+}$.
\end{definition}
In other words, a discrete target loss $ \ell_2 $ is explicitly a
function of both the prediction $ \hh(x) $ and the true label $ y $,
where both belong to the label space $ \sY $. Consequently, it can
assume at most $ n^2 $ distinct discrete values.

Next, we demonstrate that for such discrete target loss functions, if
for any instance, the set of predictions generated by the hypothesis
set completely spans the label space, then the minimizability gap is
precisely equal to the approximation error. For convenience, we denote
by $\sfH(x)$ the set of predictions generated by the hypothesis set on
input $x \in \sX$, defined as $ \sfH(x) = \curl*{\hh(x)\colon h \in
  \sH}$
\begin{theorem}
\label{thm:min-discrete}
Given a discrete target loss function $\ell_2$. Assume that the
hypothesis set $\sH$ satisfies, for any $x \in \sX$, $\sfH(x) =
\sY$. Then, we have $\sI_{\ell_2}(\sH) = 0$ and $\sM_{\ell_2}(\sH) =
\sA_{\ell_2}(\sH)$.
\end{theorem}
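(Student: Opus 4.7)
The plan is to establish $\sI_{\ell_2}(\sH) = 0$ directly and then invoke the decomposition $\sM_{\ell_2}(\sH) = \sA_{\ell_2}(\sH) - \sI_{\ell_2}(\sH)$ derived in Appendix~\ref{app:properties}. That decomposition applies here because a discrete loss $\ell_2(h, x, y) = \sfL(\hh(x), y)$ is pointwise: it depends on $h$ only through the score vector $h(x)$ via the $\argmax$ operation. So the entire task reduces to showing $\sC^*_{\ell_2}(\sH, x) = \sC^*_{\ell_2}(\sH_{\rm{all}}, x)$ for (almost) every $x$, since the integrand in $\sI_{\ell_2}(\sH) = \E_x\bracket*{\sC^*_{\ell_2}(\sH, x) - \sC^*_{\ell_2}(\sH_{\rm{all}}, x)}$ is non-negative.

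The heart of the argument is to rewrite both pointwise infima as minima over the label set. By discreteness, $\sC_{\ell_2}(h, x) = \sum_{y \in \sY} p(x, y) \sfL(\hh(x), y)$ depends on $h$ only through $\hh(x) \in \sY$. Consequently, for any $\sH' \subseteq \sH_{\rm{all}}$,
\[
\sC^*_{\ell_2}(\sH', x) = \min_{z \in \sfH'(x)} \sum_{y \in \sY} p(x, y) \sfL(z, y),
\]
where the infimum is actually a minimum since $\sfH'(x) \subseteq \sY$ is a finite set. Applied to $\sH' = \sH$, the hypothesis $\sfH(x) = \sY$ gives $\sC^*_{\ell_2}(\sH, x) = \min_{z \in \sY} \sum_{y} p(x, y) \sfL(z, y)$. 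Applied to $\sH' = \sH_{\rm{all}}$, one verifies $\sfH_{\rm{all}}(x) = \sY$ as well, since every $z \in \sY$ is the $\argmax$ of some measurable scoring function (e.g.\ the indicator scoring $h(x', y') = \Ind_{y' = z}$, adjusted for the fixed tie-breaking rule if needed). Both conditional best-in-class errors therefore equal the same label-wise minimum.

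Equating the two expressions gives $\sC^*_{\ell_2}(\sH, x) = \sC^*_{\ell_2}(\sH_{\rm{all}}, x)$ pointwise, hence $\sI_{\ell_2}(\sH) = 0$, and combined with the decomposition this yields $\sM_{\ell_2}(\sH) = \sA_{\ell_2}(\sH)$. There is no genuine obstacle in the proof: the only step that requires care is the collapse of the infimum over $h \in \sH'$ to a minimum over the finite subset $\sfH'(x) \subseteq \sY$, which is precisely what makes the discreteness hypothesis combine cleanly with the coverage assumption $\sfH(x) = \sY$. No extra measurability machinery beyond that of Lemma~\ref{lemma:approximation-error} is needed, since pointwise equality of conditional errors trivially gives equality in expectation.
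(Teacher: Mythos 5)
Your proposal is correct and follows essentially the same route as the paper: it invokes the decomposition $\sM_{\ell_2}(\sH) = \sA_{\ell_2}(\sH) - \sI_{\ell_2}(\sH)$ and uses discreteness to collapse both pointwise infima to $\inf_{y' \in \sY} \sum_{y} p(x,y)\,\sfL(y', y)$ under the coverage assumption $\sfH(x) = \sY$. Your explicit verification that $\sfH_{\rm{all}}(x) = \sY$ (via an indicator-type scoring function) is a small extra care the paper leaves implicit, but it is not a different argument.
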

\begin{proof}
As shown in Section~\ref{sec:min}, the minimizability gap can be
decomposed in terms of the approximation error and the difference of
pointwise infima:
\begin{align*}
  \sM_{\ell_{2}}(\sH)
  & = \sA_{\ell_{2}}(\sH) - \sI_{\ell_{2}}(\sH)\\
  & = \sA_{\ell_{2}}(\sH) - \E_x \bracket[\Big]{\sC^*_{\ell_{2}}(\sH, x) - \sC^*_{\ell_{2}}(\sH_{\rm{all}}, x)}.
\end{align*}
By definition and the fact that $\ell_2$ is discrete, the conditional error can be written as
\begin{align*}
\sC_{\ell_2}(h,x) = \sum_{y\in \sY} p(x,y) \ell_2(h, x, y) = \sum_{y\in \sY} p(x,y) \sfL(\hh(x), y).
\end{align*}
Thus, for any $x\in \sX$, the best-in-class conditional error can be expressed as 
\begin{equation*}
\sC_{\ell_2}^*(\sH,x) = \inf_{h\in \sH} \sum_{y\in \sY} p(x,y) \sfL(\hh(x), y) = \inf_{y' \in \sfH(x)} \sum_{y\in \sY} p(x,y) \sfL(y', y).
\end{equation*}
By the assumption that $\sfH(x) = \sY$, we obtain
\begin{equation*}
\forall x\in \sX, \quad \sC_{\ell_2}^*(\sH,x) = \inf_{y' \in \sfH(x)} \sum_{y\in \sY} p(x,y) \sfL(y', y) = \inf_{y' \in \sY} \sum_{y\in \sY} p(x,y) \sfL(y', y) = \sC_{\ell_2}^*(\sH_{\rm{all}},x).
\end{equation*}
Therefore, $\sI_{\ell_{2}}(\sH) = \E_x \bracket[\Big]{\sC^*_{\ell_{2}}(\sH, x) - \sC^*_{\ell_{2}}(\sH_{\rm{all}}, x)} = 0$ and $\sM_{\ell_{2}}(\sH) = \sA_{\ell_{2}}(\sH)$.
\end{proof}
By Theorem~\ref{thm:min-discrete}, for a target loss $\ell_2$ that is
discrete and hypothesis sets $\sH$ modulo mild assumptions, the
minimizabiliy gap coincides with the approximation error. In such
cases, by comparing an excess error bound
\eqref{eq:excess-error-bound-equiv} with the $\sH$-consistency bound
\eqref{eq:H-consistency-bound}:
\begin{align*}
&\text{Excess error bound:} \quad 
	\psi\paren*{\sE_{\ell_2}(h) - \sE^*_{\ell_2}(\sH) + \sA_{\ell_2}(\sH)}
    	\leq \sE_{\ell_1}(h) - \sE^*_{\ell_1}(\sH) + \sA_{\ell_1}(\sH)\\
&\text{$\sH$-consistency bound:} \quad
\psi \paren*{\sE_{\ell_2}(h) - \sE^*_{\ell_2}(\sH) + \sM_{\ell_2}(\sH)} \leq \sE_{\ell_1}(h) - \sE^*_{\ell_1}(\sH) + \sM_{\ell_1}(\sH),
\end{align*}
we obtain that the left-hand side of both bounds are equal (since
$\sM_{\ell_2}(\sH) = \sA_{\ell_2}(\sH) $), while the right-hand side
of the $\sH$-consistency bound is always upper bounded by and can be
finer than the right-hand side of the excess error bound (since
$\sM_{\ell_1}(\sH) \leq \sA_{\ell_1}(\sH) $), which implies that
excess error bounds (or surrogate regret bounds) are in general
inferior to $\sH$-consistency bounds.

\section{Polyhedral losses versus smooth losses}
\label{app:poly-smooth}

Since $\sH$-consistency bounds subsume excess error bounds as a
special case (Appendix~\ref{app:excess-bounds}), the linear growth
rate of polyhedral loss excess error bounds
(\citet{finocchiaro2019embedding}) also dictates a linear growth rate
for polyhedral $\sH$-consistency bounds, if they exist. This is
illustrated by the hinge loss or $\rho$-margin loss which have been
shown to benefit from $\sH$-consistency bounds
\citep{awasthi2022h}.

\ignore{
As mentioned in Appendix~\ref{app:excess-bounds}, $\sH$-consistency bounds include excess error bounds as a special case when $\sH = \sH_{\rm{all}}$. Therefore, the linear growth rate of the excess error bound of polyhedral losses shown by \citet{finocchiaro2019embedding} also implies that the growth rate for the $\sH$-consistency bound in the polyhedral case is linear. 
}

Here, we compare in more detail polyhedral losses and the smooth losses. Assume that a hypothesis set $\sH$ is complete and thus $\sfH(x) =
\sY$ for any $x \in \sX$. By Theorem~\ref{thm:min-discrete}, we have $\sA_{\ell_{0-1}}(\sH) = \sM_{\ell_{0-1}}(\sH)$. As shown by \citet[Theorem~3]{frongillo2021surrogate}, a Bayes-consistent polyhedral loss $\Phi_{\rm{poly}}$ admits the following linear excess error bound, for some $\beta_1 > 0$,
\begin{equation}
\label{eq:bound-poly}
\forall h\in \sH,\, \beta_1 \paren*{\sE_{\ell_{0-1}}(h) - \sE^*_{\ell_{0-1}}(\sH) + \sM_{\ell_{0-1}}(\sH)}
    	\leq \sE_{\Phi_{\rm{poly}}}(h) - \sE^*_{\Phi_{\rm{poly}}}(\sH) + \sA_{\Phi_{\rm{poly}}}(\sH).
\end{equation}
However, for a smooth loss $\Phi_{\rm{smooth}}$, if it satisfies the condition of Theorem~\ref{thm:binary-lower},  $\Phi_{\rm{smooth}}$ admits the following $\sH$-consistency bound:
\begin{equation}
\label{eq:bound-smooth}
\forall h\in \sH,\, \sT\paren*{\sE_{\ell_{0-1}}(h) - \sE^*_{\ell_{0-1}}(\sH) + \sM_{\ell_{0-1}}(\sH)}
    	\leq \sE_{\Phi_{\rm{smooth}}}(h) - \sE^*_{\Phi_{\rm{smooth}}}(\sH) + \sM_{\Phi_{\rm{smooth}}}(\sH).
\end{equation}
where $\sT(t) = \Theta(t^2)$. Therefore, our theory offers a principled basis for comparing polyhedral losses \eqref{eq:bound-poly} and smooth losses \eqref{eq:bound-smooth}, which depends on the following factors:
\begin{itemize}
    \item The growth rate: linear for polyhedral losses, while square-root for smooth losses.
    
    \item The optimization property: smooth losses are more favorable for optimization compared to polyhedral losses, in particular with deep neural networks.
    
    \item The approximation theory: the approximation error $ \sA_{\Phi_{\rm{poly}}}(\sH)$ appears on the right-hand side of the bound for polyhedral losses, whereas a finer quantity, the minimizability gap $\sM_{\Phi_{\rm{smooth}}}(\sH)$, is present on the right-hand side of the bound for smooth losses.
\end{itemize}

\section{Comparison of minimizability gaps across comp-sum losses}

\label{app:M-gaps-comp}
For $\ell_{\tau}^{\rm{comp}}$ loss functions, $\tau \in [0, 2)$, we
  can characterize minimizability gaps as follows.

\begin{restatable}
  {theorem}{GapUpperBoundDetermi}
\label{Thm:gap-upper-bound-determi}
Assume that for any $x \in \sX$, we have $\curl*{\paren*{h(x, 1),
    \ldots, h(x, n)}\colon h \in \sH}$ = $[-\Lambda,
  +\Lambda]^n$. Then, for comp-sum losses $\ell_{\tau}^{\rm{comp}}$
and any deterministic distribution, the minimizability gaps can be
expressed as follows:
\begin{align}
\sM_{\ell_{\tau}^{\rm{comp}}}(\sH)
\leq \wt \sM_{\ell_{\tau}^{\rm{comp}}}(\sH) = f_{\tau} \paren*{\sR^*_{\ell_{\tau = 0}^{\rm{comp}}}(\sH)} - f_{\tau} \paren*{\sC^*_{\ell_{\tau = 0}^{\rm{comp}}}(\sH, x)},
\end{align}
where $f_{\tau}(u) = \log(1 + u) 1_{\tau = 1} + \frac{1}{1 - \tau}
\paren*{(1 + u)^{1 - \tau} - 1} 1_{\tau \neq 1}$ and
$\sC^*_{\ell_{\tau = 0}^{\rm{comp}}}(\sH, x) = e^{-2 \Lambda}(n -
1)$. Moreover, $\wt \sM_{\ell_{\tau}^{\rm{comp}}}(\sH)$ is a
non-increasing function of $\tau$.
\end{restatable}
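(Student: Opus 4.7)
The plan is to leverage the following functional identity: for any $\tau \in [0,2)$, the loss $\ell_{\tau}^{\rm{comp}}$ factors through $\ell_0^{\rm{comp}}$ via $f_\tau$. Specifically, writing $S = \sum_{y'\in\sY} e^{h(x,y')-h(x,y)} = 1 + \ell_0^{\rm{comp}}(h,x,y)$, one checks directly from \eqref{eq:comp-loss} that $\ell_\tau^{\rm{comp}}(h,x,y) = f_\tau\!\paren*{\ell_0^{\rm{comp}}(h,x,y)}$, where $f_\tau$ is the function in the statement. A short calculation gives $f'_\tau(u) = (1+u)^{-\tau}$ and $f''_\tau(u) = -\tau(1+u)^{-\tau-1}$ for $u \geq 0$, so $f_\tau$ is non-decreasing on $\Rset_+$ and concave for every $\tau \in [0,2)$ (linear at $\tau=0$). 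This reduces the analysis of $\sM_{\ell_\tau^{\rm{comp}}}(\sH)$ to a study of the $\tau = 0$ loss composed with a concave, monotone scalar map.

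Next, I exploit that the distribution is deterministic: for each $x$ there is a unique $y_x$ with $p(x,y_x)=1$, so $\sC_{\ell}(h,x) = \ell(h,x,y_x)$ for any pointwise loss $\ell$. Applied to $\ell_\tau^{\rm{comp}}$ and $\ell_0^{\rm{comp}}$, this gives $\sC_{\ell_\tau^{\rm{comp}}}(h,x) = f_\tau\!\paren*{\sC_{\ell_0^{\rm{comp}}}(h,x)}$, and since $f_\tau$ is non-decreasing, $\sC^*_{\ell_\tau^{\rm{comp}}}(\sH,x) = f_\tau\!\paren*{\sC^*_{\ell_0^{\rm{comp}}}(\sH,x)}$. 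The hypothesis-set assumption $\{h(x,\cdot)\colon h\in\sH\} = [-\Lambda,\Lambda]^n$ means that $\sC_{\ell_0^{\rm{comp}}}(h,x) = \sum_{y' \neq y_x} e^{h(x,y')-h(x,y_x)}$ is minimized by choosing $h(x,y_x) = \Lambda$ and $h(x,y') = -\Lambda$ for $y' \neq y_x$, yielding $\sC^*_{\ell_0^{\rm{comp}}}(\sH,x) = (n-1)e^{-2\Lambda}$, which is independent of $x$. The upper bound then follows from Jensen applied to the concave $f_\tau$: $\sE^*_{\ell_\tau^{\rm{comp}}}(\sH) = \inf_h \E_x\bracket*{f_\tau(\sC_{\ell_0^{\rm{comp}}}(h,x))} \leq \inf_h f_\tau(\sR_{\ell_0^{\rm{comp}}}(h)) = f_\tau(\sR^*_{\ell_0^{\rm{comp}}}(\sH))$, using monotonicity to push the infimum inside $f_\tau$. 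Combining with $\E_x[\sC^*_{\ell_\tau^{\rm{comp}}}(\sH,x)] = f_\tau((n-1)e^{-2\Lambda})$ yields $\sM_{\ell_\tau^{\rm{comp}}}(\sH) \leq f_\tau(\sR^*_{\ell_0^{\rm{comp}}}(\sH)) - f_\tau((n-1)e^{-2\Lambda}) = \wt\sM_{\ell_\tau^{\rm{comp}}}(\sH)$.

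For monotonicity in $\tau$, I would use the integral representation $f_\tau(u) - f_\tau(v) = \int_v^u (1+s)^{-\tau} ds$ valid for all $\tau \in [0,2)$ (including $\tau = 1$). Since $\sR^*_{\ell_0^{\rm{comp}}}(\sH) \geq \E_x[\sC^*_{\ell_0^{\rm{comp}}}(\sH,x)] = (n-1)e^{-2\Lambda}$ by super-additivity of the infimum, the integration range $[(n-1)e^{-2\Lambda}, \sR^*_{\ell_0^{\rm{comp}}}(\sH)]$ lies in $\Rset_+$, and on this range the integrand $(1+s)^{-\tau}$ is pointwise non-increasing in $\tau$ (since $1+s \geq 1$). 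Hence $\wt\sM_{\ell_\tau^{\rm{comp}}}(\sH)$ is non-increasing in $\tau$.

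I expect the only mildly subtle step to be the Jensen-plus-monotonicity argument that exchanges the outer $\inf_h$ with the concave map $f_\tau$: one must note that concavity gives $\E_x[f_\tau(\cdot)] \leq f_\tau(\E_x[\cdot])$ for each fixed $h$, and then monotonicity of $f_\tau$ on $\Rset_+$ lets the $\inf_h$ move inside. The remaining steps are essentially bookkeeping: verifying the factorization $\ell_\tau^{\rm{comp}} = f_\tau \circ \ell_0^{\rm{comp}}$ from the definition, evaluating the box minimum in closed form, and writing the telescoping integral for monotonicity.
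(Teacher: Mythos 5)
Your proof is correct and follows essentially the same route as the paper's: the factorization $\ell_{\tau}^{\mathrm{comp}} = f_{\tau}\circ \ell_{0}^{\mathrm{comp}}$, Jensen plus monotonicity of the concave $f_{\tau}$ to bound $\sE^*_{\ell_{\tau}^{\mathrm{comp}}}(\sH)$ by $f_{\tau}\paren*{\sR^*_{\ell_{0}^{\mathrm{comp}}}(\sH)}$, the explicit box minimizer giving $\sC^*_{\ell_{0}^{\mathrm{comp}}}(\sH, x) = (n-1)e^{-2\Lambda}$ in the deterministic case, and the non-increase in $\tau$ of $f_{\tau}(u_1) - f_{\tau}(u_2)$ for $u_1 \geq u_2 \geq 0$, which you verify more explicitly than the paper via the integral representation $\int_{u_2}^{u_1}(1+s)^{-\tau}\,ds$ together with super-additivity of the infimum. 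The only nitpick is that moving $\inf_h$ inside $f_{\tau}$ in the needed direction uses continuity of $f_{\tau}$ (pass to an $\e$-minimizer and let $\e \to 0$), not monotonicity alone — a trivial fix.
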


\begin{proof}
Since $f_{\tau}$ is concave and non-decreasing, and the equality $\ell_{\tau} = f_{\tau} \paren*{\ell_{\tau = 0}}$ holds, the minimizability
gaps can be upper bounded as follows, for any $\tau \geq $0,
\begin{align*}
\sM_{\ell_{\tau}^{\rm{comp}}}(\sH)
\leq f_{\tau}\paren*{\sR^*_{\ell_{\tau=0}^{\rm{comp}}}(\sH)} - \E_x[\sC^*_{\ell_{\tau}^{\rm{comp}}}(\sH, x)].
\end{align*}
Since the distribution is deterministic, the conditional error can be expressed as follows:
\begin{equation}
\label{eq:cond-comp-sum-determi}
\begin{aligned}
\sC_{\ell_{\tau}^{\rm{comp}}}(h, x)  =  & f_{\tau}\paren*{\sum_{y'\neq y_{\max}}\exp\paren*{h(x,y')-h(x,y_{\max})}}
\end{aligned}
\end{equation}
where $y_{\max} = \argmax p(x,y)$.
Using the fact that $f_{\tau}$ is increasing for any $\tau>0$, the hypothesis
$
h^*\colon (x, y) \mapsto \Lambda 1_{y = y_{\max}} - \Lambda 1_{y \neq y_{\max}}
$
achieves the best-in-class conditional error.
Thus, 
\begin{align*}
\sC^*_{\ell_{\tau}^{\rm{comp}}}(\sH, x) 
=  \sC_{\ell_{\tau}^{\rm{comp}}}(h^*, x) = f_{\tau}\paren*{\sC^*_{\ell_{\tau = 0}^{\rm{comp}}}(\sH, x)}
\end{align*}
where $\sC^*_{\ell_{\tau = 0}^{\rm{comp}}}(\sH, x) = e^{-2\Lambda}(n - 1)$. Therefore,
\begin{align*}
\sM_{\ell_{\tau}^{\rm{comp}}}(\sH)
\leq f_{\tau}\paren*{\sR^*_{\ell_{\tau = 0}^{\rm{comp}}}(\sH)} - f_{\tau}\paren*{\sC^*_{\ell_{\tau = 0}^{\rm{comp}}}(\sH, x)}.
\end{align*}
This completes the first part of the proof.
Using the fact that $\tau \mapsto f_{\tau} (u_1) - f_{\tau}(u_2)$ is a non-increasing function of $\tau$ for any $u_1 \geq u_2 \geq 0$, the second proof is completed as well.
\end{proof}

The theorem shows that for comp-sum
loss functions $\ell_{\tau}^{\rm{comp}}$, the minimizability gaps are
non-increasing with respect to $\tau$. Note that $\Phi^{\tau}$
satisfies the conditions of Theorem~\ref{thm:comp-lower} for any $\tau
\in [0, 2)$. Therefore, focusing on behavior near zero (ignoring
  constants), the theorem provides a principled comparison of
  minimizability gaps and $\sH$-consistency bounds across different
  comp-sum losses.

\section{Small surrogate minimizability gaps}
\label{app:small-M-gaps}

While minimizability gaps vanish in special scenarios (e.g.,
unrestricted hypothesis sets, best-in-class error matching Bayes
error), we now seek broader conditions for zero or small surrogate
minimizability gaps to make our bounds more meaningful.
  
\subsection{Small surrogate minimizability gaps: binary classification}
\label{app:small-M-gaps-binary}

We first study binary classification, with
multi-class results given in Appendix~\ref{app:small-M-gaps-multi}. We
address pointwise surrogate losses which take the form $\ell(h(x), y)$
for a labeled point $(x, y)$.
We write $A = \curl*{h(x) \colon h \in \sH}$ to denote the set of
predictor values at $x$, which we assume to be independent of
$x$.

\textbf{Deterministic scenario}. We first consider the deterministic
scenario, where the conditional probability $p(x,y)$ is either zero or
one. For a deterministic distribution, we denote by $\sX_+$ the subset
of $\sX$ over which the label is $+1$ and by $\sX_-$ the subset of
$\sX$ over which the label is $-1$. For convenience, let $\ell_+ =
\inf_{\alpha \in A} \ell(\alpha, +1)$ and $\ell_- = \inf_{\alpha \in
  A} \ell(\alpha, -1)$.

\begin{restatable}{theorem}{ZeroMinGap}
\label{th:ZeroMinGap}
Assume that $\sD$ is deterministic and that the best-in-class error is
achieved by some $h^* \in \sH$. Then, the minimizability gap is null,
$\sM(\sH) = 0$, iff
\begin{align*}
  \ell(h^*(x), +1)  = \ell_+ \text{ a.s.\ over $\sX_+$}, \quad
  \ell(h^*(x), -1)  = \ell_- \text{ a.s.\ over $\sX_-$}.
\end{align*}
If further $\alpha \mapsto \ell(\alpha, +1)$ and $\alpha \mapsto
\ell(\alpha, -1)$ are injective and $\ell_+ = \ell(\alpha_+, +1)$,
$\ell_- = \ell(\alpha_-, -1)$, then, the condition is equivalent to
$h^*(x) = \alpha_+ 1_{x \in \sX_+} + \alpha_- 1_{x \in \sX_-}$
Furthermore, the minimizability gap is bounded by $\e$ iff $p
\paren*{\E \bracket*{\ell(h^*(x), +1) \mid y = +1} - \ell_+ } + (1 -
p) \paren*{\E\bracket*{\ell(h^*(x), -1) \mid y = -1} - \ell_-} \leq \e
$. In particular, the condition implies:
\begin{align*}
  & \E \bracket*{\ell(h^*(x), +1) \mid y = +1} - \ell_{+} \leq \frac{\e}{p}
  \quad \text{and} \quad
  \E \bracket*{\ell(h^*(x), -1) \mid y = -1} - \ell_- \leq \frac{\e}{1 - p}.
\end{align*}
\end{restatable}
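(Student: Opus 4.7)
The plan is to obtain a closed-form expression for $\sM_{\ell}(\sH)$ in the deterministic setting, and then to read off each conclusion of the theorem directly from it.

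First, I would unwind the definitions. Under a deterministic distribution, $p(x, +1) \in \{0, 1\}$, so $\sC_\ell(h, x) = \ell(h(x), +1)$ on $\sX_+$ and $\sC_\ell(h, x) = \ell(h(x), -1)$ on $\sX_-$. Since $A = \{h(x) \colon h \in \sH\}$ is independent of $x$, the pointwise infima are
\[
\sC^*_\ell(\sH, x) = \ell_+ \text{ on } \sX_+,\quad \sC^*_\ell(\sH, x) = \ell_- \text{ on } \sX_-,
\]
hence $\E_x[\sC^*_\ell(\sH, x)] = p\,\ell_+ + (1-p)\,\ell_-$ with $p = \P(y = +1)$. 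Using the assumption that the best-in-class error is attained by $h^* \in \sH$, I would then write $\sE^*_\ell(\sH) = \sE_\ell(h^*) = p\,\E[\ell(h^*(x), +1) \mid y = +1] + (1-p)\,\E[\ell(h^*(x), -1) \mid y = -1]$. Subtracting gives the master identity
\[
\sM_\ell(\sH) = p\,\bigl(\E[\ell(h^*(x), +1) \mid y = +1] - \ell_+\bigr) + (1-p)\,\bigl(\E[\ell(h^*(x), -1) \mid y = -1] - \ell_-\bigr),
\]
in which both summands are non-negative because $\ell_{\pm}$ are the pointwise infima over $A$.

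From this identity each claim is essentially a formality. For the first statement, $\sM_\ell(\sH) = 0$ forces both non-negative summands to vanish (when $p \in \{0, 1\}$ the other summand is trivial and the corresponding a.s.\ condition is vacuous); since $\ell(h^*(x), +1) \geq \ell_+$ a.s.\ on $\sX_+$, a vanishing conditional expectation is equivalent to $\ell(h^*(x), +1) = \ell_+$ a.s.\ on $\sX_+$, and symmetrically for $\sX_-$. Under the additional injectivity assumption, the equations $\ell(h^*(x), +1) = \ell_+ = \ell(\alpha_+, +1)$ on $\sX_+$ and $\ell(h^*(x), -1) = \ell_- = \ell(\alpha_-, -1)$ on $\sX_-$ pin down $h^*(x)$ pointwise, yielding the stated indicator-combination formula. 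For the $\e$-bound, the identity gives the equivalence with the stated weighted-sum condition, and dropping one of the two non-negative summands and dividing by the corresponding prior yields the two one-sided bounds $\e/p$ and $\e/(1-p)$.

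There is no real obstacle here; the only step that requires a little care is guaranteeing measurability and justifying that $\inf_{h \in \sH} \ell(h(x), +1) = \inf_{\alpha \in A} \ell(\alpha, +1) = \ell_+$ independently of $x$, which is immediate from the hypothesis that $A$ does not depend on $x$. I would also note explicitly the boundary cases $p \in \{0, 1\}$ so that the ``a.s.\ over $\sX_\pm$'' formulations remain consistent (either $\sX_+$ or $\sX_-$ has measure zero and the corresponding condition is vacuous).
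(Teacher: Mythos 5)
Your proposal is correct and follows essentially the same route as the paper's proof: derive the identity $\sM(\sH) = p\,\E[\ell(h^*(x),+1)-\ell_+ \mid y=+1] + (1-p)\,\E[\ell(h^*(x),-1)-\ell_- \mid y=-1]$ from the deterministic structure and the attainment of the best-in-class error, then read off each claim from the non-negativity of the two summands. The extra remarks on the boundary cases $p \in \{0,1\}$ and on $A$ being independent of $x$ are fine but not substantively different from the paper's argument.
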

\begin{proof}
By definition of $h^*$, using the shorthand $p = \P[y = +1]$, we can write
  \begin{align*}  
  \inf_{h \in \sH} \E[\ell(h(x), y)]
  & = \E[\ell(h^*(x), y)]\\
  & = p \E[\ell(h^*(x), +1) \mid y = +1]
  + (1 - p) \E[\ell(h^*(x), -1) \mid y = -1].
\end{align*}
  Since the distribution is deterministic, the expected pointwise infimum
  can be rewritten as follows:
\begin{align*}  
  \E_{x}\bracket*{\inf_{h \in \sH} \E_{y}[\ell(h(x), y) \mid x ]}
  =  \E_{x}\bracket*{\inf_{\alpha \in A} \E_{y}[\ell(\alpha, y) \mid x]}
  & = p \inf_{\alpha \in A} \ell(\alpha, +1)
  + (1 - p) \inf_{\alpha \in A} \ell(\alpha, -1)\\
  & = p \ell_+
  + (1 - p) \ell_-,
\end{align*}
where $\ell_+ = \inf_{\alpha \in A} \ell(\alpha, +1)$
and $\ell_- = \inf_{\alpha \in A} \ell(\alpha, -1)$.
Thus, we have
\begin{align*}
  \sM(\sH)
  & = p \E\bracket*{\ell(h^*(x), +1) - \ell_+ \mid y = +1}
  + (1 - p) \E\bracket*{\ell(h^*(x), -1) - \ell_- \mid y = -1}.
\end{align*}
In view of that, since, by definition of $\ell_+$ and $\ell_-$, the
expressions within the conditional expectations are non-negative, the
equality $\sM(\sH) = 0$ holds iff $\ell(h^*(x), +1) - \ell_+ = 0$
almost surely for any $x$ in $\sX_+$ and $\ell(h^*(x), -1) - \ell_- =
0$ almost surely for any $x$ in $\sX_-$. This completes the first part of the proof. 
Furthermore,
$\sM(\sH) \leq \e$ is equivalent to
\[
p \E\bracket*{\ell(h^*(x), +1) - \ell_+ \mid y = +1}
+ (1 - p) \E\bracket*{\ell(h^*(x), -1) - \ell_- \mid y = -1} \leq \e
\]
that is
\[
p \paren*{\E \bracket*{\ell(h^*(x), +1) \mid y = +1} - \ell_+ }
+ (1 - p) \paren*{\E\bracket*{\ell(h^*(x), -1) \mid y = -1} - \ell_-} \leq \e.
\]
In light of the non-negativity of the expressions, this implies in
particular:
\begin{align*}
  & \E \bracket*{\ell(h^*(x), +1) \mid y = +1} - \ell_+ \leq \frac{\e}{p}
  \quad \text{and} \quad
  \E \bracket*{\ell(h^*(x), -1) \mid y = -1} - \ell_- \leq \frac{\e}{1 - p}.
\end{align*}
This completes the second part of the proof.
\end{proof}

\setlength{\intextsep}{0pt}
\setlength{\columnsep}{10pt}
\begin{wrapfigure}{r}{0.25\textwidth}
  \includegraphics[width=0.25\textwidth]{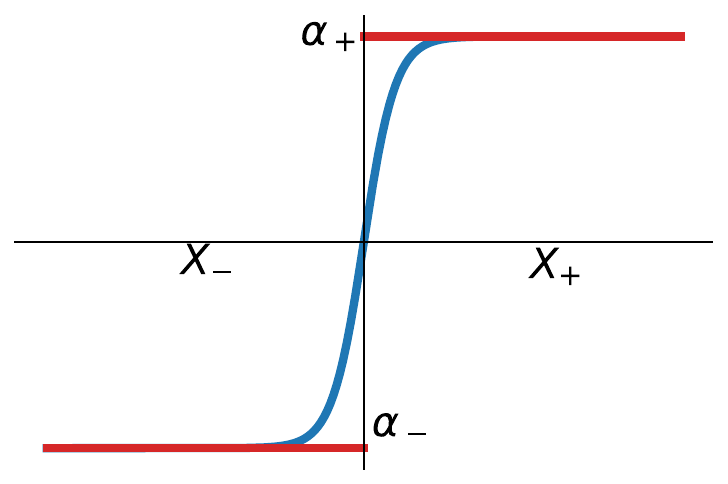}
  \vskip -.05in
\captionsetup{format=plain}
\caption{Approximation provided by sigmoid activation function.}
\label{fig:illustration}
\end{wrapfigure}
The theorem suggests that, under those assumptions, for the surrogate
minimizability gap to be zero, the best-in-class hypothesis must be
piecewise constant with specific values on $\sX_+$ and $\sX_-$. The
existence of such a hypothesis in $\sH$ depends both on the complexity
of the decision surface separating $\sX_+$ and $\sX_-$ and on that of
the hypothesis set $\sH$.  More generally, when the best-in-class
classifier $\e$-approximates $\alpha_+$ over $\sX_+$ and $\alpha_-$
over $\sX_-$, then the minimizability gap is bounded by
$\e$.\ignore{The existence of such a hypothesis in $\sH$ depends on
  the complexity of the decision surface.}  As an example, when the
decision surface is a hyperplane, a hypothesis set of linear functions
combined with a sigmoid activation function can provide such a good
approximation (see Figure~\ref{fig:illustration} for an illustration
in a simple case).

\textbf{Stochastic scenario}. Here, we present a general result that
is a direct extension of that of the deterministic scenario.  We show
that the minimizability gap is zero when there exists $h^*\in \sH$
that matches $\alpha^*(x)$ for all $x$, where $\alpha^*(x)$ is the
minimizer of the conditional error.  We also show that the
minimizability gap is bounded by $\e$ when there exists $h^*\in \sH$
whose conditional error $\e$-approximates best-in-class conditional
error for all $x$.

\begin{restatable}{theorem}{ZeroMinGapStochastic}
\label{th:ZeroMinGapStochastic}
  The best-in-class error is achieved by some $h^*\in \sH$ and the
  minimizability gap is null, $\sM(\sH) = 0$, iff there exists $h^*\in
  \sH$ such that for all $x$,
  \begin{align}
  \label{eq:cond-zero-stochastic}
  \E_{y}[\ell(h^*(x), y) \mid x]
  = \inf_{\alpha \in A} \E_{y}[\ell(\alpha, y) \mid x] \text{ a.s.\ over $\sX$}.
  \end{align}
  If further $\alpha \mapsto \E_{y}[\ell(\alpha, y) \mid x]$ is injective
  and $\inf_{\alpha \in A} \E_{y}[\ell(\alpha, y) \mid x] = \E_{y}[\ell(\alpha^*(x), y) \mid x]$, then, the condition is equivalent to
  $h^*(x) = \alpha^*(x) \text{ a.s.\ for $x \in \sX$}$. Furthermore,
  the minimizability gap is bounded by $\e$, $\sM(\sH) \leq \e$, iff
  there exists $h^*\in \sH$ such that
\begin{align}
\label{eq:cond-epsilon-stochastic}
\E_{x}\bracket*{\E_{y}[\ell(h^*(x), y) \mid x]
  - \inf_{\alpha \in A} \E_{y}[\ell(\alpha, y) \mid x]} \leq \e.
\end{align}
\end{restatable}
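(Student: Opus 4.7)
The key observation is that the pointwise structure of $\ell$, combined with the assumption that $A = \curl*{h(x) \colon h \in \sH}$ does not depend on $x$, collapses the minimizability gap into a single integral of a pointwise non-negative quantity. I would first record that $\sC_\ell(h, x) = \E_y[\ell(h(x), y) \mid x]$ and $\sC^*_\ell(\sH, x) = \inf_{\alpha \in A} \E_y[\ell(\alpha, y) \mid x]$. When $h^* \in \sH$ attains $\sE^*_\ell(\sH)$, (\ref{eq:mingap}) becomes
\[
\sM_\ell(\sH) = \E_x\bracket*{\E_y[\ell(h^*(x), y) \mid x] - \inf_{\alpha \in A} \E_y[\ell(\alpha, y) \mid x]},
\]
whose integrand is non-negative because $h^*(x) \in A$ for every $x$.

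The null-gap equivalence then falls out cleanly. Assuming achievement by $h^*$ and $\sM_\ell(\sH) = 0$, the non-negative integrand must vanish almost surely, which is exactly (\ref{eq:cond-zero-stochastic}). Conversely, if some $h^* \in \sH$ satisfies (\ref{eq:cond-zero-stochastic}), integrating gives $\sE_\ell(h^*) = \E_x[\sC^*_\ell(\sH, x)] \leq \sE^*_\ell(\sH) \leq \sE_\ell(h^*)$, so equality holds throughout, proving achievement by $h^*$ and $\sM_\ell(\sH) = 0$ in one stroke. The injective refinement is then a one-line consequence: injectivity of $\alpha \mapsto \E_y[\ell(\alpha, y) \mid x]$ on $A$, combined with the assumed minimizer $\alpha^*(x)$, reduces (\ref{eq:cond-zero-stochastic}) to $h^*(x) = \alpha^*(x)$ almost surely.

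For the $\e$-version, the same rewriting yields the implication that if some $h^* \in \sH$ obeys (\ref{eq:cond-epsilon-stochastic}), then $\sE^*_\ell(\sH) \leq \sE_\ell(h^*) \leq \E_x[\sC^*_\ell(\sH, x)] + \e$, so $\sM_\ell(\sH) \leq \e$. In the reverse direction, I would extract a sequence $h_n^* \in \sH$ with $\sE_\ell(h_n^*) \downarrow \sE^*_\ell(\sH)$ from the definition of the infimum; the gap $\E_x[\sC_\ell(h_n^*, x) - \sC^*_\ell(\sH, x)] = \sE_\ell(h_n^*) - \E_x[\sC^*_\ell(\sH, x)]$ then decreases to $\sM_\ell(\sH) \leq \e$, providing the desired $h^*$ (or an approximating witness).

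The main obstacle is the bookkeeping around whether the infimum defining $\sE^*_\ell(\sH)$ is attained: for the zero-gap statement attainment is part of the hypothesis, but for the $\e$-bound ``only if'' direction one must either invoke an approximating sequence as above, or read the conclusion up to infima (i.e.\ ``for every $\e' > \e$ there exists $h^*$ with the inequality at level $\e'$''). Measurability of $x \mapsto \inf_{\alpha \in A} \E_y[\ell(\alpha, y) \mid x]$ is not a new issue, since it is handled in the same way as in the proof of Lemma~\ref{lemma:approximation-error}.
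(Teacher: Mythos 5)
Your proposal is correct and follows essentially the same route as the paper's proof: both rewrite $\sM(\sH)$ as $\E_x\bracket*{\E_y[\ell(h^*(x),y)\mid x] - \inf_{\alpha\in A}\E_y[\ell(\alpha,y)\mid x]}$ for an attaining $h^*$, deduce the a.s.\ vanishing from non-negativity of the integrand, and obtain the converse and the $\e$-version by bounding $\sM(\sH)$ by that same expression for the candidate $h^*$. Your extra care about attainment in the ``only if'' direction of the $\e$-bound is warranted (the paper's argument there implicitly reuses the attaining $h^*$ from the first part), but this is a refinement of the same argument, not a different approach.
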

\begin{proof}
Assume that the best-in-class
  error is achieved by some $h^*\in \sH$.
Then, we can write
  \begin{align*}  
  \inf_{h \in \sH} \E[\ell(h(x), y)]
   = \E[\ell(h^*(x), y)]=\E_{x}\bracket*{\E_{y}[\ell(h^*(x), y) \mid x]}.
\end{align*}
The expected pointwise infimum
  can be rewritten as follows:
\begin{align*}  
  \E_{x}\bracket*{\inf_{h \in \sH} \E_{y}[\ell(h(x), y) \mid x]}
  =  \E_{x}\bracket*{\inf_{\alpha \in A} \E_{y}[\ell(\alpha, y) \mid x]}.
\end{align*}
Thus, we have
\begin{align*}
  \sM(\sH)
  & = \E_{x}\bracket*{\E_{y}[\ell(h^*(x), y) \mid x]
    - \inf_{\alpha \in A} \E_{y}[\ell(\alpha, y) \mid x]}.
\end{align*}
In view of that, since, by the definition of infimum, the
expressions within the marginal expectations are non-negative, the condition that $\sM(\sH) = 0$ implies that  \begin{align*}
  \E_{y}[\ell(h^*(x), y) \mid x] = \inf_{\alpha \in A} \E_{y}[\ell(\alpha, y) \mid x] \text{ a.s.\ over $\sX$}.
  \end{align*}
On the other hand, if there exists $h^*\in \sH$ such that the
condition \eqref{eq:cond-zero-stochastic} holds, then,
\begin{align*}
 \sM(\sH)
  & =  \inf_{h \in \sH} \E[\ell(h(x), y)] -  \E_{x}\bracket*{\inf_{\alpha \in A} \E_{y}[\ell(\alpha, y) \mid x]}\leq \E_{x}\bracket*{\E_{y}[\ell(h^*(x), y) \mid x]-\inf_{\alpha \in A} \E_{y}[\ell(\alpha, y) \mid x]} = 0.
\end{align*}
Since $\sM(\sH)$ is non-negative, the inequality is achieved. Thus, we
have
\begin{align*}
  \sM(\sH) = 0 \text{ and } \inf_{h \in \sH} \E[\ell(h(x), y)] = \E[\ell(h^*(x), y)].
\end{align*}
If there exists $h^*\in \sH$ such that the
condition~\eqref{eq:cond-epsilon-stochastic} holds, then,
\begin{align*}
 \sM(\sH)
  & =  \inf_{h \in \sH} \E[\ell(h(x), y)] -  \E_{x}\bracket*{\inf_{\alpha \in A} \E_{y}[\ell(\alpha, y) \mid x]}\leq \E_{x}\bracket*{\E_{y}[\ell(h^*(x), y) \mid x]-\inf_{\alpha \in A} \E_{y}[\ell(\alpha, y) \mid x]} = \e.
\end{align*}
On the other hand, since we have
\begin{align*}
  \sM(\sH)
  & = \E_{x}\bracket*{\E_{y}[\ell(h^*(x), y) \mid x] - \inf_{\alpha \in A} \E_{y}[\ell(\alpha, y) \mid x]},
\end{align*}
$\sM(\sH) \leq \e$ implies that
\[
\E_{x}\bracket*{\E_{y}[\ell(h^*(x), y) \mid x] - \inf_{\alpha \in A} \E_{y}[\ell(\alpha, y) \mid x]} \leq \e.
\]
This completes the proof.
\end{proof}

In deterministic settings,
condition~\eqref{eq:cond-epsilon-stochastic} coincides with that of
Theorem~\ref{th:ZeroMinGap}. However, in stochastic scenarios, the
existence of such a hypothesis depends on both decision surface
complexity and the conditional distribution's properties.  For
illustration, see Appendix~\ref{app:examples} where we analyze
exponential, logistic (binary), and multi-class logistic losses.
We thoroughly analyzed minimizability gaps, comparing them across
comp-sum losses, and identifying conditions for zero or small gaps.
These findings help inform surrogate loss selection.
In Appendix~\ref{app:excess-bounds}, we further demonstrate the key
role of minimizability gaps in comparing excess bounds with
$\sH$-consistency bounds.  Importantly, combining $\sH$-consistency
bounds with surrogate loss Rademacher complexity bounds allows us to
derive zero-one loss (estimation) learning bounds for surrogate loss
minimizers (see Appendix~\ref{app:generalization-bound}).

\subsection{Small surrogate minimizability gaps: multi-class classification}
\label{app:small-M-gaps-multi}

We consider the multi-class setting with label space $[n] = \curl*{1,
  2, \ldots, n}$. In this setting, the surrogate loss incurred by a
predictor $h$ at a labeled point $(x, y)$ can be expressed by
$\ell(h(x), y)$, where $h(x) = \bracket*{h(x, 1), \ldots, h(x, n)}$ is
the score vector of the predictor $h$.
We denote by $A$ the set of values in $\Rset^n$ taken by the score
vector of predictors in $\sH$ at $x$, which we assume to be
independent of $x$: $A = \curl*{h(x) \colon h \in \sH}$, for all $x
\in \sX$.

\textbf{Deterministic scenario}.
We first consider the deterministic scenario, where the conditional
probability $p(x,y)$ is either zero or one. For a deterministic distribution, we denote by $\sX_{k}$ the subset of
$\sX$ over which the label is $k$. For convenience, let  $\ell_k = \inf_{\alpha \in A} \ell(\alpha, k)$, for any $k\in [n]$.

\begin{restatable}{theorem}{ZeroMinGapMulti}
\label{th:ZeroMinGapMulti}
  Assume that $\sD$ is deterministic and that the best-in-class error
  is achieved by some $h^* \in \sH$. Then, the minimizability gap is
  null, $\sM(\sH) = 0$, iff
  \begin{align*}
  \forall k \in [n],\, \ell(h^*(x), k) & = \ell_k  \text{ a.s.\ over $\sX_{k}$}.
  \end{align*}
  If further $\alpha \mapsto \ell(\alpha, k)$ is injective and
  $\ell_k = \ell(\alpha_{k}, k)$ for all $k \in [n]$,
  then, the condition is equivalent to
  $
    \forall k \in [n],\, h^*(x) =
    \alpha_{k}  \text{ a.s.\ for $x \in \sX_{k}$}.
  $ Furthermore, the minimizability
  gap is bounded by $\e$, $\sM(\sH) \leq \e$, iff
\[
\sum_{k \in [n]}  p_k \paren*{\E \bracket*{\ell(h^*(x), k) \mid y = k} - \ell_k }\leq \e.
\]
In particular, the condition implies:
\begin{align*}
  \E \bracket*{\ell(h^*(x), k) \mid y = k} - \ell_k \leq \frac{\e}{p_k},\, \forall k\in [n],
\end{align*}
\end{restatable}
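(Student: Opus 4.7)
The plan is to mirror the argument from Theorem~\ref{th:ZeroMinGap} directly, extending the two-class partition $\sX = \sX_+ \cup \sX_-$ to the $n$-class partition $\sX = \bigcup_{k \in [n]} \sX_k$. First I would write $\sM(\sH) = \E[\ell(h^*(x), y)] - \E_x[\inf_{\alpha \in A} \E_y[\ell(\alpha, y) \mid x]]$, and decompose the first expectation by conditioning on $y$, which gives
\[
\E[\ell(h^*(x), y)] = \sum_{k \in [n]} p_k \, \E\bracket*{\ell(h^*(x), k) \mid y = k},
\]
where $p_k = \P[y = k]$. For the second term, the key observation is that under a deterministic distribution, conditional on $x \in \sX_k$ the label is almost surely $k$, so $\inf_{\alpha \in A} \E_y[\ell(\alpha, y) \mid x] = \inf_{\alpha \in A} \ell(\alpha, k) = \ell_k$ for $x \in \sX_k$. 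Marginalizing via $\P[\sX_k] = p_k$ yields $\E_x[\inf_{\alpha \in A} \E_y[\ell(\alpha, y) \mid x]] = \sum_{k \in [n]} p_k \, \ell_k$. Subtracting gives
\[
\sM(\sH) = \sum_{k \in [n]} p_k \, \E\bracket*{\ell(h^*(x), k) - \ell_k \mid y = k}.
\]

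Now every term inside the sum is non-negative: $\ell(h^*(x), k) \geq \ell_k$ by the definition of $\ell_k$ as an infimum over $A$ (and $h^*(x) \in A$), and $p_k \geq 0$. Therefore $\sM(\sH) = 0$ if and only if for each $k$ with $p_k > 0$, the integrand vanishes almost surely on $\sX_k$, i.e.\ $\ell(h^*(x), k) = \ell_k$ a.s.\ over $\sX_k$. (For classes with $p_k = 0$ the condition on $\sX_k$ is vacuous since $\sX_k$ has measure zero, so the stated ``for all $k$'' formulation is equivalent a.s.) This establishes the first equivalence. Under the added assumption that $\alpha \mapsto \ell(\alpha, k)$ is injective with $\ell_k = \ell(\alpha_k, k)$, the equality $\ell(h^*(x), k) = \ell(\alpha_k, k)$ forces $h^*(x) = \alpha_k$ a.s.\ on $\sX_k$, giving the second equivalence.

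For the $\e$-approximate statement, the identity $\sM(\sH) = \sum_{k \in [n]} p_k \paren*{\E\bracket*{\ell(h^*(x), k) \mid y = k} - \ell_k}$ means $\sM(\sH) \leq \e$ is literally the stated inequality. Since every summand is non-negative, I can drop all but the $k$-th to obtain $p_k \paren*{\E[\ell(h^*(x), k) \mid y = k] - \ell_k} \leq \e$ for each $k$, and dividing by $p_k > 0$ yields the per-class conclusion $\E[\ell(h^*(x), k) \mid y = k] - \ell_k \leq \e / p_k$.

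There is no genuine obstacle here beyond the binary case; the main thing to track carefully is that the deterministic assumption is exactly what collapses the pointwise infimum $\inf_{\alpha \in A} \sum_{y} p(x, y) \ell(\alpha, y)$ into $\inf_{\alpha \in A} \ell(\alpha, k)$ on $\sX_k$, which is what allows the sum-over-classes decomposition to pass through cleanly. Without determinism this step fails because the infimum no longer distributes across the conditional probabilities, which is exactly why the stochastic extension requires the separate treatment given in Theorem~\ref{th:ZeroMinGapStochastic}.
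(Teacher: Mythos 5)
Your proposal is correct and follows essentially the same route as the paper's proof: decompose $\E[\ell(h^*(x),y)]$ by conditioning on $y$, use determinism to collapse the pointwise infimum to $\ell_k$ on $\sX_k$, obtain $\sM(\sH)=\sum_{k}p_k\,\E[\ell(h^*(x),k)-\ell_k \mid y=k]$, and conclude by non-negativity of each summand. Your added remarks on the $p_k=0$ classes and the injectivity step are fine refinements of the same argument, not a different approach.
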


\begin{proof}
By definition of $h^*$, using the shorthand $p_{k} = \P[y = k]$ for any $k\in [n]$, we can write
  \begin{align*}  
  \inf_{h \in \sH} \E[\ell(h(x), y)] = \E[\ell(h^*(x), y)]= \sum_{k \in [n]}  p_k \E[\ell(h^*(x), k) \mid y = k].
\end{align*}
  Since the distribution is deterministic, the expected pointwise
  infimum can be rewritten as follows:
\begin{align*}  
  \E_{x}\bracket*{\inf_{h \in \sH} \E_{y}[\ell(h(x), y) \mid x]}
  =  \E_{x}\bracket*{\inf_{\alpha \in A} \E_{y}[\ell(\alpha, y) \mid x]} = \sum_{k \in [n]}  p_k \inf_{\alpha \in A} \ell(\alpha, k) = \sum_{k \in [n]}  p_k \ell_k,
\end{align*}
where $\ell_k = \inf_{\alpha \in A} \ell(\alpha, k)$, for any $k\in [n]$.
Thus, we have
\begin{align*}
  \sM(\sH)
  & = \sum_{k \in [n]}  p_k \E\bracket*{\ell(h^*(x), k) - \ell_k \mid y = k}.
\end{align*}
In view of that, since, by definition of $\ell_k$, the expressions
within the conditional expectations are non-negative, the equality
$\sM(\sH) = 0$ holds iff $\ell(h^*(x), k) - \ell_k = 0$ almost surely
for any $x$ in $\sX_k$, $\forall k \in [n]$. Furthermore,
$\sM(\sH) \leq \e$ is equivalent to
\[
\sum_{k \in [n]}  p_k \E\bracket*{\ell(h^*(x), k) - \ell_k \mid y = k} \leq \e
\]
that is
\[
\sum_{k \in [n]}  p_k \paren*{\E \bracket*{\ell(h^*(x), k) \mid y = k} - \ell_k }\leq \e.
\]
In light of the non-negativity of the expressions, this implies in
particular:
\begin{align*}
 \E \bracket*{\ell(h^*(x), k) \mid y = k} - \ell_k \leq \frac{\e}{p_k}, \, \forall k\in [n].
\end{align*}
This completes the proof.
\end{proof}

The theorem suggests that, under those assumptions, for the surrogate
minimizability gap to be zero, the score vector of best-in-class
hypothesis must be piecewise constant with specific values on
$\sX_k$. The existence of such a hypothesis in $\sH$ depends both on
the complexity of the decision surface separating $\sX_k$ and on that
of the hypothesis set $\sH$. The theorem also suggests that when the score vector of best-in-class classifier
$\e$-approximates $\alpha_k$ over $\sX_k$ for any $k\in [n]$,
then the minimizability gap is bounded by $\e$. The existence of such
a hypothesis in $\sH$ depends on the complexity of the decision
surface.

\textbf{Stochastic scenario}.
In the above, we analyze instances featuring small
minimizability gaps in a deterministic setting. Moving forward, we aim
to extend this analysis to the stochastic scenario. We first provide
two general results, which are the direct extensions of that in the
deterministic scenario. The following result shows that the
minimizability gap is zero when there exists $h^*\in \sH$ that matches
$\alpha^*(x)$ for all $x$, where $\alpha^*(x)$ is the minimizer of the
conditional error.  It also shows that the
minimizability gap is bounded by $\e$ when there exists $h^*\in \sH$
whose conditional error $\e$-approximates best-in-class conditional
error for all $x$.

\begin{restatable}{theorem}{ZeroMinGapStochasticMulti}
\label{th:ZeroMinGapStochasticMulti}
  The best-in-class
  error is achieved by some $h^*\in \sH$ and the minimizability
  gap is null, $\sM(\sH) = 0$, iff there exists $h^*\in \sH$ such that
  \begin{align}
  \label{eq:cond-zero-stochastic-Multi}
  \E_{y}[\ell(h^*(x), y) \mid x] = \inf_{\alpha \in A} \E_{y}[\ell(\alpha, y) \mid x] \text{ a.s.\ over $\sX$}.
  \end{align}
  If further $\alpha \mapsto \E_{y}[\ell(\alpha, y) \mid x]$ is injective
  and $\inf_{\alpha \in A} \E_{y}[\ell(\alpha, y) \mid x] = \E_{y}[\ell(\alpha^*(x), y) \mid x]$, then, the condition is equivalent to
  $h^*(x) =
    \alpha^*(x)  \text{ a.s.\ for $x \in \sX$}$. Furthermore, the minimizability
  gap is bounded by $\e$, $\sM(\sH) \leq \e$, iff there exists $h^*\in \sH$ such that
\begin{align}
\label{eq:cond-epsilon-stochastic-Multi}
\E_{x}\bracket*{\E_{y}[\ell(h^*(x), y) \mid x] - \inf_{\alpha \in A} \E_{y}[\ell(\alpha, y) \mid x]} \leq \e.
\end{align}
\end{restatable}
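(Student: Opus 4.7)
The plan is to mimic the proof of Theorem~\ref{th:ZeroMinGapStochastic} almost verbatim, since moving from binary to multi-class only changes the label space over which $\E_{y}[\cdot\mid x]$ is taken and does not affect the measure-theoretic structure of the argument. The key identity to exploit is that, whenever the best-in-class error is attained by some $h^* \in \sH$, one can write
\[
\sM(\sH)
= \E_{x}\bracket*{\E_{y}[\ell(h^*(x), y) \mid x]}
- \E_{x}\bracket*{\inf_{\alpha \in A}\E_{y}[\ell(\alpha, y) \mid x]}
= \E_{x}\bracket*{\E_{y}[\ell(h^*(x), y) \mid x] - \inf_{\alpha \in A}\E_{y}[\ell(\alpha, y) \mid x]}.
\]
Here I use that $A = \curl*{h(x) \colon h\in \sH}$ is independent of $x$, so the expected pointwise infimum rewrites as $\E_{x}\bracket*{\inf_{\alpha\in A}\E_{y}[\ell(\alpha, y)\mid x]}$, exactly paralleling the binary case.

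For the ``zero gap'' part, the integrand above is pointwise non-negative by the definition of the infimum, so $\sM(\sH) = 0$ forces $\E_{y}[\ell(h^*(x), y)\mid x] = \inf_{\alpha\in A}\E_{y}[\ell(\alpha, y)\mid x]$ a.s., yielding~\eqref{eq:cond-zero-stochastic-Multi}. Conversely, if~\eqref{eq:cond-zero-stochastic-Multi} holds for some $h^* \in \sH$, then the displayed identity gives $\sM(\sH) \leq 0$, and since $\sM(\sH) \geq 0$ always (by the super-additivity of infimum), we have $\sM(\sH) = 0$ and simultaneously $\E[\ell(h^*(x), y)] = \inf_{h\in \sH}\E[\ell(h(x), y)]$, so $h^*$ attains the best-in-class error. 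The injectivity refinement is then immediate: if $\alpha \mapsto \E_{y}[\ell(\alpha, y)\mid x]$ is injective and the infimum is attained by $\alpha^*(x)$, then equality of the two conditional errors at $h^*(x)$ and $\alpha^*(x)$ forces $h^*(x) = \alpha^*(x)$ a.s.

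For the ``$\e$-bound'' part, the same identity shows that $\sM(\sH) \leq \e$ is exactly~\eqref{eq:cond-epsilon-stochastic-Multi} whenever the best-in-class error is attained by $h^*$. For the converse direction, if~\eqref{eq:cond-epsilon-stochastic-Multi} holds for some $h^* \in \sH$, then
\[
\sM(\sH)
= \inf_{h\in \sH}\E[\ell(h(x), y)] - \E_{x}\bracket*{\inf_{\alpha\in A}\E_{y}[\ell(\alpha, y)\mid x]}
\leq \E_{x}\bracket*{\E_{y}[\ell(h^*(x), y)\mid x] - \inf_{\alpha\in A}\E_{y}[\ell(\alpha, y)\mid x]} \leq \e,
\]
completing the equivalence. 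The only conceptual subtlety, and the single step I would be most careful about, is the ``iff'' in the zero-gap statement: one has to verify that the existence of an $h^* \in \sH$ satisfying~\eqref{eq:cond-zero-stochastic-Multi} automatically implies that the best-in-class error is attained by that same $h^*$, which is exactly what the displayed identity delivers once one observes $\E[\ell(h^*(x), y)] = \E_{x}\bracket*{\inf_{\alpha\in A}\E_{y}[\ell(\alpha, y)\mid x]} \leq \inf_{h\in \sH}\E[\ell(h(x), y)]$ combined with the reverse inequality $\inf_{h\in \sH}\E[\ell(h(x), y)] \leq \E[\ell(h^*(x), y)]$.
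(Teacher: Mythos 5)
Your proposal is correct and follows essentially the same route as the paper's proof: the same rewriting of $\sM(\sH)$ as $\E_{x}\bracket*{\E_{y}[\ell(h^*(x), y) \mid x] - \inf_{\alpha \in A}\E_{y}[\ell(\alpha, y) \mid x]}$ when the best-in-class error is attained, non-negativity of the integrand for the zero-gap direction, and the inequality $\inf_{h\in\sH}\E[\ell(h(x),y)] \leq \E[\ell(h^*(x),y)]$ for the converse and the $\e$-bound. Your closing remark about the converse forcing $h^*$ to attain the best-in-class error is exactly the point the paper's proof makes as well.
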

\begin{proof}
Assume that the best-in-class
  error is achieved by some $h^*\in \sH$.
Then, we can write
  \begin{align*}  
  \inf_{h \in \sH} \E[\ell(h(x), y)]
   = \E[\ell(h^*(x), y)]=\E_{x}\bracket*{\E_{y}[\ell(h^*(x), y) \mid x]}.
\end{align*}
The expected pointwise infimum
  can be rewritten as follows:
\begin{align*}  
  \E_{x}\bracket*{\inf_{h \in \sH} \E_{y}[\ell(h(x), y) \mid x]}
  =  \E_{x}\bracket*{\inf_{\alpha \in A} \E_{y}[\ell(\alpha, y) \mid x]}.
\end{align*}
Thus, we have
\begin{align*}
  \sM(\sH)
  & = \E_{x}\bracket*{\E_{y}[\ell(h^*(x), y) \mid x]
    - \inf_{\alpha \in A} \E_{y}[\ell(\alpha, y) \mid x]}.
\end{align*}
In view of that, since, by the definition of infimum, the
expressions within the marginal expectations are non-negative, the condition that $\sM(\sH) = 0$ implies that  \begin{align*}
  \E_{y}[\ell(h^*(x), y) \mid x] = \inf_{\alpha \in A} \E_{y}[\ell(\alpha, y) \mid x] \text{ a.s.\ over $\sX$}.
  \end{align*}
On the other hand, if there exists $h^*\in \sH$ such that the
condition \eqref{eq:cond-zero-stochastic} holds, then,
\begin{align*}
 \sM(\sH)
  & =  \inf_{h \in \sH} \E[\ell(h(x), y)] -  \E_{x}\bracket*{\inf_{\alpha \in A} \E_{y}[\ell(\alpha, y) \mid x]}\leq \E_{x}\bracket*{\E_{y}[\ell(h^*(x), y) \mid x]-\inf_{\alpha \in A} \E_{y}[\ell(\alpha, y) \mid x]} = 0.
\end{align*}
Since $\sM(\sH)$ is non-negative, the inequality is achieved. Thus, we
have
\begin{align*}
  \sM(\sH) = 0 \text{ and } \inf_{h \in \sH} \E[\ell(h(x), y)] = \E[\ell(h^*(x), y)].
\end{align*}
If there exists $h^*\in \sH$ such that the
condition~\eqref{eq:cond-epsilon-stochastic} holds, then,
\begin{align*}
 \sM(\sH)
  & =  \inf_{h \in \sH} \E[\ell(h(x), y)] -  \E_{x}\bracket*{\inf_{\alpha \in A} \E_{y}[\ell(\alpha, y) \mid x]}\leq \E_{x}\bracket*{\E_{y}[\ell(h^*(x), y) \mid x]-\inf_{\alpha \in A} \E_{y}[\ell(\alpha, y) \mid x]} = \e.
\end{align*}
On the other hand, since we have
\begin{align*}
  \sM(\sH)
  & = \E_{x}\bracket*{\E_{y}[\ell(h^*(x), y) \mid x] - \inf_{\alpha \in A} \E_{y}[\ell(\alpha, y) \mid x]},
\end{align*}
$\sM(\sH) \leq \e$ implies that
\[
\E_{x}\bracket*{\E_{y}[\ell(h^*(x), y) \mid x] - \inf_{\alpha \in A} \E_{y}[\ell(\alpha, y) \mid x]} \leq \e.
\]
This completes the proof.
\end{proof}

\subsection{Examples}
\label{app:examples}

Note that when the distribution is assumed to be deterministic, the
condition~\eqref{eq:cond-epsilon-stochastic} and condition~\eqref{eq:cond-epsilon-stochastic-Multi} are reduced to the
condition of Theorem~\ref{th:ZeroMinGap} in binary classification
and that of Theorem~\ref{th:ZeroMinGapMulti} in multi-class
classification, respectively.  In the stochastic scenario, the
existence of such a hypothesis not only depends on the complexity of
the decision surface, but also depends on the distributional
assumption on the conditional distribution $p(x) = \paren*{p(x,
  y)}_{y\in \sY}$, where $p(x, y) = \sD(Y = y \!\mid\! X = x)$ is the
conditional probability of $Y = y$ given $X = x$. In the binary
classification, we have $p(x) = \paren*{p(x, \plus 1),p(x, \minus
  1)}$, where $p(x, \plus 1) + p(x,\minus 1) = 1$. For simplicity, we
use the notation $\eta(x)$ and $1 - \eta(x)$ to represent $p(x, \plus
1)$ and $p(x, \minus 1)$ respectively. In the multi-class
classification with $\sY = \curl*{1, \ldots, n}$, we have $p(x) =
\paren*{p(x, 1), p(x,2), \ldots, p(x,n)}$ where $n$ is the number of
classes. As examples, here too, we examine exponential loss and
logistic loss in binary classification and multi-class logistic loss
in multi-class classification.

\textbf{A. Example: binary classification.} 
Let $\e\in [0,\frac12]$.  We denote by $\sX_+$ the subset of $\sX$
over which $\eta(x)=1-\e$ and by $\sX_-$ the subset of $\sX$ over
which $\eta(x) = \e$. Let $\sH$ be a family of functions $h$ with
$\abs*{h(x)} \leq \Lambda$ for all $x \in \sX$ and such that all
values in $[-\Lambda, +\Lambda]$ can be reached. Thus, $A =
\bracket*{-\Lambda, \Lambda}$ for any $x\in \sX$. Consider the
exponential loss: $\ell(h, x, y) = e^{-yh(x)}$. Then, for any $x\in
\sX$ and $\alpha\in A$, we have
\begin{align*}
\E_{y}[\ell(\alpha, y) \mid x] = \begin{cases}
(1-\e)e^{-\alpha} + \e e^{\alpha} & x\in \sX_{+}\\
\e e^{-\alpha} + (1-\e) e^{\alpha} & x\in \sX_{-}.
\end{cases}
\end{align*}
Thus, it is not hard to see that for any $\e \leq \frac{1}{e^{2\Lambda}+1}$, the infimum $\inf_{\alpha \in A} \E_{y}[\ell(\alpha, y) \mid x]$ can be achieved by $\alpha^*(x) = \begin{cases}
\Lambda & x\in \sX_{+}\\
-\Lambda & x\in \sX_{-}
\end{cases}\in A$. Similarly, for the logistic loss $\ell(h, x, y) = \log\paren*{1 + e^{-yh(x)}}$, we have that
\begin{align*}
\E_{y}[\ell(\alpha, y) \mid x] = \begin{cases}
(1 - \e)\log\paren*{1 + e^{-\alpha}} + \e \log\paren*{1 + e^{\alpha}} & x\in \sX_{+}\\
\e \log\paren*{1 + e^{-\alpha}} + (1 - \e) \log\paren*{1 + e^{\alpha}} & x\in \sX_{-}
\end{cases}
\end{align*}
and for $\e \leq \frac{1}{e^{\Lambda}+1}$, the infimum $\inf_{\alpha \in A} \E_{y}[\ell(\alpha, y) \mid x]$ can be achieved by $\alpha^*(x) = \begin{cases}
\Lambda & x\in \sX_{+}\\
-\Lambda & x\in \sX_{-}
\end{cases}$. Therefore, by Theorem~\ref{th:ZeroMinGapStochastic},
for these distributions and loss functions, when the best-in-class
classifier $h^*$ $\e$-approximates $\alpha_{+} = \Lambda$ over $\sX_+$
and $\alpha_{-} = -\Lambda$ over $\sX_-$, then the minimizability gap
is bounded by $\e$. The existence of such a hypothesis in $\sH$
depends on the complexity of the decision surface. For example, as
previously noted, when the decision surface is characterized by a
hyperplane, a hypothesis set of linear functions, coupled with a
sigmoid activation function, can offer a highly effective
approximation (see Figure~\ref{fig:illustration} for illustration).

\textbf{B. Example: multi-class classification.} 
Let $\e\in [0,\frac12]$. We denote by $\sX_{k}$ the subset of $\sX$
over which $p(x,k) = 1-\e$ and $p(x,j) = \frac{\e}{n-1}$ for $j\neq
k$. Let $\sH$ be a family of functions $h$ with $\abs*{h(x, \cdot)}
\leq \Lambda$ for all $x \in \sX$ and such that all values in
$[-\Lambda, +\Lambda]$ can be reached. Thus, $A = \bracket*{-\Lambda,
  \Lambda}^n$ for any $x\in \sX$. Consider the multi-class logistic
loss: $\ell(h, x, y) = - \log \bracket*{\frac{e^{h(x,y)}}{\sum_{y' \in
      \sY} e^{h(x,y')}}}$. For any $\alpha = \bracket*{\alpha^1,
  \ldots, \alpha^n} \in A$, we denote by $S_k =
\frac{e^{\alpha^k}}{\sum_{k' \in [n]} e^{\alpha^{k'}}}$.  Then, for
any $x\in \sX$ and $\alpha\in A$,
\begin{align*}
\E_{y}[\ell(\alpha, y) \mid x] = - (1 - \e) \log \paren*{S_{k}} -
\frac{\e}{n - 1} \sum_{k'\neq k} \log \paren*{S_{k'}} \text{ if } x\in
\sX_k.
\end{align*}
Thus, it is not hard to see that for any $\e \leq \frac{n -
  1}{e^{2\lambda} + n - 1}$, the infimum $\inf_{\alpha \in A} \E_{y}[\ell(\alpha, y) \mid x]$ can be achieved by $\alpha^*(x) =
\bracket*{-\Lambda, \ldots, \Lambda, \ldots, -\Lambda}$, where
$\Lambda$ occupies the $k$-th position for $x\in \sX_{k}$.  Therefore,
by Theorem~\ref{th:ZeroMinGapStochastic}, for these distributions
and loss functions, when the best-in-class classifier $h^*$
$\e$-approximates $\alpha_{k} = \bracket*{-\Lambda, \ldots,
  \underset{k-\text{th}}{\Lambda}, \ldots, -\Lambda}$ over $\sX_{k}$,
then the minimizability gap is bounded by $\e$. The existence of such
a hypothesis in $\sH$ depends on the complexity of the decision
surface.

\section{Proof for binary margin-based losses (Theorem~\ref{thm:binary-lower})}
\label{app:binary-lower}

\BinaryLower*
\begin{proof}
Since $\Phi$ is convex and in $C^2$, $f_t$ is also convex and differentiable with respect to $u$. For any $t \in [0, 1]$, differentiate $f_t$ with respect to $u$, we have
\begin{equation*}
f'_t(u) = \frac{1 - t}{2} \Phi'(u) - \frac{1 + t}{2} \Phi'(-u).
\end{equation*}
Consider the function $F$ defined over $\Rset^2$ by $F(t, a) = \frac{1 - t}{2} \Phi'(a) - \frac{1+ t}{2} \Phi'(-a)$. Observe that $F(0, 0) = 0$ and that the partial derivative of $F$ with respect to $a$ at $(0, 0)$ is $\Phi''(0) > 0$:
\begin{equation*}
\frac{\partial F}{\partial a}(t, a) = \frac{1 - t}{2} \Phi''(a) + \frac{1 + t}{2} \Phi''(-a), \quad \frac{\partial F}{\partial a}(0, 0) = \Phi''(0) > 0.
\end{equation*}
Consequently, by the implicit function theorem, there exists a continuously differentiable function $\ov a$ such that $F (t, \ov a(t)) = 0$ in a neighborhood $[-\e, \e]$ around zero. Thus, by the convexity of $f_t$ and the definition of $F$, for $t \in [0, \e]$, $\inf_{u \in \Rset} f_t(u)$ is reached by $\ov a(t)$ and we can denote it by $a^*_t$. Then, $a^*_t$ is continuously differentiable over $[0, \epsilon]$. The minimizer $a^*_t$ satisfies the following equality:
\begin{equation}
\label{eq:a_t}
f'_t(a^*_t) = \frac{1 - t}{2} \Phi'(a^*_t) - \frac{1 + t}{2} \Phi'(-a^*_t) = 0.
\end{equation}
Specifically, at $t = 0$, we have $\Phi'(a^*_0) = \Phi'(-a^*_0)$.
Since $\Phi$ is convex, its derivative $\Phi'$ is non-decreasing.
Therefore, if $a^*_0$ were non-zero, then $\Phi'$ would be constant
over the segment $[-\abs*{a^*_0}, \abs*{a^*_0}]$. This would contradict the
condition $\Phi''(0) > 0$, as a constant function cannot have a
positive second derivative at any point. Thus, we must have $a^*_0 =
0$ and since $\Phi'$ is non-decreasing and $\Phi''(0) > 0$, we have $a^*_t > 0$ for all $t \in (0, \e]$. By Theorem~\ref{thm:binary-char} and Taylor's theorem with an integral remainder, 
$\sT$ can be expressed as follows: for any $t \in [0, \e]$,
\begin{align}
\sT(t) 
& = f_t(0) -\inf_{u \in \Rset} f_t(u) \nonumber\\
& = f_t(0) - f_t(a^*_t) \nonumber\\
& = f'_t(a^*_t) (0 - a^*_t) + \int_{a^*_t}^0 (0 - u) f_t''(u) \, du
\tag{$f'_t(a^*_t) = 0$} \nonumber\\
& = \int_0^{a^*_t} u f''_t(u) \, du \nonumber\\
& = \int_0^{a^*_t} u \bracket*{\frac{1 - t}{2} \Phi''(u) + \frac{1 + t}{2} \Phi''(-u)} \, du.
\label{eq:T}
\end{align}
Since $a^*_t$ is a function of class $C^1$, we can differentiate \eqref{eq:a_t} with respect to $t$, which gives the following equality for any $t$ in $(0, \epsilon]$:
\begin{equation*}
-\frac12 \Phi'(a_t^*) + \frac{1 - t}{2 }\Phi''(a_t^*) \frac{d a_t^*}{d t}(t) - \frac12 \Phi'(-a_t^*) + \frac{1 + t}{2} \Phi''(-a_t^*)
\frac{d a_t^*}{d t}(t) = 0.
\end{equation*}
Taking the limit $t \to 0$ yields
\begin{equation*}
-\frac12 \Phi'(0) + \frac12 \Phi''(0) \frac{d a_t^*}{d t}(0) - \frac12 \Phi'(0) + \frac12 \Phi''(0) \frac{d a_t^*}{d t}(0) = 0.
\end{equation*}
This implies that
\begin{equation*}
\frac{d a_t^*}{d t}(0) = \frac{\Phi'(0)}{\Phi''(0)} > 0.
\end{equation*}
Since $\lim_{t \to 0} \frac{a_t^*}{t} = \frac{d a_t^*}{d t}(0) = \frac{\Phi'(0)}{\Phi''(0)} > 0$, we have $a^*_t = \Theta(t)$. 

Since $\Phi''(0) > 0$ and $\Phi''$ is continuous, there is a non-empty interval $[\minus \alpha, \plus \alpha]$ over which $\Phi''$ is positive. Since $a^*_0 = 0$ and $a^*_t$ is continuous, there exists a sub-interval $[0, \epsilon'] \subseteq [0, \epsilon]$ over which $a^*_t \leq \alpha$. Since $\Phi''$ is continuous, it admits a minimum and a maximum over any compact set and we can define $c = \min_{u \in [-\alpha, \alpha]} \Phi''(u)$ and $C = \max_{u \in [-\alpha, \alpha]} \Phi''(u)$. $c$ and $C$ are both positive since we have $\Phi''(0) > 0$. Thus, for $t$ in $[0, \epsilon']$, by \eqref{eq:T}, the following inequality holds:
\begin{align*}
C \frac{(a^*_t)^2}{2} = \int_0^{a^*_t}  u C \, du \geq \sT(t) &= \int_0^{a^*_t} u \bracket*{\frac{1 - t}{2} \Phi''(u) + \frac{1 + t}{2} \Phi''(-u)} \, du
\geq \int_0^{a^*_t}  u C \, du
= c \frac{(a^*_t)^2}{2}.
\end{align*}
This implies that $\sT(t) = \Theta(t^2)$.
\end{proof}

\section{Proof for comp-sum losses (Theorem~\ref{thm:comp-lower})}
\label{app:comp-lower}
\CompLower*

\begin{proof}
\ignore{
Observe that, for any $t \in [0, 1]$ and 
$\tau \geq 0$, we can write:
\begin{align*}
  \sup_{|u| \leq \tau} \curl*{
    \frac{1 - t}{2} \Phi(\tau + u) + \frac{1 + t}{2} \Phi\paren*{\tau - u}}
  & =  \sup_{|u| \leq 1} \curl*{
    \frac{1 - t}{2} \Phi\paren*{(1 + u) \tau} + \frac{1 + t}{2} \Phi\paren*{(1 - u)\tau}}.
\end{align*}
In light of this identity, for any $t \in [0, 1]$, $\tau \mapsto \sup_{|u| \leq \tau} \curl*{
  \frac{1 - t}{2} \Phi(\tau + u) + \frac{1 + t}{2} \Phi\paren*{\tau - u}}$ is continuous over $\Rset_+$, since the supremum over a fixed compact
set of a family of continuous functions is continuous.
  Thus, the infimum over the compact set $\bracket*{\frac1n, \frac12}$ is reached at some $\tau^* \in [\frac1n, \frac12]$, and the function $\sT$ can be rewritten as follows, for any $t \in [0, 1]$,
  \begin{equation}
  \label{eq:T-psi}
 \begin{aligned}
  \sT(t) 
  & =  \Phi(\tau^*) - \inf_{|u| \leq \tau^*}\curl*{ \frac{1 - t}{2}\Phi(\tau^* + u) + \frac{1 + t}{2} \Phi \paren*{\tau^* - u}}\\
  & = \Psi(0) - \inf_{|u| \leq \tau^*}\curl*{ \frac{1 - t}{2}\Psi(u) + \frac{1 + t}{2} \Psi \paren*{- u}},
 \end{aligned}
 \end{equation}
where $ u \mapsto \Psi(u) = \Phi(\tau^* + u)$ is convex, twice continuously differentiable, and satisfies the properties $\Psi'(0) < 0$ and $\Psi''(0) > 0$. Let $f_t(u) = \frac{1 - t}{2}\Psi(u) + \frac{1 + t}{2} \Psi \paren*{- u}$. Since $\Psi$ is convex and in $C^2$, $f_t$ is also convex and differentiable with respect to $u$. For any $t \in [0, 1]$, differentiate $f_t$ with respect to $u$, we have
\begin{equation*}
f'_t(u) = \frac{1 - t}{2} \Psi'(u) - \frac{1 + t}{2} \Psi'(-u).
\end{equation*}
Since $\Psi$ is convex, $\Psi'$ is non-decreasing, for any $t \in [0, 1]$, $f_t'$ is non-decreasing with respect to $u$. Plugging $u = -\tau^*$ and $u = \tau^*$ into the expression, we have
\begin{align*}
f'_t(-\tau^*) = \frac{1 - t}{2} \Psi'(-\tau^*) - \frac{1 + t}{2} \Psi'(\tau^*)\\
f'_t(\tau^*) = \frac{1 - t}{2} \Psi'(\tau^*) - \frac{1 + t}{2} \Psi'(-\tau^*).
\end{align*}
Since $\Psi'$ is non-decreasing and $\Psi''(0) > 0$, $\Psi'(\tau^*) > \Psi'(-\tau^*)$. Thus, we have 
\begin{align*}
f'_0(-\tau^*) &= \frac{1}{2} \Psi'(-\tau^*) - \frac{1}{2} \Psi'(\tau^*) < 0\\
f'_0(\tau^*) &= \frac{1}{2} \Psi'(\tau^*) - \frac{1}{2} \Psi'(-\tau^*) > 0.
\end{align*}
Since $f'_t(-\tau^*)$ and $f'_t(\tau^*)$ is continuous with respect to $t$, there exists $T > 0$ such that for any $0 \leq t < T$, $f'_t(-\tau^*) < 0$ and $f'_t(\tau^*) > 0$. Since $f_t'$ is non-decreasing with respect to $u$, for any $0 \leq t < T$, there exists $a^*_t \in (-\tau^*, \tau^*)$ such that $f_t(a^*_t) = 0$. Since $\Psi$ is convex, this implies that the infimum of $f_t$ over $u \in \Rset$ is reached within $(-\tau^*, \tau^*)$ and $a^*_t$ is the global minimizer. The minimizer $a^*_t$ satisfies the following equality:
\begin{equation}
\label{eq:a_t_comp}
f'_t(a^*_t) = \frac{1 - t}{2} \Psi'(a^*_t) - \frac{1 + t}{2} \Psi'(-a^*_t) = 0.
\end{equation}
Specifically, at $t = 0$, we have $\Psi'(a^*_0) = \Psi'(-a^*_0)$.
Since $\Psi$ is convex, its derivative $\Psi'$ is non-decreasing.
Therefore, if $a^*_0$ were non-zero, then $\Psi'$ would be constant
over the segment $[-\abs*{a^*_0}, \abs*{a^*_0}]$. This would contradict the
condition $\Psi''(0) > 0$, as a constant function cannot have a
positive second derivative at any point. Thus, we must have $a^*_0 =
0$ and since $\Psi'$ is non-decreasing and negative over $\bracket*{0, \frac12}$, and $\Psi''(0) > 0$, we have $a^*_t < 0$ for all $t \in (0, T)$. By \eqref{eq:T-psi} and Taylor's theorem with an integral remainder, 
$\sT$ can be expressed as follows: for any $t \in [0, T)$,
\begin{equation}
\label{eq:T-comp}
\begin{aligned}
\sT(t) 
& = f_t(0) - f_t(a^*_t)\\
& = f'_t(a^*_t) (0 - a^*_t) + \int_{a^*_t}^0 (0 - u) f_t''(u) \, du\\
& = \int_{a^*_t}^0 (0 - u) f''_t(u) \, du\\
& = \int_0^{-a^*_t} u f''_t(-u) \, du\\
& = \int_0^{-a^*_t} u \bracket*{\frac{1 - t}{2} \Psi''(-u) + \frac{1 + t}{2} \Psi''(u)} \, du.
\end{aligned}
\end{equation}
Consider the function $F$ defined over $\Rset^2$ by $F(t, a) = \frac{1 - t}{2} \Psi'(a) - \frac{1+ t}{2} \Psi'(-a)$. Observe that $F(0, 0) = 0$ and that the partial derivative of $F$ with respect to $a$ at $(0, 0)$ is $\Psi''(0) > 0$:
\begin{equation*}
\frac{\partial F}{\partial a}(t, a) = \frac{1 - t}{2} \Psi''(a) + \frac{1 + t}{2} \Psi''(-a), \quad \frac{\partial F}{\partial a}(0, 0) = \Psi''(0) > 0.
\end{equation*}
Consequently, by the implicit function theorem, there exists a continuously differentiable function $\ov a$ such that $F (t, \ov a(t)) = 0$ in a neighborhood $[-\e, \e]$ around zero. Thus, by the convexity of $f_t$ and the definition of $F$, for $t \in [0, \e] \subset [0, T)$, we have $\ov a(t) = \argmin_{u \in \Rset} f_t(u)$. Choose it as $a^*_t$ when the minimizer is not unique. Then, $a^*_t$ is continuously differentiable over $[0, \epsilon]$.

Since $a^*_t$ is in $C^1$, we can differentiate \eqref{eq:a_t_comp} with respect to $t$, which gives the following equality for any $t$ in $(0, \epsilon]$:
\begin{equation*}
-\frac12 \Psi'(a_t^*) + \frac{1 - t}{2 }\Psi''(a_t^*) \frac{d a_t^*}{d t}(t) - \frac12 \Psi'(-a_t^*) + \frac{1 + t}{2} \Psi''(-a_t^*)
\frac{d a_t^*}{d t}(t) = 0
\end{equation*}
By letting $t \to 0$, we have
\begin{equation*}
-\frac12 \Psi'(0) + \frac12 \Psi''(0) \frac{d a_t^*}{d t}(0) - \frac12 \Psi'(0) + \frac12 \Psi''(0) \frac{d a_t^*}{d t}(0) = 0.
\end{equation*}
This implies that
\begin{equation*}
\frac{d a_t^*}{d t}(0) = \frac{\Psi'(0)}{\Psi''(0)} < 0.
\end{equation*}
Since $\lim_{t \to 0} \frac{-a_t^*}{t} = -\frac{d a_t^*}{d t}(0) = -\frac{\Psi'(0)}{\Psi''(0)} > 0$, we have $-a^*_t = \Omega(t)$. 

Since $\Psi''(0) > 0$ and $\Psi''$ is continuous, there is a non-empty interval $[\minus \alpha, \plus \alpha]$ over which $\Psi''$ is positive. Since $a^*_0 = 0$ and $a^*_t$ is continuous, there exists a sub-interval $[0, \epsilon'] \subseteq [0, \epsilon]$ over which $a^*_t \leq \alpha$. Since $\Psi''$ is continuous, it admits a minimum over any compact set and we can define $C = \min_{u \in [-\e', \e']} \Psi''(u)$. $C$ is positive since we have $\Psi''(0) > 0$. Thus, for $t$ in $[0, \epsilon']$, by \eqref{eq:T-comp}, the following inequality holds:
\begin{align*}
\sT(t) &= \int_0^{-a^*_t} u \bracket*{\frac{1 - t}{2} \Psi''(-u) + \frac{1 + t}{2} \Psi''(u)} \, du
\geq \int_0^{-a^*_t}  u C \, du
= C \frac{(a^*_t)^2}{2}
= \Omega(t^2).
\end{align*}
This completes the proof.}
For any $\tau \in \bracket*{\frac1n, \frac12}$, define the function $\sT_\tau$ by
\begin{align*}
  \forall t \in [0, 1], \quad  \sT_\tau(t)
  & = \sup_{|u| \leq \tau} \curl*{\Phi(\tau) - \frac{1 - t}{2} \Phi(\tau + u) - \frac{1 + t}{2}\Phi(\tau - u)}\\
  & = f_{t, \tau}(0) - \inf_{|u| \leq \tau} f_{t, \tau}(u),
\end{align*}
where
\[
f_{t, \tau}(u)
= \frac{1 - t}{2} \Phi_{\tau}(u) + \frac{1 + t}{2} \Phi_{\tau}(-u)
\quad \text{and} \quad
\Phi_{\tau}(u) = \Phi(\tau + u).
\]
We aim to establish a lower bound for $\inf_{\tau \in \bracket*{\frac1n, \frac12}}
\sT_\tau(t)$.  For any fixed $\tau \in \bracket*{\frac1n, \frac12}$, this situation is
parallel to that of binary classification (Theorem~\ref{thm:binary-char}
and Theorem~\ref{thm:binary-lower}), since we have $\Phi'_{\tau}(0) =
\Phi'(\tau) < 0$ and $\Phi''_{\tau}(0) = \Phi''(\tau) > 0$.  Let $a^*_{t, \tau}$ denotes the minimizer of $f_{t, \tau}$ over $\Rset$. By applying Theorem~\ref{thm:a_implicit} to the function $F\colon (t, u, \tau) \mapsto f'_{t, \tau}(u) = \frac{1 - t}{2} \Phi'_{\tau}(u) - \frac{1 + t}{2} \Phi'_{\tau}(-u)$ and the convexity of $f_{t, \tau}$ with respect to $u$, $a^*_{t, \tau}$ exists, is unique and is
continuously differentiable over $[0, t'_0] \times \bracket*{\frac1n, \frac12}$, for some $t'_0 > 0$.
Moreover, by using the fact that $f'_{0, \tau}(\tau) > 0$ and $f'_{0, \tau}(-\tau) < 0$, and the convexity of $f_{0, \tau}$ with respect to $u$, we have $\abs*{a^*_{0, \tau}} \leq \tau$, $\forall \tau \in \bracket*{\frac1n, \frac12}$.  By the continuity of $a^*_{t, \tau}$, we have $\abs*{a^*_{t, \tau}} \leq \tau$ over $[0, t_0] \times \bracket*{\frac1n, \frac12}$, for some $t_0 > 0$ and $t_0 \leq t'_0$. 

Next, we will
leverage the proof of Theorem~\ref{thm:binary-lower}. Adopting a similar
notation, while incorporating the $\tau$ subscript to distinguish
different functions $\Phi_\tau$ and $f_{t, \tau}$, we can write
\[
\forall t \in [0, t_0], \quad
\sT_\tau(t) = \int_0^{-a^*_{t, \tau}} u \bracket*{\frac{1 - t}{2} \Phi''_{\tau}(-u) + \frac{1 + t}{2} \Phi''_{\tau}(u)} \, du.
\]
where $a^*_{t, \tau}$  verifies
\begin{equation}
\label{eq:DerivativeAtZero-comp}
a_{0, \tau}^* = 0 \quad \text{and} \quad
\frac{\partial a_{t, \tau}^*}{\partial t}(0) = \frac{\Phi'_{\tau}(0)}{\Phi''_{\tau}(0)} = c_\tau < 0.
\end{equation} 
We first show the lower bound $\inf_{\tau \in \bracket*{\frac1n, \frac12}} -a_{t,
  \tau}^* = \Omega(t)$.
Given the equalities \eqref{eq:DerivativeAtZero-comp}, it follows that for
any $\tau$, the following holds: $\lim_{t \to 0} \paren*{-a_{t, \tau}^*
  + c_\tau t} = 0$. For any $\tau \in \bracket*{\frac1n, \frac12}$, $t \mapsto
\paren*{-a_{t, \tau}^* + c_\tau t}$ is a continuous function over $[0,
  t_0]$ since $a_{t, \tau}^*$ is a function of class $C^1$. Since the
infimum over a fixed compact set of a family of continuous functions is
continuous, $t \mapsto \inf_{\tau \in \bracket*{\frac1n, \frac12}}\curl*{-a_{t, \tau}^* +
  c_\tau t}$ is continuous.  Thus, for any $\e > 0$, there exists $t_1
> 0$, $t_1 \leq t_0$, such that for any $t \in [0, t_1]$,
\[
\abs*{\inf_{\tau \in \bracket*{\frac1n, \frac12}} \curl*{-a_{t, \tau}^* + c_\tau t}} \leq \e,
\]
which implies
\[
\forall \tau \in \bracket*{\frac1n, \frac12}, \quad
-a_{t, \tau}^*
\geq -c_\tau t - \e
\geq c t - \e,
\]
where $c = \inf_{\tau \in \bracket*{\frac1n, \frac12}} -c_\tau$. Since $\Phi'_{\tau}(0)$ and
$\Phi''_{\tau}(0)$ are positive and continuous functions of $\tau$,
this infimum is attained over the compact set $\bracket*{\frac1n, \frac12}$, leading to $c
> 0$. Since the lower bound holds uniformly over $\tau$, this shows
that for $t \in [0, t_1]$, we have $\inf_{\tau \in \bracket*{\frac1n, \frac12}} -a_{t,
  \tau}^* = \Omega(t)$.

Now, since for any $\tau \in \bracket*{\frac1n, \frac12}$, $-a_{t, \tau}^*$ is a function of
class $C^1$ and thus continuous, its supremum over a compact set,
$\sup_{\tau \in \bracket*{\frac1n, \frac12}} -a_{t, \tau}^*$, is also continuous and is
bounded over $[0, t_1]$ by some $a > 0$. For $|u| \leq a$ and $\tau
\in \bracket*{\frac1n, \frac12}$, we have $\frac12 - a \leq \tau + u \leq \frac12 + a$ and $\frac12 - a \leq
\tau - u \leq \frac12 + a$. Since $\Phi''$ is positive and continuous, it
reaches its minimum $C > 0$ over the compact set $\bracket*{\frac12 - a, \frac12 + a}$.
Thus, we can write 
\begin{align*}
\forall t \in [0, t_1], \forall \tau \in \bracket*{\frac1n, \frac12}, \quad
   \sT_\tau(t)
  & = \int_0^{-a^*_{t, \tau}} u \bracket*{\frac{1 - t}{2} \Phi''_{\tau}(-u) + \frac{1 + t}{2} \Phi''_{\tau}(u)} \, du\\
  & \geq \int_0^{-a^*_{t, \tau}} u \bracket*{\frac{1 - t}{2} C + \frac{1 + t}{2} C} \, du\\
  & = \int_0^{-a^*_{t, \tau}} C u  \, du
  = C \frac{(-a^{*}_{t, \tau})^2}{2}.
\end{align*}
Thus, for $t \leq t_1$, we have
\[
\inf_{\tau \in \bracket*{\frac1n, \frac12}} \sT_\tau(t) \geq C
\frac{(\inf_{\tau \in \bracket*{\frac1n, \frac12}} -a^{*}_{t, \tau})^2}{2} \geq \Omega(t^2).
\]
Similarly, we aim to establish an upper bound for $\inf_{\tau \in \bracket*{\frac1n, \frac12}}
\sT_\tau(t)$.  We first show the upper bound $\sup_{\tau \in \bracket*{\frac1n, \frac12}} -a_{t,
  \tau}^* = O(t)$.
Given the equalities \eqref{eq:DerivativeAtZero-comp}, it follows that for
any $\tau$, the following holds: $\lim_{t \to 0} \paren*{-a_{t, \tau}^*
  + c_\tau t} = 0$. For any $\tau \in \bracket*{\frac1n, \frac12}$, $t \mapsto
\paren*{-a_{t, \tau}^* + c_\tau t}$ is a continuous function over $[0,
  t_0]$ since $a_{t, \tau}^*$ is a function of class $C^1$. Since the
supremum over a fixed compact set of a family of continuous functions is
continuous, $t \mapsto \sup_{\tau \in \bracket*{\frac1n, \frac12}}\curl*{-a_{t, \tau}^* +
  c_\tau t}$ is continuous.  Thus, for any $\e > 0$, there exists $t_1
> 0$, $t_1 \leq t_0$, such that for any $t \in [0, t_1]$,
\[
\abs[\Big]{\sup_{\tau \in \bracket*{\frac1n, \frac12}} \curl*{-a_{t, \tau}^* + c_\tau t}} \leq \e,
\]
which implies
\[
\forall \tau \in \bracket*{\frac1n, \frac12}, \quad
-a_{t, \tau}^*
\leq -c_\tau t + \e
\leq c t + \e,
\]
where $c = \sup_{\tau \in \bracket*{\frac1n, \frac12}} -c_\tau$. Since $\Phi'_{\tau}(0)$ and
$\Phi''_{\tau}(0)$ are positive and continuous functions of $\tau$,
this supremum is attained over the compact set $\bracket*{\frac1n, \frac12}$, leading to $c
> 0$. Since the upper bound holds uniformly over $\tau$, this shows
that for $t \in [0, t_1]$, we have $\sup_{\tau \in \bracket*{\frac1n, \frac12}} -a_{t,
  \tau}^* = O(t)$.

Now, since for any $\tau \in \bracket*{\frac1n, \frac12}$, $-a_{t, \tau}^*$ is a function of
class $C^1$ and thus continuous, its supremum over a compact set,
$\sup_{\tau \in \bracket*{\frac1n, \frac12}} -a_{t, \tau}^*$, is also continuous and is
bounded over $[0, t_1]$ by some $a > 0$. For $|u| \leq a$ and $\tau
\in \bracket*{\frac1n, \frac12}$, we have $\frac12 - a \leq \tau + u \leq \frac12 + a$ and $\frac12 - a \leq
\tau - u \leq \frac12 + a$. Since $\Phi''$ is positive and continuous, it
reaches its maximum $C > 0$ over the compact set $\bracket*{\frac12 - a, \frac12 + a}$.
Thus, we can write
\begin{align*}
  \forall t \in [0, t_1], \forall \tau \in \bracket*{\frac1n, \frac12}, \quad
  \sT_\tau(t)
  & = \int_0^{-a^*_{t, \tau}} u \bracket*{\frac{1 - t}{2} \Phi''_{\tau}(-u) + \frac{1 + t}{2} \Phi''_{\tau}(u)} \, du\\
  & \leq \int_0^{-a^*_{t, \tau}} u \bracket*{\frac{1 - t}{2} C + \frac{1 + t}{2} C} \, du\\
  & = \int_0^{-a^*_{t, \tau}} C u  \, du
  = C \frac{(-a^{*}_{t, \tau})^2}{2}.
\end{align*}
Thus, for $t \leq t_1$, we have
\[
\inf_{\tau \in \bracket*{\frac1n, \frac12}} \sT_\tau(t) \leq C
\frac{(\sup_{\tau \in \bracket*{\frac1n, \frac12}} -a^{*}_{t, \tau})^2}{2} \leq O(t^2).
\]
This completes the proof.
\end{proof}

\section{Proof for constrained losses (Theorem~\ref{thm:cstnd-lower})}
\label{app:cstnd-lower}
\CstndLower*
\begin{proof}
\ignore{
Define the function $G$ as $G \colon (t, \tau, u) \mapsto \Phi(\tau) -
\frac{1 - t}{2} \Phi(\tau + u) - \frac{1 + t}{2}\Phi(\tau - u)$. Since
$\Phi$ is convex and differentiable, for any $(t, \tau) \in [0, 1]
\times \Rset_{+}$, $u \mapsto G(t, \tau, u)$ is concave and
differentiable. For any $(t, \tau) \in [0, 1] \times \Rset_{+}$,
differentiating $G$ with respect to $u$, we have
\begin{equation*}
\frac{\partial G}{\partial u}(t, \tau, u) = - \frac{1 - t}{2} \Phi'(\tau + u) + \frac{1 + t}{2} \Phi'(\tau - u).
\end{equation*}
Since $\Phi$ is convex, $\Phi'$ is non-decreasing, thus, for any $(t,
\tau) \in [0, 1] \times \Rset_{+}$, $u \mapsto \frac{\partial
  G}{\partial u}(t, \tau, u)$ is non-increasing. Fix $a > 0$. Plugging
$u = -a$ and $u = a$ into the expression gives
\begin{align*}
\frac{\partial G}{\partial u}(t, \tau, -a) &= - \frac{1 - t}{2} \Phi'(\tau - a) + \frac{1 + t}{2} \Phi'(\tau + a)\\
\frac{\partial G}{\partial u}(t, \tau, a) &= - \frac{1 - t}{2} \Phi'(\tau + a) + \frac{1 + t}{2} \Phi'(\tau - a).
\end{align*}
Thus, since $\Phi'$ is strictly increasing over $\Rset_{+}$ ($\Phi''(t) > 0$ for any $t \geq 0$) and $\tau + a > \max \curl*{0, \tau - a}$, we have $\Phi'(\tau + a) > \max \curl*{0, \Phi'(\tau - a)}$. Therefore, for any $\tau \geq 0$,
\begin{align*}
\forall t \in [0, 1],\, \frac{\partial G}{\partial u}(t, \tau, -a) &\geq \max \curl*{t \Phi'(\tau + a), \frac{1 - t}{2} \paren*{\Phi'(\tau + a) - \Phi'(\tau -a)}} > 0\\
\frac{\partial G}{\partial u}(0, \tau, a) &= \frac{1}{2} \paren*{\Phi'(\tau - a) - \Phi'(\tau + a)} < 0.
\end{align*}
Since for any $\tau \geq 0$, $\frac{\partial G}{\partial u}(0, \tau,
a) < 0$ and $t \mapsto \frac{\partial G}{\partial u}(t, \tau, a)$ is
continuous, there exists $t_0 > 0$ such that for any $0 \leq t < t_0$,
$\frac{\partial G}{\partial u}(t, \tau, a) < 0$. Since for any $\tau
\geq 0$ and $0 \leq t < t_0$, $u \mapsto \frac{\partial G}{\partial
  u}(t, \tau, u)$ is non-increasing, there exists some $u_0 \in [-a,
  a]$ satisfying $\frac{\partial G}{\partial u}(t, \tau, u_0) =
0$. Since for any $\tau \geq 0$ and $0 \leq t < t_0$, $u \mapsto G(t,
\tau, u)$ is concave, $u \mapsto G(t, \tau, u)$ achieves the supremum
within $[-a, a]$. Therefore, there exist positive constants $t_0 > 0$
and $a > 0$, such that for any $\tau \geq 0$ and $0 \leq t < t_0$,
\begin{equation*}
\sup_{u \in \Rset} \curl*{ \Phi(\tau) - \frac{1 - t}{2} \Phi(\tau + u) - \frac{1 + t}{2}\Phi(\tau - u) } = \sup_{u \in \Rset} G(t, \tau, u) = \sup_{|u| \leq a} G(t, \tau, u).
\end{equation*}
Consider the function $t \in [0, t_0) \mapsto \inf_{\tau \in [0, A]}
  \sup_{|u| \leq a} G(t, \tau, u)$. For any $ t \in [0, t_0)$, $(\tau,
    u) \mapsto G(t, \tau, u)$ is a continuous function. Since the
    supremum over a fixed compact set of a jointly continuous function
    is continuous, for any $t \in [0, t_0)$, $\tau \mapsto \sup_{|u|
        \leq a} G(t, \tau, u)$ is well defined and is a continuous
      function. Since $\tau \mapsto \sup_{|u| \leq a} G(t, \tau, u)$
      is continuous over $[0, +\infty)$, its infimum is reached over
        $[0, A]$, at some $\tau^*_t \in [0, A]$.
Thus, the function $\sU\colon t \in [0, t_0) \mapsto \inf_{\tau \in [0, A]}
  \sup_{u \in \Rset} G(t, \tau, u)$ can be equivalently expressed
  as follows.
}
For any $\tau \in [0, A]$, define the function $\sT_\tau$ by
\begin{align*}
  \forall t \in [0, 1], \quad  \sT_\tau(t)
  & = \sup_{u \in \Rset} \curl*{\Phi(\tau) - \frac{1 - t}{2} \Phi(\tau + u) - \frac{1 + t}{2}\Phi(\tau - u)}\\
  & = f_{t, \tau}(0) - \inf_{u \in \Rset} f_{t, \tau}(u),
\end{align*}
where
\[
f_{t, \tau}(u)
= \frac{1 - t}{2} \Phi_{\tau}(u) + \frac{1 + t}{2} \Phi_{\tau}(-u)
\quad \text{and} \quad
\Phi_{\tau}(u) = \Phi(\tau + u).
\]
We aim to establish a lower bound for $\inf_{\tau \in [0, A]}
\sT_\tau(t)$.  For any fixed $\tau \in [0, A]$, this situation is
parallel to that of binary classification (Theorem~\ref{thm:binary-char}
and Theorem~\ref{thm:binary-lower}), since we also have $\Phi'_{\tau}(0) =
\Phi'(\tau) > 0$ and $\Phi''_{\tau}(0) = \Phi''(\tau) > 0$.  Let $a^*_{t, \tau}$ denotes the minimizer of $f_{t, \tau}$ over $\Rset$. By applying Theorem~\ref{thm:a_implicit} to the function $F\colon (t, u, \tau) \mapsto f'_{t, \tau}(u) = \frac{1 - t}{2} \Phi'_{\tau}(u) - \frac{1 + t}{2} \Phi'_{\tau}(-u)$ and the convexity of $f_{t, \tau}$ with respect to $u$, $a^*_{t, \tau}$ exists, is unique and is
continuously differentiable over $[0, t_0] \times [0, A]$, for some $t_0 > 0$. 

Next, we will
leverage the proof of Theorem~\ref{thm:binary-lower}. Adopting a similar
notation, while incorporating the $\tau$ subscript to distinguish
different functions $\Phi_\tau$ and $f_{t, \tau}$, we can write
\[
\forall t \in [0, t_0], \quad
\sT_\tau(t) = \int_0^{a^*_{t, \tau}} u \bracket*{\frac{1 - t}{2} \Phi''_{\tau}(u) + \frac{1 + t}{2} \Phi''_{\tau}(-u)} \, du.
\]
where $a^*_{t, \tau}$  verifies
\begin{equation}
\label{eq:DerivativeAtZero}
a_{0, \tau}^* = 0 \quad \text{and} \quad
\frac{\partial a_{t, \tau}^*}{\partial t}(0) = \frac{\Phi'_{\tau}(0)}{\Phi''_{\tau}(0)} = c_\tau > 0.
\end{equation} 
We first show the lower bound $\inf_{\tau \in [0, A]} a_{t,
  \tau}^* = \Omega(t)$.
Given the equalities \eqref{eq:DerivativeAtZero}, it follows that for
any $\tau$, the following holds: $\lim_{t \to 0} \paren*{a_{t, \tau}^*
  - c_\tau t} = 0$. For any $\tau \in [0, A]$, $t \mapsto
\paren*{a_{t, \tau}^* - c_\tau t}$ is a continuous function over $[0,
  t_0]$ since $a_{t, \tau}^*$ is a function of class $C^1$. Since the
infimum over a fixed compact set of a family of continuous functions is
continuous, $t \mapsto \inf_{\tau \in [0, A]}\curl*{a_{t, \tau}^* -
  c_\tau t}$ is continuous.  Thus, for any $\e > 0$, there exists $t_1
> 0$, $t_1 \leq t_0$, such that for any $t \in [0, t_1]$,
\[
\abs[\Big]{\inf_{\tau \in [0, A]} \curl*{a_{t, \tau}^* - c_\tau t}} \leq \e,
\]
which implies
\[
\forall \tau \in [0, A], \quad
a_{t, \tau}^*
\geq c_\tau t - \e
\geq c t - \e,
\]
where $c = \inf_{\tau \in [0, A]} c_\tau$. Since $\Phi'_{\tau}(0)$ and
$\Phi''_{\tau}(0)$ are positive and continuous functions of $\tau$,
this infimum is attained over the compact set $[0, A]$, leading to $c
> 0$. Since the lower bound holds uniformly over $\tau$, this shows
that for $t \in [0, t_1]$, we have $\inf_{\tau \in [0, A]} a_{t,
  \tau}^* = \Omega(t)$.

Now, since for any $\tau \in [0, A]$, $a_{t, \tau}^*$ is a function of
class $C^1$ and thus continuous, its supremum over a compact set,
$\sup_{\tau \in [0, A]} a_{t, \tau}^*$, is also continuous and is
bounded over $[0, t_1]$ by some $a > 0$. For $|u| \leq a$ and $\tau
\in [0, A]$, we have $A - a \leq \tau + u \leq A + a$ and $A - a \leq
\tau - u \leq A + a$. Since $\Phi''$ is positive and continuous, it
reaches its minimum $C > 0$ over the compact set $[A - a, A + a]$.
Thus, we can write
\begin{align*}
  \forall t \in [0, t_1], \forall \tau \in [0, A], \quad
  \sT_\tau(t)
  & = \int_0^{a^*_{t, \tau}} u \bracket*{\frac{1 - t}{2} \Phi''_{\tau}(u) + \frac{1 + t}{2} \Phi''_{\tau}(-u)} \, du\\
  & \geq \int_0^{a^*_{t, \tau}} u \bracket*{\frac{1 - t}{2} C + \frac{1 + t}{2} C} \, du\\
  & = \int_0^{a^*_{t, \tau}} C u  \, du
  = C \frac{(a^{*}_{t, \tau})^2}{2}.
\end{align*}
Thus, for $t \leq t_1$, we have
\[
\inf_{\tau \in [0, A]} \sT_\tau(t) \geq C
\frac{(\inf_{\tau \in [0, A]} a^{*}_{t, \tau})^2}{2} \geq \Omega(t^2).
\]
Similarly, we aim to establish an upper bound for $\inf_{\tau \in [0, A]}
\sT_\tau(t)$. We first show the upper bound $\sup_{\tau \in [0, A]} a_{t,
  \tau}^* = O(t)$.
Given the equalities \eqref{eq:DerivativeAtZero}, it follows that for
any $\tau$, the following holds: $\lim_{t \to 0} \paren*{a_{t, \tau}^*
  - c_\tau t} = 0$. For any $\tau \in [0, A]$, $t \mapsto
\paren*{a_{t, \tau}^* - c_\tau t}$ is a continuous function over $[0,
  t_0]$ since $a_{t, \tau}^*$ is a function of class $C^1$. Since the
supremum over a fixed compact set of a family of continuous functions is
continuous, $t \mapsto \sup_{\tau \in [0, A]}\curl*{a_{t, \tau}^* -
  c_\tau t}$ is continuous.  Thus, for any $\e > 0$, there exists $t_1
> 0$, $t_1 \leq t_0$, such that for any $t \in [0, t_1]$,
\[
\abs[\Big]{\sup_{\tau \in [0, A]} \curl*{a_{t, \tau}^* - c_\tau t}} \leq \e,
\]
which implies
\[
\forall \tau \in [0, A], \quad
a_{t, \tau}^*
\leq c_\tau t + \e
\leq c t + \e,
\]
where $c = \sup_{\tau \in [0, A]} c_\tau$. Since $\Phi'_{\tau}(0)$ and
$\Phi''_{\tau}(0)$ are positive and continuous functions of $\tau$,
this supremum is attained over the compact set $[0, A]$, leading to $c
> 0$. Since the upper bound holds uniformly over $\tau$, this shows
that for $t \in [0, t_1]$, we have $\sup_{\tau \in [0, A]} a_{t,
  \tau}^* = O(t)$.

Now, since for any $\tau \in [0, A]$, $a_{t, \tau}^*$ is a function of
class $C^1$ and thus continuous, its supremum over a compact set,
$\sup_{\tau \in [0, A]} a_{t, \tau}^*$, is also continuous and is
bounded over $[0, t_1]$ by some $a > 0$. For $|u| \leq a$ and $\tau
\in [0, A]$, we have $A - a \leq \tau + u \leq A + a$ and $A - a \leq
\tau - u \leq A + a$. Since $\Phi''$ is positive and continuous, it
reaches its maximum $C > 0$ over the compact set $[A - a, A + a]$.
Thus, we can write
\begin{align*}
  \forall t \in [0, t_1], \forall \tau \in [0, A], \quad
  \sT_\tau(t)
  & = \int_0^{a^*_{t, \tau}} u \bracket*{\frac{1 - t}{2} \Phi''_{\tau}(u) + \frac{1 + t}{2} \Phi''_{\tau}(-u)} \, du\\
  & \leq \int_0^{a^*_{t, \tau}} u \bracket*{\frac{1 - t}{2} C + \frac{1 + t}{2} C} \, du\\
  & = \int_0^{a^*_{t, \tau}} C u  \, du
  = C \frac{(a^{*}_{t, \tau})^2}{2}.
\end{align*}
Thus, for $t \leq t_1$, we have
\[
\inf_{\tau \in [0, A]} \sT_\tau(t) \leq C
\frac{(\sup_{\tau \in [0, A]} a^{*}_{t, \tau})^2}{2} \leq O(t^2).
\]
This completes the proof.
\end{proof}

\section{Analysis of the function of \texorpdfstring{$\tau$}{tau}}
\label{app:analysis}

Let $F$ be the function defined by
\[
\forall t \in \bracket*{0, \tfrac{1}{2}}, \tau \in \Rset, \quad
F(t, \tau)
= \sup_{u \in \Rset} \curl*{\Phi(\tau) - \frac{1 - t}{2} \Phi(\tau + u) - \frac{1 + t}{2} \Phi(\tau - u)},
\]
where $\Phi$ is a convex function in $C^2$ with $\Phi', \Phi'' >
0$. In light of the analysis of the previous sections, for any $(\tau,
t)$, there exists a unique function $a_{t, \tau}$ solution of the
maximization (supremum in $F$), a $C^1$ function over a neighborhood $U$
of $(\tau, 0)$ with $a_{0, \tau} = 0$, $a_{t, \tau} > 0$ for $t > 0$, and
$\frac{\partial a_{t, \tau}}{\partial t}(0, \tau)
= \frac{\Phi'(\tau)}{\Phi''(\tau)} = c_\tau$. Thus, we have $\lim_{t \to 0} \frac{a_{t, \tau}}{t c_\tau} = 1$. The optimality of
$a_{t, \tau}$ implies
\[
\frac{1 - t}{2} \Phi'(\tau + a_{t, \tau}) = \frac{1 + t}{2} \Phi'(\tau - a_{t, \tau}).
\]
Thus, the partial derivative of $F$ over the appropriate neighborhood $U$
is given by
\begin{align*}
  \frac{\partial F}{\partial \tau}(t, \tau)
  & = \Phi'(\tau) - \frac{1 - t}{2} \Phi'(\tau + a_{t, \tau}) \paren*{\frac{\partial a_{t, \tau}}{\partial \tau}(t, \tau) + 1} - \frac{1 + t}{2} \Phi'(\tau - a_{t, \tau}) \paren*{-\frac{\partial a_{t, \tau}}{\partial \tau}(t, \tau) + 1}\\
  & = \Phi'(\tau) - \frac{1 - t}{2} \Phi'(\tau + a_{t, \tau}) \paren*{\frac{\partial a_{t, \tau}}{\partial \tau}(t, \tau) + 1 - \frac{\partial a_{t, \tau}}{\partial \tau}(t, \tau) + 1} \\
  & = \Phi'(\tau) - (1 - t) \Phi'(\tau + a_{t, \tau}).
\end{align*}
Since $\Phi'$ is continuous, by the mean value theorem, there exists
$\xi \in (\tau, \tau + a_{t, \tau})$ such that $\Phi'(\tau + a_{t,
  \tau}) - \Phi'(\tau) = a_{t, \tau} \Phi''(\xi)$. Thus, we can write
\begin{align*}
  \frac{\partial F}{\partial \tau}(t, \tau)
  & = \Phi'(\tau) - (1 - t) \Phi'(\tau) - (1 - t)  a_{t, \tau} \Phi''(\xi)\\
  & = t \Phi'(\tau) - (1 - t)  a_{t, \tau} \Phi''(\xi)\\
  & = t \Phi'(\tau) \bracket*{1 - (1 - t) \frac{a_{t, \tau}}{t c_\tau} \frac{\Phi''(\xi)}{ \Phi''(\tau)}}.
\end{align*}
Note that if $\Phi''$ is locally non-increasing, then we have $\Phi''(\xi) \leq \Phi''(\tau)$ and for $t$ sufficiently small, since $\Phi'$ is increasing and $\frac{a_{t, \tau}}{t c_\tau} \sim 1$:
\begin{align}
  \frac{\partial F}{\partial \tau}(t, \tau)
  & \geq t \Phi'(\tau) \bracket*{1 - (1 - t) \frac{a_{t, \tau}}{t c_\tau} }
  \geq 0.
\end{align}
In that case, for any $A > 0$, we can find a neighborhood $\sO$ of $t$
around zero over which $\frac{\partial F}{\partial \tau}(t, \tau)$ is
defined for all $(t, \tau) \in \sO \times [0, A]$ and $\frac{\partial
  F}{\partial \tau}(t, \tau) \geq 0$. From this, we can conclude that
the infimum of $F$ over $\tau \in [0, A]$ is reached at zero for $t$ sufficiently
small ($t \in \sO$).

\section{Generalization bounds}
\label{app:generalization-bound}

Let $S = \paren*{(x_1, y_1), \ldots, (x_m, y_m)}$ be a sample drawn
from $\sD^m$. Denote by $\h h_S$ an empirical minimizer within $\sH$
for the surrogate loss $\ell$: $ \h h_S \in \argmin_{h\in \sH}
\frac{1}{m}\sum_{i = 1}^m \ell (h, x_i, y_i)$. Let
$\sH_{\ell}$ denote the hypothesis set $\curl*{(x, y) \mapsto \ell(h, x, y)
  \colon h \in \sH}$ and $\Rad_m^{\ell}(\sH)$ its Rademacher
complexity. We also write $B_{\ell}$ to denote an upper bound for
$\ell$. Then, given the following $\sH$-consistency bound:
\begin{equation}
\label{eq:H-consistency-bounds}
\forall h \in \sH, \quad
\sE_{\ell_{0-1}}(h) - \sE^*_{\ell_{0-1}}(\sH) + \sM_{\ell_{0-1}}(\sH)
\leq \Gamma\paren*{\sE_{\ell}(h)-\sE^*_{\ell}(\sH) + \sM_{\ell}(\sH)},
\end{equation}
for any
$\delta > 0$, with probability at least $1 - \delta$ over the draw of an
i.i.d.\ sample $S$ of size $m$, the following estimation
bound holds for $\h h_S$:
\begin{equation*}
\forall h \in \sH, \quad
\sE_{\ell_{0-1}}(h) - \sE^*_{\ell_{0-1}}(\sH) 
\leq \Gamma\paren[\bigg]{4
    \Rad_m^{\sfL}(\sH) + 2 B_{\sfL} \sqrt{\tfrac{\log
        \frac{2}{\delta}}{2m}} + \sM_{\ell}(\sH)} - \sM_{\ell_{0-1}}(\sH).
\end{equation*}
\begin{proof}
  By the standard Rademacher complexity bounds
  \citep{MohriRostamizadehTalwalkar2018}, for any $\delta > 0$, with
  probability at least $1 - \delta$, the following holds for all $h
  \in \sH$:
\[
\abs*{\sE_{\ell}(h) - \h\sE_{\ell,S}(h)}
\leq 2 \Rad_m^{\ell}(\sH) +
B_{\ell} \sqrt{\tfrac{\log (2/\delta)}{2m}}.
\]
For any $\e > 0$, by definition of the infimum, there exists $h^* \in
\sH$ such that $\sE_{\ell}(h^*) \leq
\sE_{\ell}^*(\sH) + \e$. By the definition of
$\h h_S$, we obtain
\begin{align*}
  \sE_{\ell}(\h h_S) - \sE_{\ell}^*(\sH)
  & = \sE_{\ell}(\h h_S) - \h\sE_{\ell,S}(\h h_S) + \h\sE_{\ell,S}(\h h_S) - \sE_{\ell}^*(\sH)\\
  & \leq \sE_{\ell}(\h h_S) - \h\sE_{\ell,S}(\h h_S) + \h\sE_{\ell,S}(h^*) - \sE_{\ell}^*(\sH)\\
  & \leq \sE_{\ell}(\h h_S) - \h\sE_{\ell,S}(\h h_S) + \h\sE_{\ell,S}(h^*) - \sE_{\ell}^*(h^*) + \e\\
  & \leq
  2 \bracket*{2 \Rad_m^{\ell}(\sH) +
B_{\ell} \sqrt{\tfrac{\log (2/\delta)}{2m}}} + \e.
\end{align*}
Since the inequality holds for all $\e > 0$, it implies the following:
\[
\sE_{\ell}(\h h_S) - \sE_{\ell}^*(\sH)
\leq 
4 \Rad_m^{\ell}(\sH) +
2 B_{\ell} \sqrt{\tfrac{\log (2/\delta)}{2m}}.
\]
Plugging in this inequality in the $\sH$-consistency bound
\eqref{eq:H-consistency-bounds} completes the proof.
\end{proof}
These bounds for surrogate loss minimizers, expressed in terms of
minimizability gaps, offer more detailed and informative insights
compared to existing bounds based solely on approximation errors. Our
analysis of growth rates suggests that for commonly used smooth loss
functions, $\Gamma$ varies near zero with a square-root dependency.
Furthermore, this dependency cannot be generally improved for
arbitrary distributions.

\section{Future work}
\label{app:future_work}

While we established a universal square-root growth rate for the widely used class of smooth surrogate losses in both binary and multi-class classification, our results hold for any distribution. Further research on cases with distributional assumptions would be an interesting direction.

\end{document}